\documentclass[times,11pt]{article}
\usepackage{fullpage}
\usepackage{natbib}
\usepackage{hyperref}
\hypersetup{colorlinks = true, citecolor = blue} 
\newcommand{\nop}[1]{}
\usepackage{pifont}
\usepackage{mathrsfs}
\usepackage{epsfig}
\usepackage{graphics}
\usepackage{epsf}
\usepackage{amsmath, amsthm, amssymb, multirow, paralist, bm}
\usepackage{fullpage,color}
\usepackage{url}
\usepackage{algorithm,algorithmic}
\bibliographystyle{plainnat}
\newtheorem{thm}{Theorem}

\newtheorem{lemma}{Lemma}
\newtheorem{cor}[thm]{Corollary}

\newtheorem{assumption}{Assumption}
\newtheorem{remark}{Remark}
\PassOptionsToPackage{numbers, compress}{natbib}

\def \R {\mathbb{R}}

\def \N {\mathcal{N}}
\def \A {\mathcal{A}}

\def \I {\mathcal{I}}

\newcommand{\nm}[1]{\left\lVert#1\right\rVert}

\newcommand{\paren}[1]{\left({#1}\right)}
\newcommand{\brackets}[1]{\left[{#1}\right]}
\newcommand{\braces}[1]{\left\{{#1}\right\}}

\DeclareMathOperator{\Tr}{Tr}


\usepackage{hyperref}
\usepackage{bbm}
\usepackage{adjustbox}
\usepackage{makecell}
\usepackage{float}
\usepackage[normalem]{ulem}
\restylefloat{table}
\usepackage[x11names,dvipsnames,table]{xcolor}
\usepackage{mathtools}
\usepackage{booktabs} 
\usepackage{tabularx}
\usepackage{caption}
\usepackage{subcaption}
\usepackage{wrapfig}
\newcolumntype{P}[1]{>{\centering\arraybackslash}p{#1}}
\begin{document}
\title{
A Convergence Theory Towards Practical \\Over-parameterized Deep Neural Networks}

\author{Asaf Noy$^*$, Yi Xu$^*$, Yonathan Aflalo$^*$, Lihi Zelnik-Manor$^*$, Rong Jin\thanks{Equal contribution.}\\ 
Machine Intelligence Technology, Alibaba Group\\
\{asaf.noy, yixu, jonathan.aflalo, lihi.zelnik, jinrong.jr\}@alibaba-inc.com
}
\date{}
\maketitle
\begin{abstract} \noindent
Deep neural networks' remarkable ability to correctly fit training data when optimized by gradient-based algorithms is yet to be fully understood.
Recent theoretical results explain the convergence for ReLU networks that are wider than those used in practice by orders of magnitude. 
In this work, we take a step towards closing the gap between theory and practice by significantly improving the known theoretical bounds on both the network width and the convergence time. 
We show that convergence to a global minimum is guaranteed for networks with widths quadratic in the sample size and linear in their depth at a time logarithmic in both. 
Our analysis and convergence bounds are derived via the construction of a surrogate network with fixed activation patterns that can be transformed at any time to an equivalent ReLU network of a reasonable size. 
This construction can be viewed as a novel technique to accelerate training, while its tight finite-width equivalence to Neural Tangent Kernel (NTK) suggests it can be utilized to study generalization as well.
\end{abstract}
\section{Introduction}
Deep neural networks have achieved remarkable success in machine learning applications of different fields such as computer vision~\citep{voulodimos2018deep}, speech recognition~\citep{hinton2012deep}, and natural language processing~\citep{devlin2018bert}. 
Much of this success is yet to be fully explained.
One of the existing gaps is the network's \textit{trainability}, its ability to perfectly fit training data when initialized randomly and trained by first-order methods.
Modern deep networks are commonly equipped with rectified linear unit (ReLU) activations~\citep{xu2015empirical}, forming highly non-convex and non-smooth optimization problems. Such optimization problems are known to be generally computationally infeasible~\citep{livni2014computational}, and NP-hard in some cases~\citep{blum1989training}. Nevertheless, in practice, trainability is often achieved in various tasks. Such empirical findings were summarized by
~\cite{zhang2016understanding}:
``Deep neural networks easily fit (random) labels".

Previous theoretical works focused on networks' \textit{expressivity}~\citep{bengio2011expressive,cohen2016expressive}, the existence of a solution that perfectly fits training data, therefore necessary for trainability. As it has been shown that even shallow nonlinear networks are universal approximators~\citep{hornik1989multilayer}, a better understanding of the power of depth was desired~\citep{eldan2016power,liang2016deep,telgarsky2015representation}.
One conclusion is that increased depth allows exponentially more efficient representations, thus per parameter, deep networks can approximate a richer class of functions than shallow ones~\citep{cohen2016expressive,liang2016deep}.
However, a recent work by~\cite{yun2019small} provided tighter expressivity bounds on ReLU networks, showing that datasets of $n$ training examples can be perfectly expressed by a $3$-layer ReLU network with $\Omega{\paren{\sqrt{n}}}$ parameters. This does not explain
the sizes of practical neural networks that are typically \textit{over-parameterized} and exceed the training data's size. 
It appears that expressivity can provide only a limited explanation for the overparameterization of modern neural networks, which keep growing deeper and wider in search of state-of-the-art results~\citep{zagoruyko2016wide,huang2019gpipe,ridnik2020tresnet}. 

Important insights regarding trainability emerged from the analysis of simplified variants of neural networks.
Deep \emph{linear} networks are of special interest since increased depth does not affect their expressiveness, only changes their optimization landscape. Therefore the effects of increased width and depth on the training process can be isolated and carefully studied in this setting. 
\cite{arora2018convergence} proved that trainability is attained at a linear rate under proper initialization as long as the network is wide enough. 
\cite{arora2018optimization} showed that training with gradient-descent could be accelerated by increasing the network's depth. Their empirical evaluations supported the existence of a similar outcome in deep nonlinear networks as well.
Another simplified variant with further insights is
overparameterized \emph{shallow} nonlinear networks, typically with a single hidden layer.
\cite{du2018power} proved that for a quadratic activation function all local minima are global minima.
\cite{safran2018spurious}
showed that ReLU networks suffer from spurious local minima, which can be drastically reduced by overparameterization.
\cite{oymak2020towards} achieved trainability where the proportion between the number of hidden units and training examples $n$ depends on the activation: for smooth ones it is $\Omega\paren{n^2}$ while for ReLU it is significantly larger, $\Omega\paren{n^4}$. Additional works examined the unique dynamics of ReLU activation during the training and their relation to gradient-descent optimization~\citep{li2018learning,arora2019fine}.

The analysis of \emph{deep} ReLU networks is more challenging, as additional problems emerge with increased depth. Exploding and vanishing gradient~\citep{hanin2018neural} and bias shifts~\citep{clevert2015fast} are shared with additional activations, while the dying ReLUs problem, causing them to remain inactive permanently, is unique~\citep{lu2019dying}.
While in practice those problems are solved by the introduction of additional tensor transformations~\citep{ioffe2015batch, he2016identity}, their theoretical understanding remains limited. 
Correspondingly, existing trainability guarantees for deep ReLU networks are considerably weaker compared with shallow networks.
\cite{du2019gradient} required a minimal network width which scales exponentially with its depth $L$ and polynomially  with the number of examples $n$. More recent works~\citep{zou2018stochastic,allen2019convergence} improved the dependencies to be high-order polynomials, $\Omega\paren{n^{26}L^{38}}$ and $\Omega\paren{n^{24}L^{12}}$ correspondingly. 
The best known result, by 
\cite{zou2019improved}, required a network width of $\Omega\paren{n^{8}L^{12}}$, 
which is still prohibitively large compared to practice. 
For instance, training their network with $1001$ layers~\citep{he2016identity} over the common ImageNet dataset~\citep{krizhevsky2017imagenet}
requires a network of $\Omega\paren{10^{172}}$ parameters, while the largest one reported contains only $O\paren{10^{11}}$~\citep{brown2020language}.

In this work, we take a step towards closing the gap between theory and practice. We develop a novel technique to analyze the trainability of overparameterized deep neural networks and provide convergence guarantees under size requirements that are significantly milder than previously known.  
The network width required by our analysis for trainability scales only \textit{quadratically} with $n$, enabling for the first time empirical evaluation of the theoretical bounds, and bridging previous empirical results related to overparameterization. 
In addition, the required width by our theory is \textit{linear} with $L$, paving the way for a better understanding of the behavior of deep neural networks of practical depths. 
Finally, the number of training iterations for reaching global minima by our theory is \textit{logarithmic} in $nL$, significantly smaller than previous theoretical results
and similar to leading practical applications (e.g., BiT~\citep{kolesnikov2019big}).
A full comparison with previous methods can be found in Table~\ref{table:theo_comparison}.

A key novelty of our analysis is the construction of a surrogate network, named \textit{Gated ReLU} or \textit{GReLU}, illustrated in Figure~\ref{fig:network_blocks}.
The activation patterns of GReLU, i.e. which entries output zero~\citep{hanin2019deep}, are determined at initialization and kept fixed during training. They are set by a random transformation of the input, independent of the network
initialization. 
Therefore, GReLU networks are immune to two main problems of training ReLU networks, dying ReLUs and bias shifts, while still enjoying the advantages of ReLU activation.
We first prove tighter bounds on the width and convergence time for training GReLU networks, then show that they can be transformed to equivalent ReLU networks with $100\%$ train accuracy.
We further investigate the proposed technique and derive a  finite-width Neural Tangent Kernel equivalence with a improved condition on the network width. Finally, we empirically validate the effectiveness of our technique with datasets of different domains and practical network sizes.   


Our main contribution can be summarized as follows
\begin{itemize}
    \item We show that a randomly initialized deep ReLU network of depth $L$ and width $m=\tilde{\Omega}{\paren{n^2 L} }$ trained over $n$ samples of dimension $d$, reaches $\varepsilon$-error global minimum for $\ell_2$ regression in a logarithmic number of iterations, $T=c\log\paren{\frac{n^3L}{d\epsilon}}$.
    For comparison, the previous state-of-the-art result by~\cite{zou2019improved} required $m=\tilde{O}\paren{n^{8}L^{12}}$,
    $T=O\paren{n^2L^2\log(\frac{1}{\varepsilon})}$. 
    
    \item To achieve that, we propose a novel technique of training a surrogate  network with fixed activation patterns that can be transformed to its equivalent ReLU network at any point. 
    It is immune to bias-shifts and dying-ReLU problems and converges faster without affecting generalization, as we empirically demonstrate.
    \item We derive an NTK equivalence for GReLU networks with width \textit{linear} in $n$, connecting the regimes of practical networks and kernel methods for the first time (improving $\tilde{\Omega}\paren{n^32^{O(L)}}$ by \cite{huang2020dynamics}). This connection allows utilizing NTK-driven theoretic discoveries for practical neural networks, most notably generalization theory.
    
\end{itemize}
The remainder of this paper is organized as follows. In section~\ref{sec:preliminaries} we state the problem setup and introduce our technique.
Our main theoretical result is covered in section~\ref{sec:main_theory}, followed by additional properties of the proposed technique in Section~\ref{sec:properties}.
In section~\ref{sec:proof_sketch} we provide a proof sketch of the main theory, and
in section~\ref{sec:experiments} we empirically demonstrate its properties and effectiveness. 
Section~\ref{sec:related} contains additional related work. Finally, section~\ref{sec:conclusion} concludes our work.
\begin{table}[ht]
\centering
{\renewcommand{\arraystretch}{2}
  \begin{tabular}{p{1.3cm} P{2.3cm} P{2.8cm} P{1.4cm} P{2cm} P{4.15cm}}
    \hline
     & 
    $\tilde{\Omega}\left(\#\textrm{Neurons}\right)$ & 
    $O_{\varepsilon}\left(\#\textrm{Iters}\right)$ & $O\left(\textrm{Prob}\right)$ & $\tilde{\Theta}\left(\textrm{GD Step}\right)$ & 
    Remarks \\
    \hline
    Du\citeyearpar{du2019gradient} &
    $\frac{n^6}{\lambda_0^4p^3 }    $&
    $\frac{1}{\eta \lambda_0} \log\frac{1}{\varepsilon}    $ &
    \small p & 
    $\frac{\lambda_0}{n^2} $ & \small$\lambda_0^{-1}=\textrm{poly}\left(e^{L},n\right)$, binary -class, smooth activation  \\
    Zou\citeyearpar{zou2018stochastic} & 
    $ n^{26}L^{38}       $ &
    $  n^8L^9    $ &
    -- & $\frac{1}{ n^{29} L^{47}}    $ & binary-classification \\
    Allen-Zhu\citeyearpar{allen2019convergence} & 
    $n^{24}L^{12}$ & 
    $   n^6L^2 \log(\frac{1}{\varepsilon})    $ &
    $  e^{-\log^2 m}    $ &
    $   \frac{1}{n^{28}\log^5m L^{14}}  $ & $\propto\textrm{Poly}(\max_i|y_i|)$  \\
    Zou\citeyearpar{zou2019improved} & 
    $ n^{8}L^{12}       $ &
    $  n^2L^2 \log(\frac{1}{\varepsilon})$ &
    $n^{-1}$ & $\frac{1}{n^{8}L^{14}} $ & -- \\
    \hline
    \textbf{Ours} & 
    \small$ \mathbf{n^2 L}$ &
    \small
    $\mathbf{\log\paren{\frac{n^3L}{d_x\epsilon}}}$
    &
    \small$ \mathbf{e^{-\sqrt{m}}} $ &
    $\frac{d_x}{n^4 L^3 d_y}$
    & 
    $L=\Omega(\log n)$  
    \\
    \hline
  \end{tabular}
  }
  \caption{Comparison of leading works on overparameterized deep nonlinear neural networks trained with gradient-descent.} \label{table:theo_comparison}
\end{table}

\section{Preliminaries} \label{sec:preliminaries}
In this section we introduce the main problem and the novel
setup which is used to solve it. 
\\
\textbf{Notations:} 
we denote the Euclidean norm of vector $v$ by $\nm{v}$ or $\nm{v}_2$, and the Kronecker product and Frobenius inner-product for matrices $M,N$ by $M\otimes N$, $\langle M,N \rangle$ respectively. We denote by $\lambda_{\min}(M)$, $\lambda_{\max}(M)$ matrix minimal and maximal eigenvalues.  
We use the shorthand $[k]$ to denote the set $\{1,\dots,k\}$ and $A_{[k]}=\{A_1,\dots,A_k\}$.
We denote the positive part of a vector by $[v]^+=\paren{\max(0,v_1),\dots,\max(0,v_d)}$ and
its indicator vector by $[v]_+=(\mathbbm{1}_{v_1>0},\dots,\mathbbm{1}_{v_d>0})$. Matrix column-wise vectorization is denoted by $\mbox{vec}\left(\right)$. For two sequences $\{a_n\},\{b_n\}$, we denote $a_n=O(b_n)$ if there exist a constant $C_o$ such that $a_n\leq C_o b_n$, and $a_n=\Omega(b_n)$ if a constant $C_{\Omega}$ satisfies $a_n\geq C_{\Omega} b_n$. In addition, we denote $\tilde{O}(\cdot),\tilde{\Omega}(\cdot)$ to hide logarithmic factors.
Finally, we denote the normal and chi-square distributions by $\N(\mu, \Sigma),\chi^2_{d}$ respectively. 
\subsection{Problem Setup} \label{sec:setup} 
Let $\mathcal{T}=\braces{(x_i, y_i = \Phi_i x_i)}_{i\in[n]}$, be training examples, with $x_i\in \R^{d_x}, y_i \in \R^{d_y}, \Phi_i \in \R^{d_y\times d_x}$. Note that a different linear mapping $\Phi_i$ is used for each training example as our goal is to fit a nonlinear objective $y_i$. We assume for convenience and without loss of generality that the input data is normalized, $\nm{x_i}=1$. The data is used for training a fully-connected neural network with $L$ hidden layers and $m$ neurons on each layer.

We propose a novel network structure named \textbf{GReLU}, as illustrated in Figure~\ref{fig:network_blocks}, 
which improves the optimization, leading to tighter trainability bounds.
The key idea of this construction is to decouple the activation patterns from the weight tuning. The role of the activation patterns is to make sure different input samples propagate through different paths, while the role of the weights is to fit the output with high accuracy. 
Therefore, GReLU is composed of two parts, one consists of the weights that are trained, while the other defines the data-dependent activations of the ReLU gates. It is independently initialized and then remains fixed.
In Section \ref{sec:relu_equivalence}, we show that a GReLU network has an equivalent ReLU network, and those can be switched at any point. 
All layers are initialized with independent Gaussian distributions for $k\!\in\![L]$:
\\
\[
\brackets{W_k}_{i,j}\sim\N(0, 2/m)~~,
~~\brackets{\Psi_k}_{i,j}\sim\N(0, 2/m),
~~\brackets{C}_{i,j}\sim\N(0, 2/d_x),
~~\brackets{B}_{i,j}\sim\N(0, 2/d_y)
\]
Then, only the layers $W_{[L]}$ (blue blocks) are trained, while $\paren{B,C,\Psi_{[L]}}$ (red blocks) remain fixed during training. 
The activations $\{D^i_k, i \in [n], k\in [L]\}$ are computed based on the random weights of $\Psi_{[L]},B,C$ as follows,
\[
z^i_0 = [Cx_i]^+~~,~~z^i_k = [\Psi_k z_{k-1}^i]^+~~,~~  D_k^i = \mbox{diag}([z_k^i]_+)~~~k=1,\ldots,L.
\]
This implies that the activations do not change during training. 
Note that while previous works also used fixed initial and final layers
e.g.~\citep{allen2019convergence,zou2019improved}, the introduction of  $\Psi_{[L]}$ as fixed activation layers is novel. 

We denote the $k$th layer at iteration $t$ by $W_{t,k}\in \R^{m\times m}$, and the concatenation of all layers by $\bar{W}_t = (C,W_{t,1}, \ldots, W_{t,L},B)$. Since the activations are fixed in time and change per example, the full network applied on example $i$ is the following matrix,
\begin{eqnarray} \label{notation:parameter}
    W_t^i \vcentcolon= BD_L^iW_{t,L}\ldots D_1^iW_{t,1}D_0^iC\in \R^{d_y\times d_x}
\end{eqnarray}
Following previous works~\citep{allen2019convergence,du2019gradient,oymak2020towards}, we focus here for the sake of brevity on the task of regression with the square loss,
\begin{align}\label{notation:loss}
    \ell(\bar{W}_t) = \frac{1}{2}\sum_{i=1}^n \nm{(W_t^i - \Phi_i)x_i}^2.
\end{align} 
The loss is minimized by gradient-descent with a constant learning rate $\eta$.
Our proofs can be extended to other tasks and loss functions such as classification with cross-entropy, similarly to~\citep{allen2019convergence,zou2018stochastic} and omitted for readability.

Finally, our analysis requires two further definitions. The intermediate transform for $1\leq k' \leq k \leq L$ is defined as:
\begin{eqnarray}
Z^{t,i}_{k,k'} \vcentcolon= D_k^iW_{t,k}\ldots W_{t,k'+1}D_{k'}^i
\end{eqnarray}
and, the maximal variation from initialization of the network's trained layers is:
\begin{eqnarray} \label{def:tau}
    \tau \vcentcolon= \max\limits_{1\leq t \leq T}\max\limits_{k \in [L]} \left\|W_{t,k} - W_{1,k}\right\|_2.
\end{eqnarray}

\begin{figure}[t]    
          \centering
        \includegraphics[width=\linewidth]{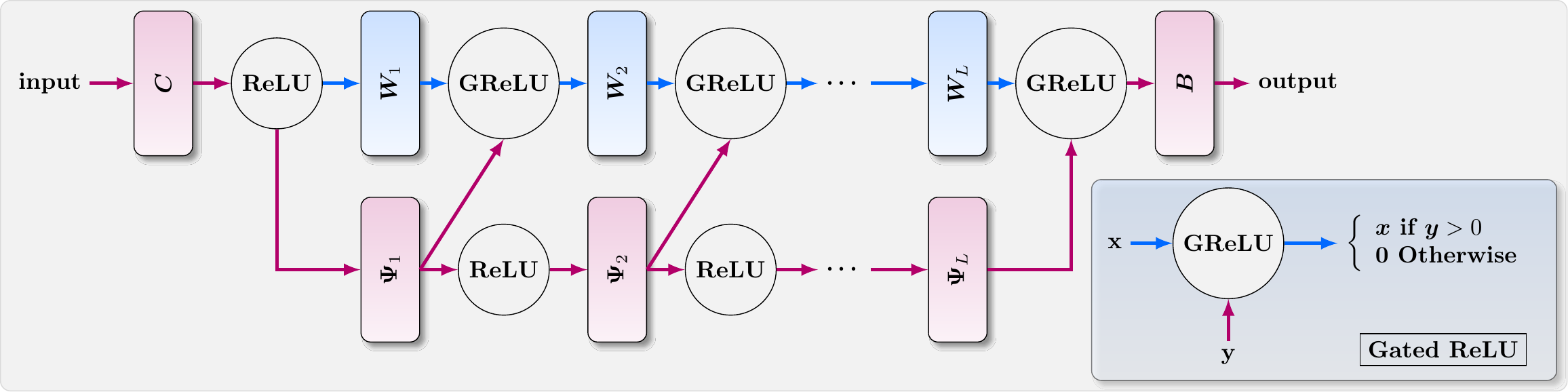}
        \caption{An illustration of the proposed network. Blue layers are trained while red layers set the activations and remain unchanged during training. 
        }
        \label{fig:network_blocks}
\end{figure}
\section{Main Theory} \label{sec:main_theory}
In this section, we present our main theoretical results. We start with making the following assumptions on the training data.
\begin{assumption}[non-degenerate input]
\label{assum:input}
Every two distinct examples $x_i,x_j$ satisfy  $\|x_i^\top x_j\|\leq \delta$. 
\end{assumption} 
\begin{assumption}[common regression labels] \label{assum:labels}
Labels are bounded: $\max_i|y_i|\leq \frac{m}{d_x} $.  
\end{assumption}
\noindent Both assumptions hold for typical datasets and are stated in order to simplify the derivation. 
Assumption~\ref{assum:labels} is more relaxed than corresponding assumptions in previous works
of labels bounded by a constant due to the overparameterization ($m\gg d_x$),
enabling the network handle large outputs better. 
We are ready to state the main result, guaranteeing a linear-rate convergence to a global minimum.
 
\begin{thm} \label{thm:linear_rate_convergence}
 Suppose a deep neural network of depth $L=\Omega(\log n)$ is trained by gradient-descent with learning rate 
 $\eta=\frac{d_x}{n^4 L^3 d_y}$
 under the scheme in Section~\ref{sec:setup}, and with a width $m$ that satisfies,
 \[
 m=\tilde{\Omega}{\paren{n^2 L d_y} }.
 \]
 Then, with a probability of at least $1 - \exp(-\Omega(\sqrt{m}))$ over the random initialization,
 the network reaches $\epsilon$-error within a number of iterations, 
 \[
 T=O\paren{\log\paren{\frac{n^3L}{d_x\epsilon}}}.
 \]
\end{thm}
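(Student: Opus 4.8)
The plan is to follow the standard template for over\-parameterized convergence proofs, specialized to GReLU so as to exploit that the activation diagonals $D_k^i$ are frozen and, crucially, generated by the \emph{independent} random network $(\Psi_{[L]},C)$ rather than by the trained weights. Stack the residuals $r_i = (W_t^i-\Phi_i)x_i\in\R^{d_y}$ into $u_t\in\R^{nd_y}$, so $\ell(\bar W_t)=\tfrac12\nm{u_t}^2$. Since each $W_t^i$ is multilinear of degree $L$ in $(W_{t,1},\dots,W_{t,L})$ with fixed diagonal projectors, the gradient\-descent update reads $u_{t+1}=u_t-\eta G_t u_t + e_t$, where $G_t\in\R^{nd_y\times nd_y}$ is the tangent\-kernel (Jacobian\-Gram) matrix with $(i,j)$ block $\sum_{k=1}^{L}\big(\partial (W_t^i x_i)/\partial W_{t,k}\big)\big(\partial (W_t^j x_j)/\partial W_{t,k}\big)^\top$, and $\nm{e_t}=O(\eta^2\,\mathrm{poly}(L)\,\nm{u_t})$ collects the higher\-order terms coming from the degree\-$L$ dependence. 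If I can establish (i) $\lambda_{\min}(G_1)\ge\lambda_0>0$ at initialization, (ii) $\lambda_{\min}(G_t)\ge\lambda_0/2$ whenever every trained layer obeys $\nm{W_{t,k}-W_{1,k}}_2\le\omega$ for a suitable radius $\omega$, and (iii) the iterates never leave that ball, then the prescribed $\eta=d_x/(n^4L^3d_y)$ makes $\eta\lambda_0/2$ dominate the $e_t$ term and yields $\nm{u_{t+1}}^2\le(1-\eta\lambda_0/2)\nm{u_t}^2$; linear convergence then gives $T=O\!\paren{\tfrac{1}{\eta\lambda_0}\log\tfrac{\nm{u_1}^2}{\epsilon}}$, and plugging in $\eta$ and the value of $\lambda_0$ obtained below produces the claimed $T=O\!\paren{\log\paren{\tfrac{n^3L}{d_x\epsilon}}}$.

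First I would do the initialization norm bookkeeping. Because the $D_k^i$ are independent of $W_{1,[L]}$ and $B$, each forward signal $\nm{D_k^iW_{1,k}\cdots D_1^iW_{1,1}D_0^iCx_i}$ and each backward signal $\nm{BD_L^iW_{1,L}\cdots D_{k+1}^iW_{1,k+1}}$ concentrates at $\Theta(1)$ uniformly over $i\in[n],k\in[L]$ with probability $1-e^{-\Omega(\sqrt m)}$: this is the He\-initialization computation, each $D_k^i$ killing about half the coordinates while the $2/m$ variance compensates, with a layer\-by\-layer concentration argument over depth $L$ (this is where the $L$ factor in $m$ and the requirement $L=\Omega(\log n)$ enter, to prevent multiplicative drift of the product norms over $L$ stages). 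The same estimates bound $\lambda_{\max}(G_1)=O(L)$ and $\nm{u_1}^2=O(n\,\mathrm{poly}(\cdot))$, which feeds the $\log$ argument.

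The crux is the lower bound $\lambda_{\min}(G_1)\ge\lambda_0$. Here the inter\-example separation cannot come from the trained weights (they are shared), so it must be extracted from the frozen activation network: since $(D_0^i,\dots,D_L^i)$ is a random feature embedding of $x_i$ and distinct inputs satisfy $\nm{x_i^\top x_j}\le\delta$ by Assumption~\ref{assum:input}, the overlap between the activation patterns of $i$ and $j$ is bounded away from one. I would isolate the contribution of the last trained layer $W_{t,L}$ to $G_t$ — it has the shape (input/pattern\-correlation terms) $\otimes\, BB^\top$ — and lower bound its smallest eigenvalue by a strict\-diagonal\-dominance / Gershgorin estimate: the diagonal blocks are $\Theta(d_y)$ by fresh randomness of $B$ and $W_{1,L}$, and the off\-diagonal blocks are small by the pattern separation. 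A union bound over the $n^2$ pairs and $d_y$ output coordinates, combined with the depth\-$L$ estimates from the previous step, is exactly what forces $m=\tilde\Omega(n^2Ld_y)$.

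Finally I would run the perturbation\-plus\-induction argument. Multilinearity gives a telescoping bound $\nm{W_t^i-W_1^i}_2=O(L\omega)$ and likewise for each partial derivative, provided $\omega\lesssim 1/L$, hence $\nm{G_t-G_1}\le\lambda_0/2$ for $\omega$ a small enough multiple of $\lambda_0/(L\cdot\mathrm{poly})$; this gives (ii). For (iii), assuming $\lambda_{\min}(G_s)\ge\lambda_0/2$ for $s\le t$ produces geometric decay of $\nm{u_s}$, so the total path length $\sum_s\eta\nm{\nabla\ell(\bar W_s)}\le\sum_s\eta\cdot O(\sqrt L\,\nm{u_s})=O\!\paren{\tfrac{\sqrt L}{\lambda_0}\nm{u_1}}$, and with the stated $\eta$ and $m=\tilde\Omega(n^2Ld_y)$ this is below $\omega$, closing the induction and, at the same time, bounding $\tau$ from \eqref{def:tau}. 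The main obstacle is the pair of estimates in the second and third paragraphs: getting $\lambda_{\min}(G_1)$ to degrade only polynomially in $n$ while the product of $L$ frozen\-ReLU layers is controlled to $\Theta(1)$ \emph{uniformly in depth}, so that no $2^{O(L)}$ blow\-up creeps in; once those two bounds are in hand, the gradient identity, the multilinear perturbation estimates, and the descent induction are routine.
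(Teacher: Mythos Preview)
Your overall template matches the paper's: both reduce to a tangent\-kernel Gram matrix $G_t$, establish diagonal dominance from the frozen activation patterns, control perturbations multilinearly, and close an induction on the layer\-wise drift $\tau$. The paper's Lemma~\ref{lemm:descent} is essentially your $u_{t+1}=u_t-\eta G_tu_t+e_t$ unpacked, and its condition $\beta^2\gamma n\le\alpha^2/2$ in~(\ref{eqn:gamma-upper-bound}) is exactly the Gershgorin\-type dominance you aim for.

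There is, however, a genuine gap in where you locate the smallness of the off\-diagonal blocks and the role of $L=\Omega(\log n)$. You attribute the depth hypothesis to ``preventing multiplicative drift of the product norms over $L$ stages''; this is not where it enters. The forward/backward norms concentrate at $\Theta(1)$ uniformly in $L$ already (Lemma~\ref{lemma:z-ka-kb-2}). The depth requirement comes instead from the off\-diagonal bound itself: the paper shows (Lemma~\ref{lemma:1}) that two distinct inputs share at most $m/3$ active neurons per layer, and then proves by a layer\-by\-layer recursion (proof of Theorem~\ref{thm:bound_gamma}) that each shared mask multiplies the cross\-correlation $\langle G_{1,k}^jx_j,G_{1,k}^ix_i\rangle$ by at most $\tfrac{2(1+\theta)}{3}\approx\tfrac56$. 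After $L$ layers this gives
\[
\gamma \;=\; O\!\left(L^{3}\tau^{2} + \delta\Bigl(\tfrac{5}{6}\Bigr)^{L} + m^{-1/2}\right),
\]
and diagonal dominance $\gamma n=O(1)$ then forces $(\tfrac56)^{L}=O(1/n)$, which is the \emph{actual} source of $L=\Omega(\log n)$.

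Without this geometric\-in\-depth contraction, neither your ``isolate the last trained layer'' idea nor a one\-shot Gershgorin estimate can make the off\-diagonals small enough to beat $n-1$ competing terms at width $m=\tilde\Omega(n^2Ld_y)$: a single layer of pattern overlap only buys a constant factor ($m/3$ versus $m/2$ active neurons), nowhere near the required $O(1/n)$. So the missing ingredient in your sketch is precisely this recursive contraction lemma for the cross\-correlations through the frozen masks; once you add it, the rest of your outline is routine and coincides with the paper.
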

\noindent
The proof appears in Section~\ref{proof:thm_linear_rate}. 
Theorem~\ref{thm:linear_rate_convergence} improves previous known results~\citep{zou2018stochastic,allen2019convergence,du2018gradient,zou2019improved} by orders of magnitude, as can be seen in Table~\ref{table:theo_comparison}. 
Specifically, it takes a step towards closing the gap between existing theory and practice, with required widths that can be used in practice for the first time.
The dependence of $m$ on the depth $L$ is linear, offering a significant improvement over the previous tightest bound of $O(L^{12})$~\citep{zou2019improved}. The bound on the number of iterations is logarithmic in the number of train examples, similarly to current common practice. For example, BiT~\citep[Sec.~3.3]{kolesnikov2019big} obtained SotA results on CIFAR-10/100 and ImageNet with a similar adjustment of the number of training epochs to the dataset size.
\\
An interesting question is whether the bounds we prove on $m$ and $T$ are \textit{tight}. The empirical evaluation presented in Figure~\ref{fig:train_loss_cifar} suggests they might be. We show that on a synthetic problem, fully-connected ReLU and GReLU networks with widths $m=\paren{n^2L}^p, p<1$, which is below our bound, do not converge to zero loss, while those with $p=1$ do. 
Next, we make two important remarks that further compare the above theoretical results with previous work.
\begin{remark}
The majority of works on the trainability of overparameterized networks focus on simplified shallow $1$-hidden layer networks (e.g.~\citep{zou2019improved,du2018gradient,wu2019global,oymak2020towards,song2019quadratic,li2018learning}). Our focus is on the more challenging practical deep networks, and Theorem~\ref{thm:linear_rate_convergence} requires a minimal depth $L=\Omega(\log{n})$. Our analysis, however, can be extended to $1$-hidden layer networks with the same dependencies on the number of train samples, i.e., $m=\tilde{\Omega}(n^2), ~T=O\paren{\log n}$. 
According to our knowledge, these dependencies improve all previous results on shallow networks.
\end{remark}

\begin{remark}
For most practical regression tasks $d_y$ is small, therefore often assumed to be of $O(1)$ and ignored in previous works. We make no such assumption throughout this work and specifically in Theorem~\ref{thm:linear_rate_convergence}, but omit it from Table~\ref{table:theo_comparison} for a clear comparison. 
\end{remark}
\noindent
The proposed GReLU construction is used here to prove bounds on optimization. However, we believe it could be used as a technical tool for better understanding the generalization of deep neural networks. Previous works offered convergence guarantees on networks with practically infinite width.
Empirical results show a poor generalization ability for such networks~\citep{arora2019exact,li2019enhanced} when compared to mildly overparameterized ones~\citep{srivastava2015training,noy2020asap,nayman2019xnas} over various datasets. Tighter bounds on the network size are possibly key to \textit{capture the training dynamics} of deep neural networks, and to further improve those from a theoretic perspective. We leave this to future work.
\section{Properties} \label{sec:properties}
In this section, we discuss the properties of the suggested technique. It complements the former section by providing intuition on \textit{why} we are able to achieve the rates in Theorem~\ref{thm:linear_rate_convergence}, which are more similar to sizes used in practice.

First, it is important to note that modern state-of-the-art models are not based on fully-connected networks but rather on improved architectures.
A classic example is the improved optimization achieved by residual connections~\citep{he2016deep}.
Neural networks equipped with residual connections typically retain an improved empirical performance for a relatively small degradation in runtime~\citep{monti2018avoiding}. However, Previous theoretical works on trainability of residual networks
do not share those improvements~\citep{allen2019convergence,du2018gradient}. 
Few works have shown that a residual network can be transformed to its equivalent network with no residual connections (e.g. ~\citep{monti2018avoiding}).

Similarly to residual networks, GReLU networks offer appealing properties regarding the optimization of deep networks (Section~\ref{sec:improved_opt}), and can be mapped to their equivalent ReLU networks at any time (Section~\ref{sec:relu_equivalence}).
Unlike residual networks, GReLU networks are based on theoretical guarantees (Sections~\ref{sec:main_theory}, \ref{sec:NTK}), and actually accelerate the network runtime as discussed in Section~\ref{sec:implementation}.
\subsection{ReLU Network Equivalence} \label{sec:relu_equivalence}
ReLU networks are extensively studied theoretically due to their dominating
empirical performance regarding both optimization and generalization. While this work studies optimization, the following equivalence between GReLU and ReLU networks is important to infer on the generalization ability of the proposed technique. 
\begin{thm} \label{thm:equivalence2}
Let $\bar{W}_t = (W_{t,1}, \ldots, W_{t,L};C,B,\Psi_{[L]})$ be an overparameterized GReLU network of depth $L$ and width $m$, trained for $t$ steps.
Then, a unique equivalent ReLU network of the same sizes $\tilde{W}_t = (\tilde{W}_{t,1}, \ldots, \tilde{W}_{t,L};C,B)$ can be obtained, with identical intermediate and output values over the train set. 
\end{thm}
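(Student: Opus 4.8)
The plan is to construct the ReLU network $\tilde W_t$ explicitly by absorbing the fixed activation layers $\Psi_{[L]}$ and the diagonal gates $D_k^i$ into the trained weights $W_{t,k}$, and then to verify that the resulting ReLU network reproduces the GReLU network's activation pattern on every training example. The key observation is that in the GReLU network the activation pattern on example $i$ at layer $k$ is the diagonal $0/1$ matrix $D_k^i = \mathrm{diag}([z_k^i]_+)$, which depends only on the fixed maps $C,\Psi_{[L]}$ and on $x_i$, not on $W_{t,[L]}$ nor even on $t$. So the GReLU forward pass on example $i$ is the linear-in-$x_i$ map $W_t^i = B D_L^i W_{t,L}\cdots D_1^i W_{t,1} D_0^i C$ from \eqref{notation:parameter}. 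I want to exhibit weights $\tilde W_{t,k}$ such that a genuine ReLU network with first layer $C$, last layer $B$, and hidden layers $\tilde W_{t,k}$, when run on $x_i$, has exactly the same pre-activations (up to the sign pattern that ReLU needs) and hence the same hidden values $z_k^i$ and output.

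First I would define the candidate ReLU weights. The natural choice is $\tilde W_{t,k} := D_k^i W_{t,k}$ — but this depends on $i$, which is not allowed, so instead one keeps $\tilde W_{t,k} := W_{t,k}$ for the trained part and folds $\Psi$ into the \emph{definition of the input path}: concretely, the equivalent ReLU network has its hidden pre-activations driven through the composite linear map, and the point is that at the activation of layer $k$ the ReLU gate $[\,\cdot\,]^+$ acts on a vector whose sign pattern is exactly $[z_k^i]_+$ by construction of the $D_k^i$. So the verification step is: run the ReLU network on $x_i$, and show inductively over $k=0,1,\dots,L$ that its value at layer $k$ equals the GReLU value $v_k^i := D_k^i W_{t,k} D_{k-1}^i \cdots D_0^i C x_i$ (with $v_0^i = D_0^i C x_i = [C x_i]^+$). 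The base case is immediate since both networks' first layer is $C$ followed by ReLU, and $[C x_i]^+ = D_0^i C x_i$ by the definition of $D_0^i$. For the inductive step, the ReLU network computes $[\tilde W_{t,k}\, v_{k-1}^i]^+$; I must argue this equals $D_k^i W_{t,k} v_{k-1}^i$, i.e. that the sign of the $j$-th coordinate of $W_{t,k} v_{k-1}^i$ agrees with the $j$-th diagonal entry of $D_k^i$. This is the crux: it requires the $D_k^i$ to have been \emph{defined consistently} with how $W_{t,k}$ acts, which is precisely the role of the fixed $\Psi_{[L]}$ layers — the $z_k^i$ (hence $D_k^i$) are defined via $\Psi$ so that the gating is a property of the data, and the equivalence must show the trained $W_{t,k}$ path inherits it. I would handle this by checking that the GReLU construction forces $W_{t,k} v_{k-1}^i$ and $\Psi_k z_{k-1}^i$ to share the support structure, or alternatively by defining the equivalent ReLU network to literally route $x_i$ through a layer whose action reproduces $[z_k^i]_+$, e.g. by a block/identity trick on the weights.

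The remaining points are bookkeeping: identity of \emph{output} values follows since both networks end with the fixed $B$ applied to identical layer-$L$ values; identity of all \emph{intermediate} values is exactly the inductive claim above; and \emph{uniqueness} follows because a ReLU network with prescribed first layer $C$, last layer $B$, and prescribed hidden values on the (generic, by Assumption~\ref{assum:input}) training inputs pins down the hidden weights on the span of the propagated activations — any two such networks agree on all $x_i$ by construction, and one argues the weights themselves coincide by a dimension/full-rank argument on the collection of intermediate vectors. The main obstacle I anticipate is the inductive sign-consistency step: making rigorous that folding $\Psi$-defined gates $D_k^i$ onto the $W_{t,k}$-path yields a \emph{bona fide} ReLU activation (nonnegativity on the ``on'' coordinates and the pre-activation being $\le 0$ on the ``off'' coordinates) rather than just a matching linear map — this is where the specific GReLU initialization and the decoupling of activation from weights must be used, and where a naive ``absorb the diagonal'' argument would silently fail.
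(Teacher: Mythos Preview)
Your proposal has a genuine gap at exactly the point you flag as the ``crux''. Setting $\tilde W_{t,k}=W_{t,k}$ and hoping the ReLU sign pattern of $W_{t,k}v_{k-1}^i$ coincides with $D_k^i$ cannot work: the $D_k^i$ are determined by the \emph{independently sampled} matrices $\Psi_{[L]}$ via $z_k^i=[\Psi_k z_{k-1}^i]^+$, and there is no coupling whatsoever between the signs of $\Psi_k z_{k-1}^i$ and those of $W_{t,k}v_{k-1}^i$. Indeed the whole point of the GReLU construction is that these are decoupled; the gates may well pass \emph{negative} values of $W_{t,k}v_{k-1}^i$ through. So the inductive sign-consistency step is not merely ``the main obstacle'' --- it is false in general, and no amount of ``absorbing diagonals'' or ``block/identity tricks'' (which would change the network size, contrary to the statement) rescues it.

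The paper's proof takes a different route that you are missing entirely: it does \emph{not} try to reuse $W_{t,k}$, but instead constructs brand-new weights $\tilde W_k$ by solving a linear system. Stacking the $n$ training examples' layer-$(k{-}1)$ feature maps into an $m\times n$ matrix $A$ and the desired layer-$k$ GReLU outputs into an $m\times n$ matrix $B$, one needs $\tilde W_k A = B$; this is solvable via the Moore--Penrose pseudo-inverse $\tilde W_k = B A^{\dagger}$ precisely because the network is overparameterized ($m\ge n$), so $A$ can have full column rank. Uniqueness is then the uniqueness of the minimum-norm solution returned by the pseudo-inverse. The key idea you are missing is that overparameterization is what makes the equivalence possible: there are only $n$ constraints per layer on an $m\times m$ matrix, leaving ample freedom to match the GReLU outputs exactly without any sign-consistency miracle.
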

\noindent The proof appears in Section~\ref{proof:relu_equivalence} and is done by construction of the equivalent ReLU network. The equivalent networks have an identical training footprint, in the sense that every intermediate feature map of any train example is identical for both, leading to identical predictions and training loss.
This offers a dual view on GReLUs: a technique to improve and accelerate the training of ReLU networks. A good analogy is Batch Normalization, used to smooth the optimization landscape~\citep{santurkar2018does}, and can be absorbed at any time into the previous layers~\citep[(20)-(21)]{krishnamoorthi2018quantizing}
Notice that a ReLU network can be also trained as a GReLU network at any time by fixing its activation pattern, and using the scheme in section~\ref{sec:setup} with $\Psi_k\leftarrow W_k, k\in[L]$.

We note that while our proof uses the Moore–Penrose pseudo inverse for clarity, modern least-squares solvers~\citep{lacotte2019faster} provide similar results more efficiently, so the overall complexity of our method stays the same. In other words, training according to Theorem~\ref{thm:linear_rate_convergence} and then switching to the equivalent ReLU network results in a trained ReLU network with an $\varepsilon$-error in $T=O\paren{\log\paren{\frac{n^3L}{d_x\epsilon}}\cdot m^2L}$ operations, as $m^2L$ stands for the number of trained network parameters.
\subsection{Neural Tangent Kernel Equivalence} \label{sec:NTK}
The introduction of the Neural Tangent Kernel (NTK)~\citep{jacot2018neural} offered a new perspective on deep learning, with insights on different aspects of neural networks training.  
Followup works analyzed the behavior of neural networks in the \textit{infinite-width regime}, where neural networks roughly become equivalent to linear models with the NTK. 
An important result by \cite{allen2019convergence}, extended this equivalence to \textit{finite-width} networks for $m=\Omega\paren{Poly(n,L)}$. \cite{huang2020dynamics} reduced the sample-size dependency to  $m=\tilde{\Omega}\paren{n^32^{O(L)}}$, still prohibitively large for practical use.

In this section, we derive a corresponding finite-width NTK equivalence, with a width that is \textit{linear} in the sample-size. Closing this gap between theory and practice offers advantages for both:
\begin{enumerate}
    \item Practice: networks can be actually trained in the NTK regime, optimizing weights and architectures to
    retain properties such as improved generalization \citep{geiger2020scaling,xiao2019disentangling} and overall performance~\citep[Sec 3.1]{lee2020finite}.  
    \item Theory: NTK-driven theoretic insights can be utilized effectively for neural networks of practical sizes. Those insights relate to loss landscapes, implicit regularization, training dynamics and generalization theory~\citep{belkin2018understand,cao2019generalization,belkin2018overfitting,li2019enhanced,huang2020deep,liang2020just}. 
\end{enumerate}



\noindent The analysis of Theorem~\ref{thm:linear_rate_convergence} bounds the maximal variation of the network's layers with a probability of at least $1 - \exp(-\Omega(\sqrt{m}))$ as follows,
\begin{eqnarray} \label{def:xi_ntk}
\tau = \max\limits_{1 \leq t \leq T}\max\limits_{k\in [L]} \nm{W_{t,k}-W_{1,k}}_2 
\leq \tilde{O}\left(\frac{d_yn^{1/2}m^{-1/2}}{L}\right)
 \vcentcolon= \frac{\xi}{L}
\end{eqnarray}
for some small number $\xi(m;n,d_y)$. Let $\bar{W} = (\bar{W}_{1}, \ldots, \bar{W}_{L})$ be any solution which satisfies,
\begin{eqnarray*}
\nm{\bar{W}_{k}-W_{1,k}}_2\leq\xi L^{-1}~,~\forall {k\in [L]}
\end{eqnarray*}
Denote the difference $W'=\bar{W}-W_1$, and notice that $\nm{W'}_2\leq \xi$.\\
Define the NTK and the NTK objective~\citep{jacot2018neural,allen2019convergence} of the initialized network for the $p$-th output, $p\in[d_y]$,  respectively,
\begin{eqnarray}
K_p^{\textrm{NTK}}(x_i, x_j) \vcentcolon= \langle \nabla y_p(x_i,W_1),\nabla y_p(x_j,W_1)\rangle \quad,\quad 
y_p^{\textrm{NTK}}(x_i,W')=\langle \nabla y_p(x_i,W_1),W' \rangle
\end{eqnarray}
\cite{jacot2018neural}  proved that for an \textit{infinite} $m$, the dynamic NTK and NTK are in fact equivalent as $\xi \rightarrow 0$. Allen-Zhu  et al.~\citep{allen2019convergence} showed a high-order polynomial bound on this equivalence. We further improve their results by stating a tighter bound on $m$ for our setting.
\begin{thm} \label{thm:NTK}
For every $x_i\in \R^{d_x}$, $1\leq p \leq d_y$, $W': \nm{W'}_2\leq \xi$ and $m=\Omega(n d_x d_y^2 L^3)$, with probability of at least $1-\exp(-\Omega(m))$ over the initialization of $\braces{B,C,\Psi,W_1}$ we have,
\begin{eqnarray}
\nm{\nabla y_p(x_i,W_1+W')-\nabla y_p^{\textrm{NTK}}(x_i,W')}_F \leq \mathcal{R} \nm{\nabla y_p^{\textrm{NTK}}(x_i,W')}_F
\end{eqnarray}
with the ratio
\[\mathcal{R}= \tilde{O}\left(\frac{n^{1/2}d_x(L d_y)^2 }{m^{5/2}}\right).
\]
\end{thm}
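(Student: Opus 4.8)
\textbf{Proof proposal for Theorem~\ref{thm:NTK}.}

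The plan is to compare the gradient of the GReLU network output at a perturbed weight configuration $W_1+W'$ with the linearized (NTK) gradient $\nabla y_p^{\textrm{NTK}}(x_i,W')$, which is simply the gradient at initialization $W_1$ contracted against the direction $W'$. Since the activation patterns $D_k^i$ are \emph{frozen} (they depend only on $\Psi_{[L]},C$, not on the trained weights $W_{[L]}$), the map $W_{[L]}\mapsto y_p(x_i,W_{[L]})$ is a genuine \emph{multilinear} function of the trained layers. This is the structural fact that makes the argument far cleaner than in the ReLU case: there is no activation-pattern mismatch term to control, only the higher-order multilinear terms. Concretely, I would write $\nabla y_p(x_i,W_1+W')$ layer by layer using the intermediate-transform notation $Z^{t,i}_{k,k'}$: the derivative with respect to layer $W_k$ is a product of a ``head'' factor $B D_L^i W_L \cdots D_k^i$ and a ``tail'' factor $D_{k-1}^i W_{k-1}\cdots D_0^i C x_i$, with all $W$'s evaluated at $W_1+W'$. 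Expanding each such product around $W_1$ and collecting terms, the $\mathcal{O}(1)$ term is exactly $\nabla y_p^{\textrm{NTK}}$, and the remainder is a sum of terms each containing at least one factor of $W'$.

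The key steps, in order, are: (i) establish spectral-norm control at initialization — with probability $1-\exp(-\Omega(m))$, each $\|W_{1,k}\|_2 = O(1)$ (Gaussian matrix with variance $2/m$), $\|C\|_2,\|B\|_2 = O(1)$ under the normalizations given, and crucially the \emph{composed} maps $\|B D_L^i W_{1,L}\cdots D_{k'}^i\|_2$ and the partial products stay $O(\sqrt{\text{poly}})$-bounded; here the diagonal projectors $D_k^i$ only help (they are contractions), and one uses a net over the unit sphere plus the fixed-activation structure so that no union bound over exponentially many patterns is needed; (ii) lower-bound the NTK-gradient norm $\|\nabla y_p^{\textrm{NTK}}(x_i,W')\|_F$ from below — this is the denominator of the ratio $\mathcal{R}$, and it is where the dependence on $\xi$ (hence on $\|W'\|_2\le\xi$) and the $m^{5/2}$ in the final bound will come from; one shows that among the $L$ layerwise blocks of the gradient, at least one has Frobenius norm bounded below by roughly $\xi/\text{poly}$, using anti-concentration / the Gaussian lower tail on $\|W'\cdot(\text{tail vector})\|$ or on the relevant projected coordinates; (iii) upper-bound the remainder $\|\nabla y_p(x_i,W_1+W') - \nabla y_p^{\textrm{NTK}}(x_i,W')\|_F$ by a telescoping/binomial expansion: each of the (at most $\sim 2^L$, but really $L$ distinct orders) correction terms is a product of bounded factors times at least one $\|W'\|_2\le\xi$ and at least one extra $\xi$ beyond the linear term, so the remainder is $O(\text{poly}(n,L,d_x,d_y)\cdot \xi^2 / m^{\text{something}})$; (iv) divide (iii) by (ii) and substitute $\xi = \tilde O(d_y n^{1/2} m^{-1/2})$ from \eqref{def:xi_ntk} together with $m = \Omega(n d_x d_y^2 L^3)$ to collapse everything to $\mathcal{R} = \tilde O\paren{n^{1/2} d_x (L d_y)^2 / m^{5/2}}$; (v) finally union-bound the $\exp(-\Omega(m))$-probability events over the $n$ examples and the $d_y$ outputs, which is absorbed since $n,d_y \ll e^{\Omega(m)}$.

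The main obstacle I anticipate is step (ii), the \emph{lower} bound on $\|\nabla y_p^{\textrm{NTK}}\|_F$. Upper bounds on products of random matrices and their perturbations are routine (step (iii)), but to turn an \emph{absolute} error bound on the gradient into a \emph{relative} one we need that the NTK gradient does not itself collapse — i.e. the forward/backward signals through the frozen-activation network are not anomalously small. This requires a careful anti-concentration argument that survives the conditioning on the activation patterns $D_k^i$ (which are themselves functions of the same randomness in $C,\Psi_{[L]}$, though \emph{not} of $W_1$, so at least $W_1$ is independent of the $D$'s — that independence is the lever). A secondary subtlety is making sure the spectral bounds in step (i) hold \emph{simultaneously} for all partial products $W_{1,L}\cdots W_{1,k'}$ across all $L$ choices of $k'$ and all $n$ examples; the $L^3$ in the width requirement $m=\Omega(n d_x d_y^2 L^3)$ is the price of propagating these bounds through depth $L$ (roughly, a $(1+\tilde O(1/\sqrt{m}))^L$-type factor must be kept $O(1)$, forcing $m \gtrsim L^2$, and an extra factor $L$ enters from the sum over layer-blocks in the gradient norm).
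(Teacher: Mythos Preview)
Your high-level decomposition (expand the multilinear gradient in $W'$, bound the remainder above, bound the NTK gradient below, take the ratio) matches the paper's. But steps (ii) and (iii) rest on a misreading of what $\nabla y_p^{\textrm{NTK}}(x_i,W')$ is, and this misreading propagates into the wrong order in $\xi$ on both sides.

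Recall the definition $y_p^{\textrm{NTK}}(x_i,W') = \langle \nabla y_p(x_i,W_1), W'\rangle$. Its gradient \emph{with respect to $W'$} is simply $\nabla y_p(x_i,W_1)$, the network gradient at initialization --- a quantity that does \emph{not} depend on $W'$ or $\xi$ at all. Consequently:
\begin{itemize}
\item In step (ii) there is no anti-concentration on $W'$ to perform. The lower bound on $\|\nabla_{W_k} y_p^{\textrm{NTK}}\|_F$ is a lower bound on $\|[F^i_{1,k+1}]^\top e_p\, x_i^\top [G^i_{1,k-1}]^\top\|_F$, and it follows directly from the \emph{minimum}-eigenvalue bounds already established for the sub-networks (Theorem~\ref{thm:bound_alpha_beta}): $\|\nabla_{W_k} y_p^{\textrm{NTK}}\|_F^2 \ge m\,\lambda_{\min}\!\big([G^i_{1,k-1}]^\top G^i_{1,k-1}\big)\,\lambda_{\min}\!\big(F^i_{1,k+1}[F^i_{1,k+1}]^\top\big) = m\alpha^2 = \Theta(m^3/(d_x d_y))$. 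No new probabilistic ingredient is needed; the ``main obstacle'' you anticipate does not exist.
\item In step (iii) the difference $\nabla y_p(x_i,W_1+W') - \nabla y_p(x_i,W_1)$ is \emph{first}-order in $\xi$, not second-order. The zeroth-order term in your multilinear expansion is the NTK gradient itself; the remainder $\Sigma$ consists of all terms with at least one $W'$-factor, and the paper bounds it (per layer $k$) by $\|B\|_F\|C\|_F\cdot O(L^2\xi) = O(mL^2\xi)$, using the spectral bounds on the $Z^{1,i}_{k_a,k_b}$ from Lemma~\ref{lemma:z-ka-kb-2}.
\end{itemize}
With the correct orders --- numerator $O(mL^2\xi)$ and denominator $\Omega(m^{3/2}/\sqrt{d_xd_y})$ --- the ratio comes out after substituting $\xi = \tilde O(d_y n^{1/2} m^{-1/2})$; your proposed $\xi^2$ in the numerator and $\xi$ in the denominator would give the wrong powers of $m$ and $n$. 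Fix the interpretation of $\nabla y_p^{\textrm{NTK}}$ and the rest of your outline goes through essentially as in the paper.
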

\begin{cor} \label{thm:NTK_corollary}
For every $x_i,x_j\in \R^{d_x}$, $1\leq p \leq d_y$, $W': \nm{W'}_2\leq \xi$ and $m=\Omega(n d_x d_y^2 L^3)$ ,  with probability of at least $1-\exp(-\Omega(m))$,
\begin{eqnarray} \label{eqn:NTK_cor_main}
|\langle \nabla y_p(x_i,W_1+W'),\nabla y_p(x_j,W_1+W') \rangle- K_p^{\textrm{NTK}}(x_i, x_j) | \leq 3\mathcal{R} \sqrt{K_p^{\textrm{NTK}}(x_i, x_i) K_p^{\textrm{NTK}}(x_j, x_j)}
\end{eqnarray}
\end{cor}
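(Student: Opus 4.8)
The plan is to derive Corollary~\ref{thm:NTK_corollary} directly from Theorem~\ref{thm:NTK} by a standard perturbation argument on inner products of nearby vectors. Write $u_i = \nabla y_p(x_i, W_1 + W')$, $v_i = \nabla y_p^{\textrm{NTK}}(x_i, W')$, and recall $K_p^{\textrm{NTK}}(x_i, x_j) = \langle v_i, v_j\rangle$ (this is exactly the definition, since $v_i$ is linear in $W'$ but evaluated via $\nabla y_p(x_i, W_1)$, which is what the NTK is built from — one should double-check that the corollary's $K^{\textrm{NTK}}$ is the gradient-at-initialization kernel, so that $\langle v_i, v_j\rangle$ with the NTK objective's gradient equals $K_p^{\textrm{NTK}}(x_i,x_j)$; if the intended reading is that $v_i = \nabla y_p(x_i, W_1)$ itself, the identity is immediate). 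By Theorem~\ref{thm:NTK}, on the stated high-probability event $\|u_i - v_i\|_F \le \mathcal{R}\|v_i\|_F$ simultaneously for all $i$ (union bound over the $n$ samples is absorbed into the $\Omega(m)$ exponent since there are only $n$ events and $\mathcal{R}$ already hides logarithmic factors).

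The key algebraic step is the telescoping identity
\[
\langle u_i, u_j\rangle - \langle v_i, v_j\rangle = \langle u_i - v_i, u_j\rangle + \langle v_i, u_j - v_j\rangle,
\]
which I would bound term by term with Cauchy–Schwarz:
\[
|\langle u_i, u_j\rangle - \langle v_i, v_j\rangle| \le \|u_i - v_i\|_F\,\|u_j\|_F + \|v_i\|_F\,\|u_j - v_j\|_F.
\]
Then $\|u_j\|_F \le \|v_j\|_F + \|u_j - v_j\|_F \le (1+\mathcal{R})\|v_j\|_F$, and $\|u_i - v_i\|_F \le \mathcal{R}\|v_i\|_F$, so the right-hand side is at most $\mathcal{R}(1+\mathcal{R})\|v_i\|_F\|v_j\|_F + \mathcal{R}\|v_i\|_F\|v_j\|_F = \mathcal{R}(2+\mathcal{R})\|v_i\|_F\|v_j\|_F$. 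Since $\mathcal{R} = \tilde{O}(n^{1/2}d_x (Ld_y)^2 m^{-5/2}) \le 1$ for $m$ in the stated regime, $\mathcal{R}(2+\mathcal{R}) \le 3\mathcal{R}$, giving exactly the claimed factor. Finally $\|v_i\|_F\|v_j\|_F = \sqrt{\langle v_i,v_i\rangle}\sqrt{\langle v_j,v_j\rangle} = \sqrt{K_p^{\textrm{NTK}}(x_i,x_i)K_p^{\textrm{NTK}}(x_j,x_j)}$, closing the bound.

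There is essentially no obstacle here: the corollary is a purely mechanical consequence of Theorem~\ref{thm:NTK}. The only points requiring a moment of care are (i) confirming that $\langle v_i, v_j\rangle$ really is $K_p^{\textrm{NTK}}(x_i,x_j)$ under the paper's definitions — i.e. that the ``NTK objective'' gradient coincides with the feature map defining the kernel — and (ii) making sure the high-probability event of Theorem~\ref{thm:NTK} can be taken to hold for all relevant pairs $(x_i, x_j)$ at once; since Theorem~\ref{thm:NTK} is already stated ``for every $x_i$'' on an event of probability $1 - \exp(-\Omega(m))$, and $\mathcal{R}$ is uniform in $i$, this is automatic (or costs only a trivial union bound over $\le n^2$ pairs, harmlessly absorbed into the exponent). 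I would also note that the constant $3$ is not sharp and any absolute constant $\ge 2$ would do; $3$ is chosen for a clean statement.
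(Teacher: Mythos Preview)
Your proposal is correct and essentially identical to the paper's own proof: the paper also uses the telescoping identity $\langle u_i,u_j\rangle-\langle v_i,v_j\rangle=\langle u_i-v_i,u_j\rangle+\langle v_i,u_j-v_j\rangle$, applies Cauchy--Schwarz and the triangle inequality $\|u_j\|_F\le(1+\mathcal{R})\|v_j\|_F$, and concludes with $2\mathcal{R}+\mathcal{R}^2\le 3\mathcal{R}$. Your caveat (i) is well taken and matches the paper's identification $\nabla y_p^{\textrm{NTK}}(x_i,\cdot)=\nabla y_p(x_i,W_1)$, which makes $\langle v_i,v_j\rangle=K_p^{\textrm{NTK}}(x_i,x_j)$ immediate.
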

\noindent Proofs can be found in Sections~(\ref{proof:thm_NTK}-\ref{proof:thm_NTK_cor}). Theorem~\ref{thm:NTK} guarantees that the difference between gradients is negligible compared to their norms, thus the
NTK is a good approximation for the dynamic one. For simplicity, we state the result per single output dimension $p$, while it holds for outputs of dimension $d_y$ as well.
\\
For comparison, the corresponding ratio by \cite{allen2019convergence} is significantly worse: $\frac{L^{3/2}}{\sqrt{\log m}}$.
More Specifically, equipped with the width required by Theorem~\ref{thm:linear_rate_convergence}, the guaranteed ratio by Theorem~\ref{thm:NTK} is negligible even for small problems $\mathcal{R}^{-1}=\tilde{O}\paren{n^{9/2}d_x^{-1}(d_y L)^{1/2}}$.
The more regular trajectory of GReLU gradients compared to ReLU gradients
 can be also seen in experiments with smaller network widths, as demonstrated in Figure~\ref{fig:train_loss_grad}.
 
\subsection{Improved Gradient Optimization} \label{sec:improved_opt}
Modern state-of-the-art models are \textit{modified versions} of fully-connected networks, offering improved optimization 
with additional layers~\citep{he2016identity,ioffe2015batch,bachlechner2020rezero}, activations~\citep{clevert2015fast,xu2015empirical} and training schemes~\citep{zagoruyko2017diracnets,jia2017improving}.
Those are mostly aimed at solving problems that arise when training plain deep ReLU networks. It is important to note that ReLU networks often do not achieve small training error without such modifications.
\\ 
Two of the most studied problems of ReLU networks are bias shifts (mean shifts)~\citep{clevert2015fast} and dying ReLUs~\citep{lu2019dying}.\\
\textbf{Bias shifts} affect networks equipped with activations of non-zero mean. Specifically, ReLU is a non-negative function, 
leading to increasingly positively-biased layer outputs in ReLU networks, a problem that grows with depth, $\textrm{E}_{W_k}\brackets{\max({W_k z_{k-1},0})}-z_{k-1}>0$.
Reduced bias shifts were shown to speed up the learning by bringing the normal gradient closer to the unit natural gradient~\citep{clevert2015fast}.
Interestingly, GReLUs do not suffer from bias shifts, as negative values are passed as well, and the fixed activations of different layers are independent.
\\
\textbf{Dying ReLUs} refers to the problem of ReLU neurons become inactive and only output $0$ for any input. This problem also grows with depth, as Lu et al.~\citep{lu2019dying} show that a deep enough ReLU network will eventually become a constant function. \\
Considering a GReLU network, given the initial network is properly initialized (i.e., not 'born dead'), it is guaranteed that no neurons will die during training due to its fixed activation pattern. 
\begin{remark}
GReLU networks are immune to the problems of bias-shifts and dying-ReLUs.
\end{remark}
\noindent These properties essentially lead to better back-propagation of gradients during training and corresponding faster convergence to global minima under milder requirements on the networks' depths. 
\subsection{Faster Train and Inference Iterations} \label{sec:implementation}
Theorem~\ref{thm:linear_rate_convergence} guarantees a faster convergence to global minima in terms of the number of iterations. We now explain how a straightforward implementation of a GReLU network leads to an additional approximated $2\times$ shorter \textit{iteration time} compared to its equivalent ReLU network, for both train and inference. \\
\textbf{Train:} consider a single GReLU train step as described in Figure~\ref{fig:network_blocks}. Since the activations are fixed, a binary lookup table of size $n\times m\times L$ is made once and used to calculate only the values related to active ReLU entries, which are around $50\%$ due to the proposed initialization of $\Psi_{[L]}$, leading to approximately $50\%$ of the FLOPS of a fully-connected network of the same dimensions\footnote{Acceleration is achieved without additional implementation, as Pytorch and Tensorflow automatically skip backward-propagation calculations of such zeroed gradients.}. \\
\textbf{Inference:} A non-negative matrix approximation (NMA)~\citep{tandon2010sparse} is used to replace each matrix $\Psi_k$ with smaller ones with respective sizes $(m\times r)$ and $(r\times m)$, $\Psi_k\approx W_k H_k$ for $r<<m$. The total FLOPS count is therefore proportional to $O(\mathbf{0.5}m^2+2rm)\approx O(\mathbf{0.5}m^2)$. While this approximation is not covered by our theory, experiments with $r=\sqrt{m}/2$ yielded the desired acceleration without any accuracy degradation. This result leads to a general insight.
\begin{remark} \label{remark:acceleration}
Our technique for $2\times$ accelerated inference can be used to accelerate any neural network with ReLU activation. Given network layers $W_{[L]}$, simply transition to the GReLU architecture described in Figure~\ref{fig:network_blocks} with, $\Psi'_k\leftarrow W_k, \forall k\in[L]$, then use NMA approximation, $\Psi'_k\approx W_k H_k$. \\
Additional modifications can be applied for further acceleration with minimal inference variation, like binarized $\Psi'_{L}$ matrices~\citep{hubara2016binarized}. Those exceed our scope and are left as future work.
\end{remark}

\section{Main Theory Proof Sketch} \label{sec:proof_sketch}
Calculating the gradient of $\ell(W_t)$ over $W_{t,k}$, denoted by $\nabla_k\ell(W_t)$, is straightforward,
\begin{eqnarray} 
\nabla_k \ell(W_t) = \sum_{i=1}^n \left[F^i_{t,k+1}\right]^{\top}(W_t^i - \Phi_i)x_ix_i^{\top} \left[G^i_{t,k-1}\right]^{\top} \in\R^{m\times m} \label{eqn:grad}
\end{eqnarray}
where
\begin{align}\label{notation:gradient:parameter}
F^i_{t,k} = BD_L^iW_{t,L}\ldots D_k^iW_{t,k}D_{k-1}^i\in\R^{d_y\times m}, \quad G^{i}_{t,k} = D_k^iW_{t,k}\ldots D_1^iW_{t,1}D_0^iC \in\R^{m\times d_x}
\end{align} 
and we set $G_{t,0}^i = D_0^iC$. Those represent the network partition and will be referred to as sub-networks. Notice that $W^i_t=F^i_{t,k+1}G^i_{t,k}$. We now calculate the difference between $W_{t+1}^i$ and $W_t^i$.
\begin{lemma}\label{lemma:diff:W}
\begin{align*}
    W_{t+1}^i - W_t^i = -\eta \sum_{k=1}^L F_{t,k+1}^i\left[F_{t,k+1}^i\right]^{\top}\left(W_t^i - \Phi_i\right)x_ix_i^{\top}[G_{t,k-1}^i]^{\top}G_{t,k-1}^i -\eta\Gamma_{t,i} + \eta^2 \Delta_{t,i},
\end{align*}
where
\begin{align}
\Gamma_{t,i} := & \sum_{k=1}^L \sum_{j\neq i}F_{t,k+1}^i\left[F_{t,k+1}^j\right]^{\top}\left(W_t^j - \Phi_j\right)x_jx_j^{\top}\left[G_{t,k-1}^j\right]^{\top}G_{t,k-1}^i. \label{eqn:gamma} \\
\Delta_{t,i} := & \sum_{s=2}^L(-\eta)^{s-2}\sum_{L\ge k_1>k_2 \ldots> k_s \ge 1}F^i_{t,k_1+1}\nabla_{k_1}\ell(W_t)D_{k_1-1}^iW_{t,k_1-1}\ldots D_{k_s}^i\nabla_{k_s}\ell(W_t)G^i_{t,k_{s}-1}, \label{eqn:delta}
\end{align}
\end{lemma}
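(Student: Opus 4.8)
The plan is to derive the recursion for $W_{t+1}^i - W_t^i$ directly from the gradient-descent update and the chain rule, then organize the resulting terms into the three stated pieces: a ``diagonal'' term (same example $i$ in both the forward and backward sub-networks), a ``cross'' term $\Gamma_{t,i}$ collecting the contributions of all other examples $j\neq i$, and a higher-order-in-$\eta$ remainder $\Delta_{t,i}$. First I would write the update as $W_{t+1,k} = W_{t,k} - \eta\,\nabla_k\ell(W_t)$ for each trainable layer $k\in[L]$, and then expand the product defining $W_{t+1}^i = B D_L^i W_{t+1,L}\cdots D_1^i W_{t+1,1} D_0^i C$ by substituting each factor $W_{t+1,k} = W_{t,k} - \eta\nabla_k\ell(W_t)$. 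Multiplying out this telescoping product of $L$ binomials yields $W_t^i$ (all zeroth-order terms), a sum over $k\in[L]$ of first-order terms where exactly one factor $W_{t,k}$ is replaced by $-\eta\nabla_k\ell(W_t)$, and the remaining higher-order terms where two or more factors are replaced — these last ones, with the appropriate sign $(-\eta)^{s}$ and the nested ordering $k_1 > k_2 > \cdots > k_s$, are exactly $\eta^2\Delta_{t,i}$ as written in \eqref{eqn:delta} (the $s-2$ exponent accounting for the two explicit powers of $\eta$ pulled out front).

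The core of the argument is the first-order term. For a fixed $k$, replacing $W_{t,k}$ by $-\eta\nabla_k\ell(W_t)$ in the product $W_t^i$ gives $-\eta\, F^i_{t,k+1}\,\nabla_k\ell(W_t)\,G^i_{t,k-1}$, using the sub-network notation of \eqref{notation:gradient:parameter} together with $G_{t,0}^i = D_0^i C$ and the identification $W^i_t = F^i_{t,k+1} G^i_{t,k}$. Now I substitute the explicit formula \eqref{eqn:grad} for $\nabla_k\ell(W_t) = \sum_{j=1}^n [F^j_{t,k+1}]^\top (W_t^j - \Phi_j) x_j x_j^\top [G^j_{t,k-1}]^\top$ and split the sum over $j$ into the $j=i$ piece and the $j\neq i$ piece. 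The $j=i$ piece, summed over $k$, produces precisely $-\eta\sum_{k=1}^L F_{t,k+1}^i [F_{t,k+1}^i]^\top (W_t^i - \Phi_i) x_i x_i^\top [G_{t,k-1}^i]^\top G_{t,k-1}^i$, the diagonal term in the lemma; the $j\neq i$ piece, summed over $k$, is exactly $-\eta\Gamma_{t,i}$ with $\Gamma_{t,i}$ as in \eqref{eqn:gamma}. Collecting the zeroth-order term $W_t^i$, the two first-order pieces, and the higher-order remainder completes the identity.

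The only real bookkeeping obstacle is the higher-order term: one must check that expanding the product of $L$ binomials and grouping by which subset $\{k_1 > \cdots > k_s\}$ of factors was perturbed reproduces the nested expression in \eqref{eqn:delta} — in particular that the intermediate stretches $D_{k_1-1}^i W_{t,k_1-1}\cdots D_{k_2}^i$ between consecutive perturbed layers carry the \emph{unperturbed} weights $W_{t,\cdot}$, and that the leading sub-network $F^i_{t,k_1+1}$ and trailing sub-network $G^i_{t,k_s-1}$ appear with the right indices and also use unperturbed weights. Since all of $D^i_k, B, C$ are fixed in time, there is no subtlety from the activation patterns; it is a purely algebraic multinomial expansion, and no convergence or probabilistic input is needed. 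I would verify the indices on the small cases $s=2$ and $s=3$, observe the pattern, and state the general case, since $\Delta_{t,i}$ plays only the role of an $O(\eta^2)$ error term that will be bounded later in the convergence proof. Note also that the sign in front is $(-\eta)^{s-2}$ rather than $(-\eta)^s$ precisely because two factors of $-\eta$ (well, $\eta^2$) are displayed explicitly outside the sum, so one must be careful that the displayed $\eta^2$ absorbs $(-\eta)^2$ correctly, leaving $(-\eta)^{s-2}$ inside.
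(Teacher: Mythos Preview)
Your proposal is correct and follows essentially the same approach as the paper: substitute the gradient-descent update into the product defining $W_{t+1}^i$, expand the $L$ binomials, separate the first-order-in-$\eta$ piece $\sum_k F_{t,k+1}^i\nabla_k\ell(W_t)G_{t,k-1}^i$ from the higher-order remainder $\eta^2\Delta_{t,i}$, and then split the first-order piece into $j=i$ and $j\neq i$ using~\eqref{eqn:grad}. Your bookkeeping on the indices and the $(-\eta)^{s-2}$ sign is exactly right.
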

\noindent Lemma~\ref{lemma:diff:W} breaks a single gradient step into $3$ terms. The first term on the left represents the impact of  example $i$ onto itself, where the next term~(\ref{eqn:gamma}) represents its impact by other examples and has a key role in our analysis. Our technique of fixed activations enables a more careful optimization of this term, followed by improved guarantees.
The last term~(\ref{eqn:delta}) stands for higher-order dependencies between different weights, i.e. the impact of a change in some weights over the change of others, as a result of optimizing all weights in parallel. It negatively affects the optimization and should be minimized. The next lemma deals with the loss change.
\begin{lemma}\label{lemm:descent}
For any set of positive numbers $a_1, \ldots, a_n$, we have
\begin{eqnarray} \label{eqn:losses_diff}
\ell(W_{t+1}) - \ell(W_t) \leq \sum_{i=1}^n \frac{\Lambda_i + \eta^2 a_i}{2}\nm{(W_t^i - \Phi_i)x_i}^2 + \sum_{i=1}^n\frac{\eta^2\left(3\eta^2 + 1/a_i\right)}{2}\nm{\Delta_{t,i}x_i}^2
\end{eqnarray}
where
\begin{align*} 
\Lambda_i = & -2\eta \sum_{k=1}^L \lambda_{\min}\left(F_{t,k+1}^i\left[F_{t,k+1}^i\right]^{\top}\right)\lambda_{\min}\left(\left[G_{t,k-1}^i\right]^{\top}G_{t,k-1}^i\right) \\
&  + 2\eta  \sum_{k=1}^L\sum_{j \neq i} \left|\langle G^j_{t,k-1}x_j, G^i_{t,k-1}x_i\rangle\right| \nm{F^j_{t,k+1}\left[F_{t,k+1}^i\right]^{\top}}_2 \\
& + 3\eta^2L\sum_{k=1}^L \lambda_{\max}\left(F_{t,k+1}^i\left[F_{t,k+1}^i\right]^{\top}\right) \nm{G_{t,k-1}^ix_i}^4 \\
&+  3\eta^2nL \sum_{k=1}^L\sum_{j \neq i} \left|\langle G^j_{t,k-1}x_j, G^i_{t,k-1}x_i\rangle\right|^2 \nm{F^j_{t,k+1}\left[F_{t,k+1}^i\right]^{\top}}_2^2.
\end{align*}
\end{lemma}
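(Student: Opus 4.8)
\textbf{Proof proposal for Lemma~\ref{lemm:descent}.}

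The plan is to start from the exact recursion for $W_{t+1}^i - W_t^i$ given by Lemma~\ref{lemma:diff:W} and expand $\ell(W_{t+1}) = \frac12\sum_i \nm{(W_{t+1}^i - \Phi_i)x_i}^2$ directly. Write $W_{t+1}^i - \Phi_i = (W_t^i - \Phi_i) + (W_{t+1}^i - W_t^i)$, so that
\[
\nm{(W_{t+1}^i - \Phi_i)x_i}^2 = \nm{(W_t^i-\Phi_i)x_i}^2 + 2\langle (W_t^i-\Phi_i)x_i,\ (W_{t+1}^i-W_t^i)x_i\rangle + \nm{(W_{t+1}^i-W_t^i)x_i}^2 .
\]
Summing over $i$ and subtracting $\ell(W_t)$ reduces the problem to controlling the cross term and the quadratic term. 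I would substitute the three-part decomposition of $W_{t+1}^i - W_t^i$ (the self term, $-\eta\Gamma_{t,i}$, and $\eta^2\Delta_{t,i}$) into both, and handle each resulting piece separately.

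For the cross term, the dominant contribution is $-\eta\sum_i\sum_k \langle (W_t^i-\Phi_i)x_i,\ F_{t,k+1}^i[F_{t,k+1}^i]^\top (W_t^i-\Phi_i)x_i x_i^\top [G_{t,k-1}^i]^\top G_{t,k-1}^i x_i\rangle$; since $\nm{x_i}=1$, the factor $x_i^\top[G_{t,k-1}^i]^\top G_{t,k-1}^i x_i = \nm{G_{t,k-1}^i x_i}^2 \ge \lambda_{\min}([G_{t,k-1}^i]^\top G_{t,k-1}^i)$ and the quadratic form in $F_{t,k+1}^i[F_{t,k+1}^i]^\top$ is at least $\lambda_{\min}(F_{t,k+1}^i[F_{t,k+1}^i]^\top)\nm{(W_t^i-\Phi_i)x_i}^2$, which produces the first (negative) line of $\Lambda_i$. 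The contribution of $-\eta\Gamma_{t,i}$ to the cross term is bounded in absolute value by a sum over $j\neq i$; after Cauchy–Schwarz in the form $|\langle (W_t^i-\Phi_i)x_i, M (W_t^j-\Phi_j)x_j\rangle| \le \nm{M}_2\nm{(W_t^i-\Phi_i)x_i}\nm{(W_t^j-\Phi_j)x_j}$ and then $ab\le \frac12(a^2+b^2)$, together with factoring $\nm{F_{t,k+1}^i[F_{t,k+1}^j]^\top G_{t,k-1}^j x_j x_j^\top [G_{t,k-1}^i]^\top}$ through $|\langle G_{t,k-1}^j x_j, G_{t,k-1}^i x_i\rangle|$ and $\nm{F_{t,k+1}^j[F_{t,k+1}^i]^\top}_2$, this yields the second line of $\Lambda_i$. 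For the quadratic term $\nm{(W_{t+1}^i-W_t^i)x_i}^2$ I would keep only the part needed at order $\eta^2$: using $\nm{a+b+c}^2 \le 3(\nm a^2+\nm b^2+\nm c^2)$ and bounding the $\eta^2$-order pieces of the self term and of $\Gamma_{t,i}$ by operator norms gives the third and fourth lines of $\Lambda_i$ (the $3\eta^2L$ and $3\eta^2nL$ factors coming from the $L$ summands over $k$ and, in the $\Gamma$ case, the extra $n$ from the inner sum over $j$, each handled by Cauchy–Schwarz so that $\nm{\sum_k\sum_j v_{k,j}}^2 \le nL\sum_{k,j}\nm{v_{k,j}}^2$).

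The remaining pieces — the cross term with $\eta^2\Delta_{t,i}$ and the $\Delta_{t,i}$ part of the quadratic term — are exactly what the free parameters $a_i$ are introduced to absorb: by Young's inequality $2\langle u, \eta^2 \Delta_{t,i}x_i\rangle \le \eta^2 a_i\nm{u}^2 + \frac{\eta^2}{a_i}\nm{\Delta_{t,i}x_i}^2$ with $u = (W_t^i-\Phi_i)x_i$, which contributes the $\eta^2 a_i$ summand inside the first sum of \eqref{eqn:losses_diff} and the $\frac{\eta^2}{a_i}$ summand inside the second; the genuine $\eta^2$-order quadratic contribution of $\Delta_{t,i}$, after the $3(\cdot)$ split above, gives the $3\eta^4$ piece, so the two collect into $\eta^2(3\eta^2 + 1/a_i)\nm{\Delta_{t,i}x_i}^2/2$. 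Collecting all terms and dividing the cross-term contributions by $2$ (they carry a leading factor $2$) reproduces the stated inequality. The main obstacle is purely bookkeeping: tracking which of the several $\eta^1$- and $\eta^2$-order monomials in the expansion lands in which line of $\Lambda_i$, making sure each off-diagonal ($j\neq i$) term is symmetrized so it can be charged to $\nm{(W_t^i-\Phi_i)x_i}^2$ rather than to a mixed product, and confirming that no term is double counted when the $3(\nm\cdot^2)$ split and the Young step with $a_i$ are applied to overlapping contributions — there is no hard inequality here, only a careful accounting of a long but elementary expansion.
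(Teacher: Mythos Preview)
Your proposal is correct and follows essentially the same route as the paper's proof: the same square expansion of $\ell(W_{t+1})$, the same substitution of Lemma~\ref{lemma:diff:W}, the same three-term split $\nm{a+b+c}^2\le 3(\nm a^2+\nm b^2+\nm c^2)$ for the quadratic part, the same symmetrization of the $j\neq i$ cross contributions, and the same Young step with $a_i$ for the $\Delta_{t,i}$ term. The only cosmetic difference is that you lower-bound the self term directly by factoring out the scalar $\nm{G_{t,k-1}^i x_i}^2$ and then using $\lambda_{\min}$, whereas the paper routes the same computation through a $\mbox{vec}$/Kronecker-product identity to reach the identical product-of-eigenvalues bound.
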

\noindent We wish to bound the right-hand side with a value as negative as possible. The first term of~(\ref{eqn:losses_diff}) is proportional to the loss~(\ref{notation:loss}). Thus negative $\Lambda_i+\eta^2a_i$ values can lead to an exponential loss decay, i.e. linear-rate convergence: $\ell(W_{t+1})=(1-|\rho|)\ell(W_t)$. 
In order to minimize $\Lambda_i$, two properties are desired:
(a) The eigenvalues of the squared weight matrices $F^i_{t,k}\left[F_{t,k}^i\right]^{\top}, \left[G_{t,k}^i\right]^{\top}G_{t,k}^i$ are concentrated, that is the ratio between the minimal and maximal eigenvalues is independent of the problem's parameters. (b) The covariance between sub-networks of different examples is much smaller than the covariance of the same examples.
The second term of~(\ref{eqn:losses_diff}) represents high-order dependencies between gradients and is positive, restricting the learning rate from being too high. \\
Next, we assume the following inequalities which satisfy (a),(b) hold with a high probability,
\begin{eqnarray}
&  & \lambda_{\min}\left(F_{t,k}^i\left[F_{t,k}^i\right]^{\top}\right) \geq \alpha_y, \quad \lambda_{\min}\left(G_{t,k}^i\left[G_{t,k}^i\right]^{\top}\right) \geq \alpha_x, \label{eqn:eig-lower-bound} \\
&  &  \lambda_{\max}\left(F_{t,k}^i\left[F_{t,k}^i\right]^{\top}\right) \leq \beta_y, \quad \lambda_{\max}\left(G_{t,k}^i\left[G_{t,k}^i\right]^{\top}\right) \leq \beta_x, \label{eqn:eig-upper-bound} \\
&  & \left|\left\langle G_{t,k-1}^jx_j, G_{t,k-1}^ix_i\right\rangle\right|\nm{F^j_{t,k+1}\left[F_{t,k+1}^i\right]^{\top}}_2 \leq \gamma\beta^2, \label{eqn:cross-upper-bound} \\
&  & \beta^2\gamma n\leq \frac{\alpha^2}{2}, \label{eqn:gamma-upper-bound}
\end{eqnarray}
where $\alpha = \sqrt{\alpha_x\alpha_y}$, $\beta = \sqrt{\beta_y\beta_x}$ and $\gamma$ are auxiliary parameters.
Assumptions (\ref{eqn:eig-lower-bound},\ref{eqn:eig-upper-bound}) are related to property (a), while (\ref{eqn:cross-upper-bound}, \ref{eqn:gamma-upper-bound}) guarantee property (b).
We will show that the above bounds hold with a high probability under the proposed initialization and the fixed activation patterns defined in Section~\ref{sec:preliminaries}. Using those assumptions, we state the following lemma.
\begin{lemma}\label{lemma:linear_rate}
Set $a_i = \beta^4L^2$ and
\begin{eqnarray}
    \eta = \min\left( \frac{ \alpha^2}{12\beta^2\beta_xL}, \frac{1}{3 L}, \frac{\alpha^2}{ 4\beta^4 L}, \frac{1}{ \beta^2 L}, \frac{1}{4\sqrt{2}\sqrt{L}  e^{\theta/2}\theta^{-1/2}\beta \sqrt{\ell_0}}, \frac{\alpha^2}{1024 n e^{2\theta}\theta^{-2}   \beta^2\ell_0}\right). \label{eqn:eta_min}
\end{eqnarray}
Under assumptions in (\ref{eqn:eig-lower-bound}), (\ref{eqn:eig-upper-bound}), (\ref{eqn:cross-upper-bound}), (\ref{eqn:gamma-upper-bound}) and $\ell(W_t) \leq \ell_0$ for $t \geq 1$, for any $\theta \in (0, 1/5)$, with a probability $1 - L^2\sqrt{m}\exp\left(-\theta m/[4L] + 6\sqrt{m}\right)$, we have
\begin{eqnarray} \label{eqn:linear_rate}
        \ell(W_{t+1}) \leq \paren{1-\frac{\eta\alpha^2 L}{2}}\ell(W_t).\label{eqn:bound-1}
\end{eqnarray}
\end{lemma}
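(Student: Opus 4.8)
The plan is to start from the loss-difference bound of Lemma~\ref{lemm:descent}, substitute the specific choice $a_i = \beta^4 L^2$, and show that with the stated learning rate $\eta$ the coefficient $\Lambda_i + \eta^2 a_i$ of $\nm{(W_t^i - \Phi_i)x_i}^2$ is at most $-\eta\alpha^2 L$, while the $\Delta_{t,i}$-term is absorbed. First I would expand $\Lambda_i$ term by term. The leading (negative) term is $-2\eta\sum_k \lambda_{\min}(F_{t,k+1}^i[F_{t,k+1}^i]^\top)\lambda_{\min}([G_{t,k-1}^i]^\top G_{t,k-1}^i)$, which by \eqref{eqn:eig-lower-bound} is at most $-2\eta L \alpha_x\alpha_y = -2\eta\alpha^2 L$. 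I then need the three remaining terms of $\Lambda_i$ plus $\eta^2 a_i$ to contribute at most $+\eta\alpha^2 L$ in total, so that the net coefficient is $\le -\eta\alpha^2 L$. The cross-term $2\eta\sum_k\sum_{j\neq i}|\langle G^j_{t,k-1}x_j, G^i_{t,k-1}x_i\rangle|\,\nm{F^j_{t,k+1}[F_{t,k+1}^i]^\top}_2$ is bounded via \eqref{eqn:cross-upper-bound} by $2\eta L n \gamma\beta^2$, and then \eqref{eqn:gamma-upper-bound} gives $\le \eta L \alpha^2$; I would actually want to route some slack through a constant so this is, say, $\le \tfrac{1}{2}\eta\alpha^2 L$ or absorb it against the factor $2$ in the leading term, leaving one full $-\eta\alpha^2 L$. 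The two $\eta^2$-terms in $\Lambda_i$, namely $3\eta^2 L\sum_k \lambda_{\max}(F_{t,k+1}^i[F_{t,k+1}^i]^\top)\nm{G_{t,k-1}^i x_i}^4$ and $3\eta^2 n L\sum_k\sum_{j\neq i}|\langle\cdot,\cdot\rangle|^2\nm{F^j[F^i]^\top}_2^2$, are bounded using \eqref{eqn:eig-upper-bound} and \eqref{eqn:cross-upper-bound} by $O(\eta^2 L^2\beta^2\beta_x^2)$ and $O(\eta^2 n^2 L^2 \gamma^2\beta^4) = O(\eta^2 L^2 \alpha^4)$ respectively; together with $\eta^2 a_i = \eta^2\beta^4 L^2$ these are all $O(\eta^2 \beta^4 L^2)$-type quantities, and the constraints $\eta \le \alpha^2/(12\beta^2\beta_x L)$, $\eta \le \alpha^2/(4\beta^4 L)$ in \eqref{eqn:eta_min} are exactly what force each of them below a small fraction of $\eta\alpha^2 L$. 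So the first term of \eqref{eqn:losses_diff} contributes at most $-\tfrac{1}{2}\eta\alpha^2 L\,\nm{(W_t^i - \Phi_i)x_i}^2$ per example after summing.

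Next I would handle the second term of \eqref{eqn:losses_diff}, $\sum_i \tfrac{\eta^2(3\eta^2 + 1/a_i)}{2}\nm{\Delta_{t,i}x_i}^2$. With $a_i = \beta^4 L^2$ and $\eta$ small, $3\eta^2 + 1/a_i = O(1/(\beta^4 L^2))$, so this term is $O(\eta^2\beta^{-4}L^{-2})\sum_i \nm{\Delta_{t,i}x_i}^2$. The core estimate I need is a bound on $\nm{\Delta_{t,i}x_i}$: from \eqref{eqn:delta}, $\Delta_{t,i}$ is a sum over $s=2,\dots,L$ and over decreasing index tuples of products involving $s$ copies of the gradients $\nabla_{k_\ell}\ell(W_t)$ interleaved with (bounded) intermediate transforms. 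Each $\nabla_{k}\ell(W_t)$ has operator norm controlled via \eqref{eqn:grad}, \eqref{eqn:eig-upper-bound} and the loss bound $\ell(W_t)\le\ell_0$ by something like $\beta_y^{1/2}\beta_x^{1/2}\sqrt{\ell_0}$ up to constants; the intermediate transforms $D W \cdots D$ have norm bounded by $\beta$-type quantities raised to the number of layers, which is where the factor $e^{\theta/2}$ (a bound like $(1+\theta/L)^{L/2}$ or a Gaussian-matrix-product concentration of the form $\beta^{k-k'}\le e^{\theta}$) and the high-probability event of the lemma enter. Summing the geometric-type series over $s$ with ratio $\eta\beta^2 L$ (kept $<1$ by $\eta\le 1/(\beta^2 L)$ and $\eta\le 1/(3L)$) yields $\nm{\Delta_{t,i}x_i} = O(\beta^2 L \sqrt{\ell_0}\cdot e^{\theta}/\theta)$ or similar, so $\sum_i\nm{\Delta_{t,i}x_i}^2 = O(n \beta^4 L^2 e^{2\theta}\theta^{-2}\ell_0^2)$; I am being deliberately loose here since the exact exponents are exactly what the last two entries of \eqref{eqn:eta_min}, $\eta\le 1/(4\sqrt2\sqrt L e^{\theta/2}\theta^{-1/2}\beta\sqrt{\ell_0})$ and $\eta\le \alpha^2/(1024 n e^{2\theta}\theta^{-2}\beta^2\ell_0)$, are calibrated to kill. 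Plugging in, the whole $\Delta$-term becomes $O(\eta^2\beta^{-4}L^{-2}\cdot n\beta^4 L^2 e^{2\theta}\theta^{-2}\ell_0^2) = O(\eta^2 n e^{2\theta}\theta^{-2}\ell_0^2)$, and the constraint on $\eta$ makes this $\le \tfrac{1}{4}\eta\alpha^2 L\,\ell_0 \le \tfrac{1}{4}\eta\alpha^2 L\,\ell(W_t)$ after one more application of $\ell(W_t)\le\ell_0$ — wait, more carefully, I would pull out one factor of $\ell(W_t)$ and bound the rest by $\ell_0$, so that this term is $\le \tfrac14\eta\alpha^2 L\,\ell(W_t)$. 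Combining with the first-term contribution $-\tfrac12\eta\alpha^2 L\,\ell(W_t)$ (using $\sum_i\tfrac12\nm{(W_t^i-\Phi_i)x_i}^2 = \ell(W_t)$) gives $\ell(W_{t+1}) - \ell(W_t) \le -\tfrac14\eta\alpha^2 L\,\ell(W_t)$; tightening the bookkeeping constants to the exact values in \eqref{eqn:eta_min} upgrades $\tfrac14$ to $\tfrac12$ and yields \eqref{eqn:linear_rate}.

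Finally, the probability statement: the deterministic computation above is valid on the event that certain products of the fixed Gaussian matrices $\Psi_{[L]}, B, C$ and the (bounded-perturbation) trained matrices behave well — concretely, norm bounds on $k$-fold products of $m\times m$ Gaussian-type matrices with ReLU diagonal gates. A standard $\varepsilon$-net plus sub-exponential concentration argument gives that each such product deviates from its typical scale by more than a $\theta$-fraction with probability at most $\sqrt m\,e^{-\theta m/(4L) + 6\sqrt m}$; union-bounding over the $O(L^2)$ choices of layer-index pairs $(k,k')$ produces the stated failure probability $L^2\sqrt m\,e^{-\theta m/(4L)+6\sqrt m}$. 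I expect the main obstacle to be the bound on $\nm{\Delta_{t,i}x_i}$: it requires simultaneously controlling operator norms of long products of correlated random matrices interleaved with gradient matrices (themselves data- and time-dependent), summing a combinatorial number of such terms over all decreasing index tuples, and getting the dependence on $L$, $\beta$, $\ell_0$ and $\theta$ sharp enough that the last two constraints in \eqref{eqn:eta_min} — which are the binding ones and dictate the final $\eta = d_x/(n^4 L^3 d_y)$ scaling in Theorem~\ref{thm:linear_rate_convergence} — come out as stated. Everything else is careful but routine term-chasing against the four assumptions \eqref{eqn:eig-lower-bound}–\eqref{eqn:gamma-upper-bound}.
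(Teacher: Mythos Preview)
Your approach is essentially the paper's: apply Lemma~\ref{lemm:descent} with $a_i=\beta^4L^2$, use \eqref{eqn:eig-lower-bound}--\eqref{eqn:gamma-upper-bound} to drive $\Lambda_i+\eta^2a_i\le -\tfrac34\eta\alpha^2L$ under the first four constraints in \eqref{eqn:eta_min}, then bound $\|\Delta_{t,i}\|_2$ by combining a gradient estimate $\|\nabla_k\ell(W_t)\|_2\le\sqrt{2}\beta\sqrt{\ell(W_t)}$ with the intermediate-transform bound $\|Z^{t,i}_{k_a,k_b}\|_2\le 4\sqrt{L}e^{\theta/2}\theta^{-1/2}$ from Lemma~\ref{lemma:z-ka-kb-t} (which is precisely the high-probability event you invoke), and absorb the resulting term using the last two constraints on $\eta$.

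One place where your bookkeeping needs correction: the geometric ratio in the $\Delta_{t,i}$ series is \emph{not} $\eta\beta^2L$ controlled by $\eta\le 1/(\beta^2L)$. Each increment in $s$ contributes one gradient factor $\sim\beta\sqrt{\ell(W_t)}$, one intermediate transform $\sim\sqrt{L}e^{\theta/2}\theta^{-1/2}$, and a combinatorial $L$ from $\binom{L}{s}\le L^s$, so the ratio is $O\bigl(\eta L^{3/2}e^{\theta/2}\theta^{-1/2}\beta\sqrt{\ell_0}\bigr)$; it is the \emph{fifth} constraint in \eqref{eqn:eta_min} that keeps this below $1/2$. Consequently the leading ($s=2$) term gives $\|\Delta_{t,i}\|_2=O\bigl(Le^{\theta}\theta^{-1}\beta^3\ell(W_t)\bigr)$, linear in $\ell(W_t)$ rather than $\sqrt{\ell_0}$ as you first wrote --- this is what makes your later ``pull out one $\ell(W_t)$, bound the rest by $\ell_0$'' step work. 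Squaring, summing over $i$, and dividing by $a_i=\beta^4L^2$ yields a remainder $O\bigl(\eta^2 n e^{2\theta}\theta^{-2}\beta^2\ell_0\bigr)\ell(W_t)$, and the sixth constraint forces this below $\tfrac14\eta\alpha^2L\,\ell(W_t)$. With these fixes the constants in \eqref{eqn:eta_min} line up exactly as stated.
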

\noindent This inequality  indicates a linear-rate convergence with rate $\frac{\eta\alpha^2 L}{2}$. The proof of Theorem~\ref{thm:linear_rate_convergence} follows directly from~(\ref{eqn:bound-1}). Proofs for the validity of the above assumptions with high probability are based on concentration bounds for sufficiently overparameterized networks under the unique initialization proposed in Section~\ref{sec:preliminaries}, and are quite technical. \\
 Theorem~\ref{thm:bound_alpha_beta} validates (\ref{eqn:eig-lower-bound},\ref{eqn:eig-upper-bound}) by bounding the ratio between the maximal and minimal eigenvalues of the squared weight matrices with a small constant for all $(t,k,i)$. Theorem~\ref{thm:bound_gamma} utilizes the fixed activation patterns to show a small covariance between sub-networks of different examples as required by (\ref{eqn:cross-upper-bound},\ref{eqn:gamma-upper-bound}).   
 Both theorems hold for small enough maximal variation from initialization $\tau$~(\ref{def:tau}), which is achieved
 by sufficient network overparameterization, as shown in Theorem~\ref{thm:linear_rate_convergence}. 

\section{Experiments} \label{sec:experiments}
In this section, we provide empirical validation of our theoretical results, as well as further testing on complementary setups. We wish to answer the following questions:
\begin{enumerate}
    \item \textit{Is the theoretical guarantee of Theorem~\ref{thm:linear_rate_convergence} tight, or can it be further improved?}
        \item \textit{How well do GReLU networks train and generalize compared to ReLU networks?}
    \item \textit{What is the difference between the training dynamics of GReLU and ReLU when optimized with gradient-descent?}
\end{enumerate}
The next sections are dedicated to discuss these questions.
For fairness, we compare networks with the same width and depth, initialization described in Section~\ref{sec:preliminaries}, and trained by gradient descent with a constant learning rate for each train. The term MSE refers to the \textit{sum} of squared errors described in~(\ref{notation:loss}).
Since the theory covers general setups, it often uses small learning rate values to fit all setups, while in practice larger values can be utilized for faster convergence. Therefore, we experiment with a variety of learning rates according to the task. In addition, in all the experiments we optimize with batch gradient-descent due to hardware constraints. We pick the largest possible batch size per experiment.

\begin{figure}[t]
    \centering
     \begin{subfigure}[b]{0.35\textwidth}
        \centering
     \includegraphics[width=\textwidth]{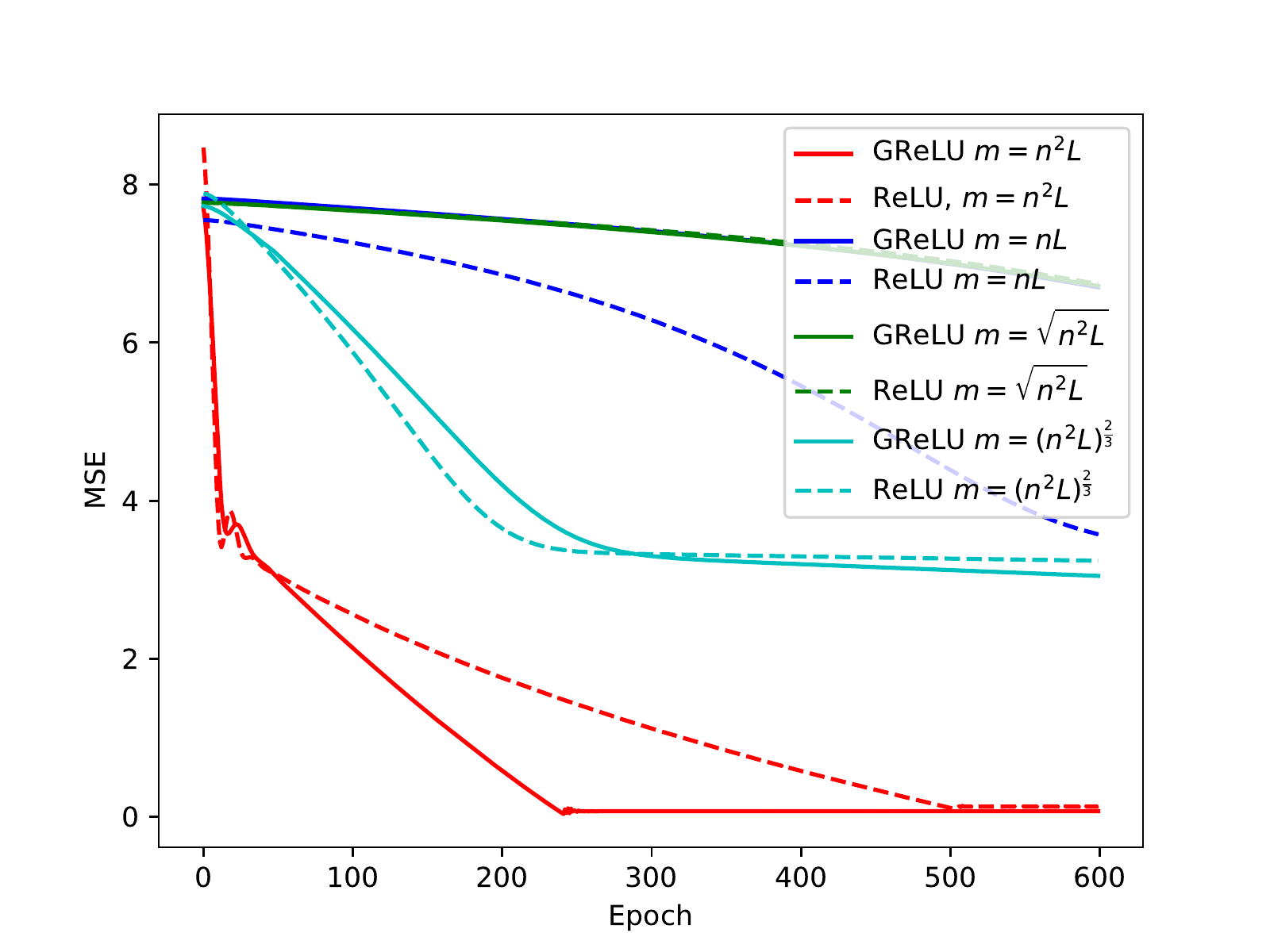}
    \caption{}
    \label{fig:theory_graph}
     \end{subfigure} \hspace{-0.8cm}
     \begin{subfigure}[b]{0.35\textwidth}
         \centering
    \includegraphics[width=\textwidth]{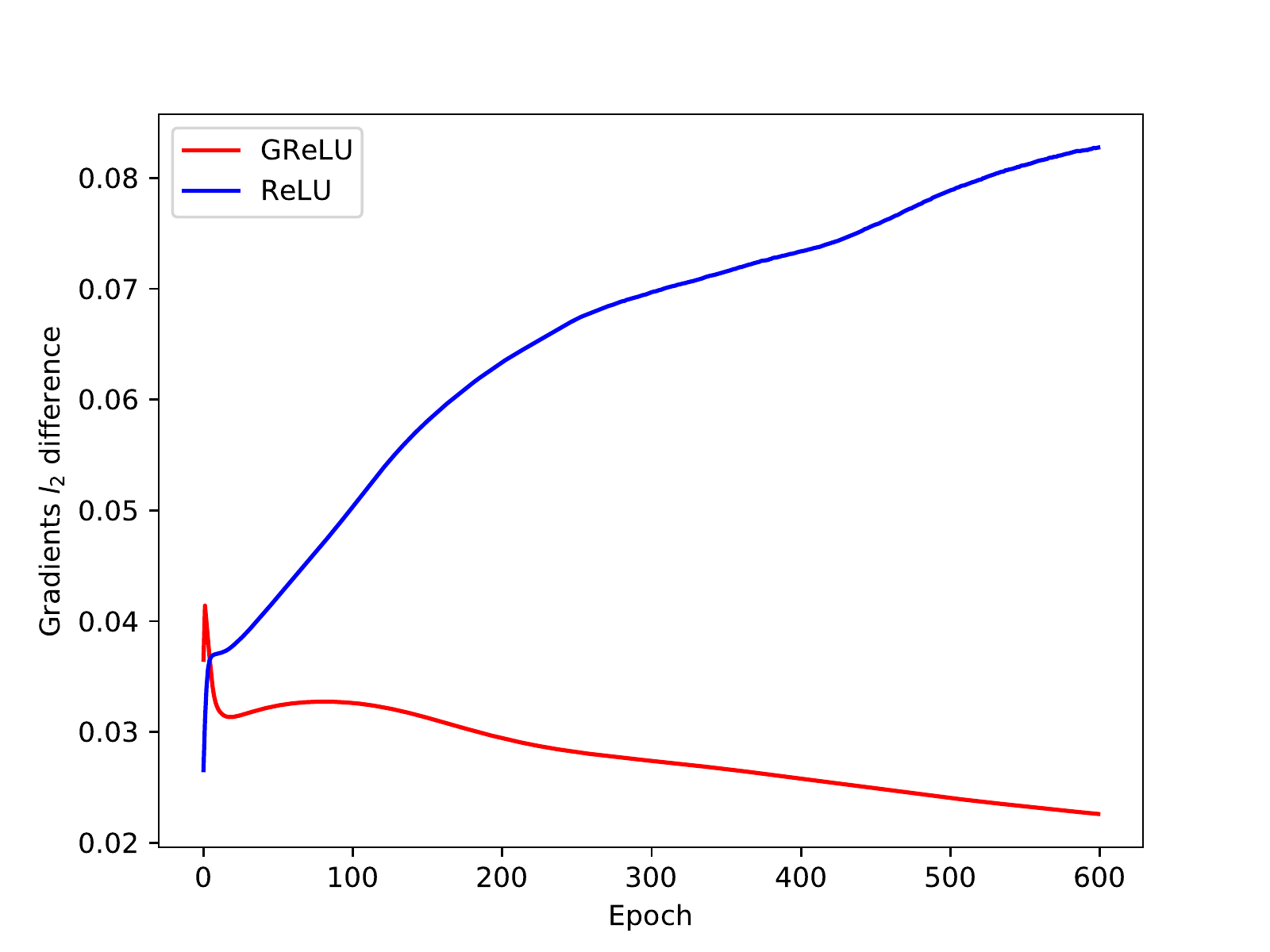}
    \caption{}
    \label{fig:train_loss_grad}
    \end{subfigure} \hspace{-0.8cm}
    \begin{subfigure}[b]{0.35\textwidth}
         \centering
    \includegraphics[width=\textwidth]{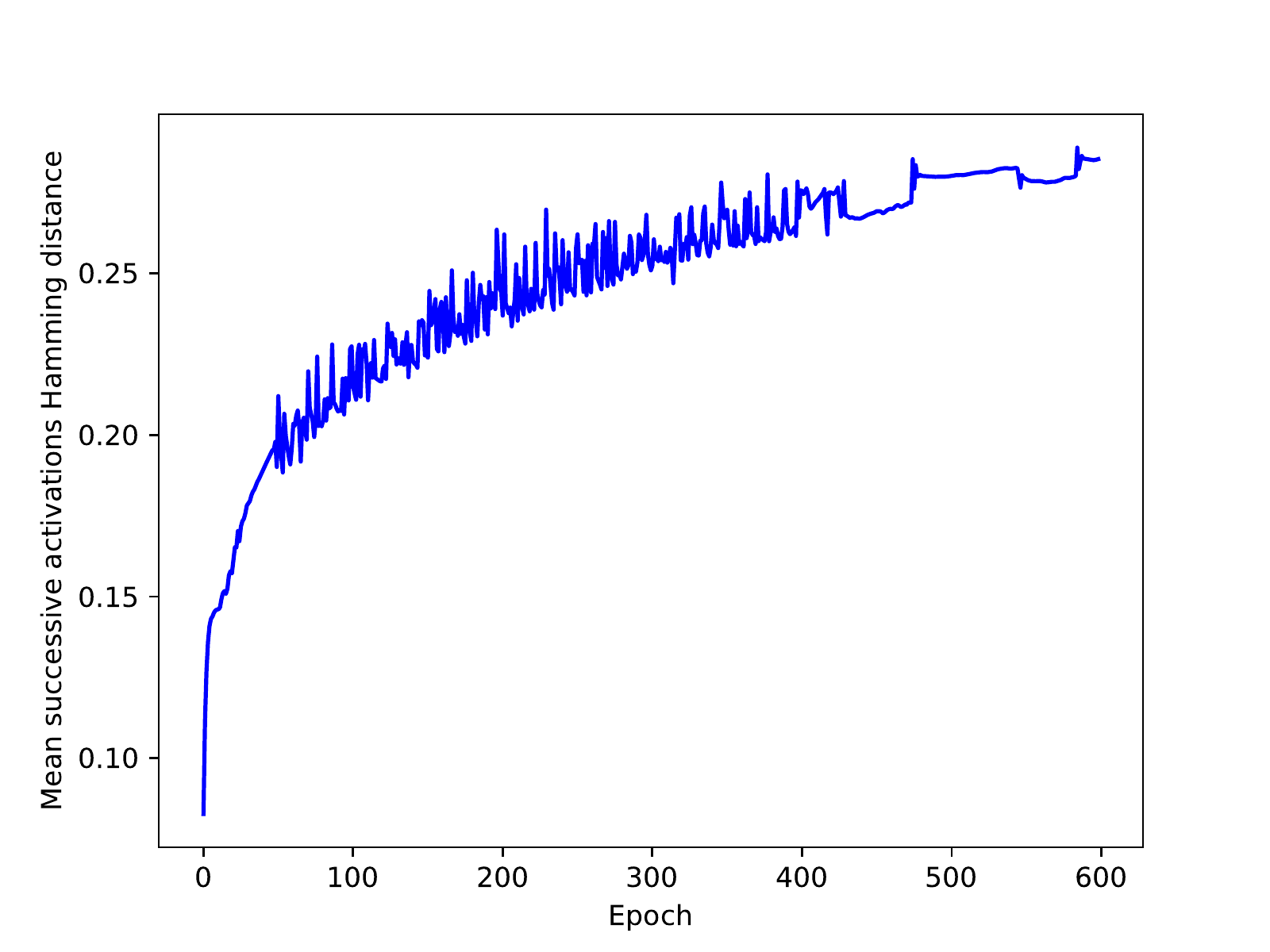}
    \caption{}
    \label{fig:successive_relu_hamming}
    \end{subfigure} 
     \caption{Training dynamics of ReLU and GReLU networks. (a) ReLU (solid lines) and ReLU (dashed) networks with different widths trained over synthetic dataset. (b) Average temporal difference of gradients on SGEMM dataset. (c) Average Hamming distance of successive ReLU activation patterns for SGEMM dataset.}
\end{figure}

\subsection{Lower Trainability Bound}
To answer \textbf{question 1}, we experiment on a synthetic dataset, in order to control the number of examples and their dimension to fit the theory under GPU memory constraints. 
We generate a regression problem with the following complex version of the Ackley function~\citep{potter1994cooperative},
\begin{eqnarray*}
y_i=\sum_{d'=1}^d x_{i,d-d'}\left(\log{\left(|x_{i,d'}|\right)\left(\cos(x_{i,d'})+x^3\sin(x_{i,d'})\right)}+\sqrt{|x_{i,d'}|}\right)~~,~~i=1,\dots, n
\end{eqnarray*}
We generate $\paren{n,d}=\paren{100,200}$ random vectors and labels, and use $4$-layers networks for both ReLU and GReLU, with various widths. Each experiment is repeated 5 times with different random seeds.

The results of the different runs appear in Figure~\ref{fig:theory_graph}. To have a fair comparison between the different settings, we proceed as follows. For the experiment corresponding to the theoretical width in Theorem~\ref{thm:linear_rate_convergence}, we plot a curve where every entry is computed by taking the \textit{maximal} MSE over the 5 runs. 
For lower widths, $m=\paren{n^2L}^p, p<1$ and $m=n L$, we plot the \textit{minimal} MSE over the corresponding runs. 
All $5$ runs with our theoretical width converged to zero error, for both ReLU and GReLU. 
In contrast, none of the runs with lower width converged, for both networks. Since the theory guarantees convergence for \textit{any data distribution}, these results provide some empirical support that Theorem~\ref{thm:linear_rate_convergence} is tight. 

We acknowledge that training with significantly smaller learning rates (e.g., $10^{-20}$) would correspond to longer and impractical convergence times, but possibly would allow narrower networks to converge as well. In addition, while we focus on fully-connected networks, 
modern architectures like ResNets were empirically shown to converge with smaller widths (channels) for specific tasks. Improved theoretic guarantees for these architectures via GReLU technique are interesting for future work.  

\subsection{Optimization and Generalization in Practical Scenarios}
\begin{figure}[h]
    \centering
    \begin{subfigure}[b]{0.35\textwidth}
        \includegraphics[width=\textwidth]{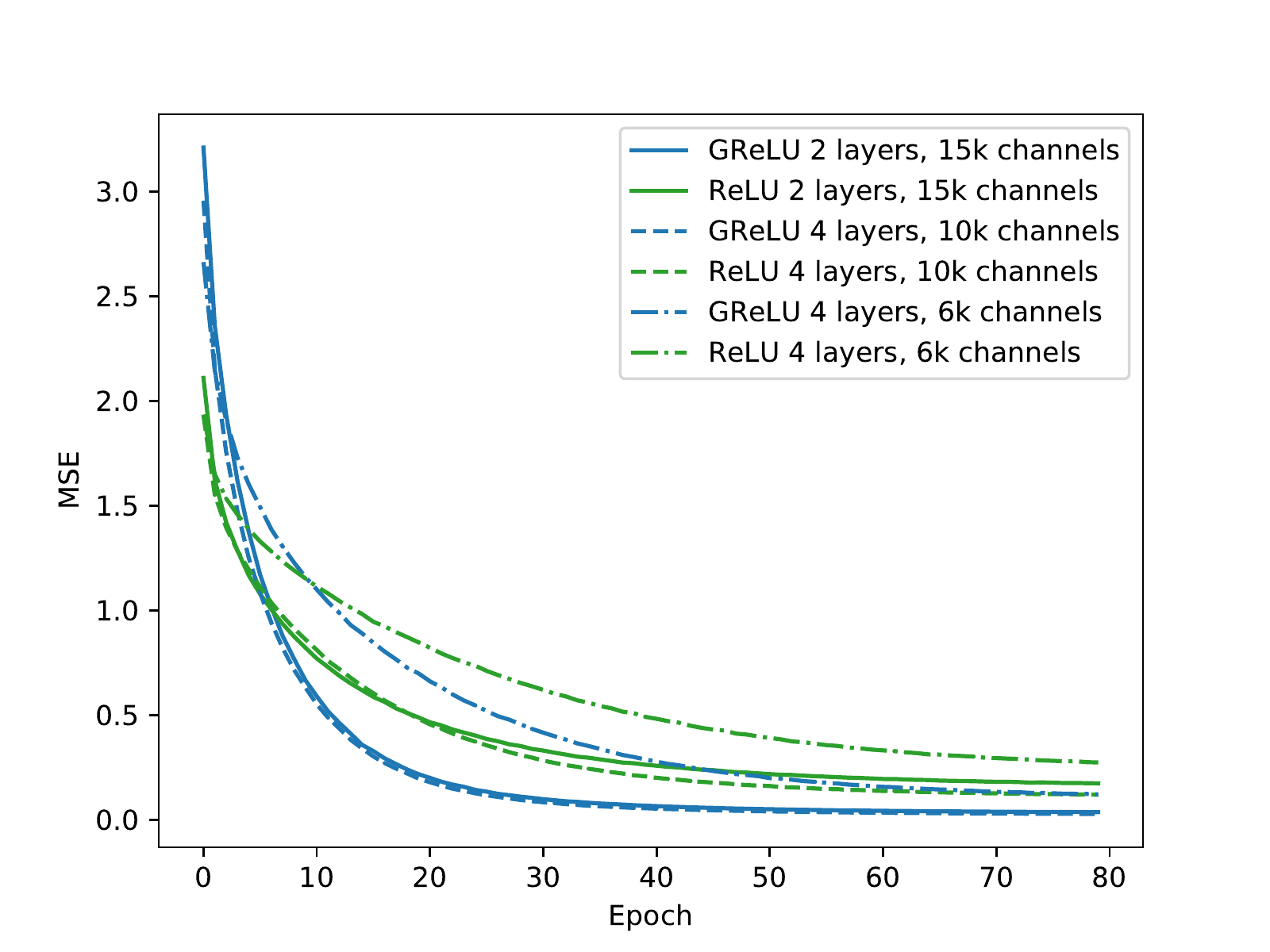}
        \caption{Train MSE vs epoch}
        \label{fig:train_loss_cifar}
    \end{subfigure} \hspace{-0.8cm}
     \begin{subfigure}[b]{0.35\textwidth}
         \centering
            \includegraphics[width=\textwidth]{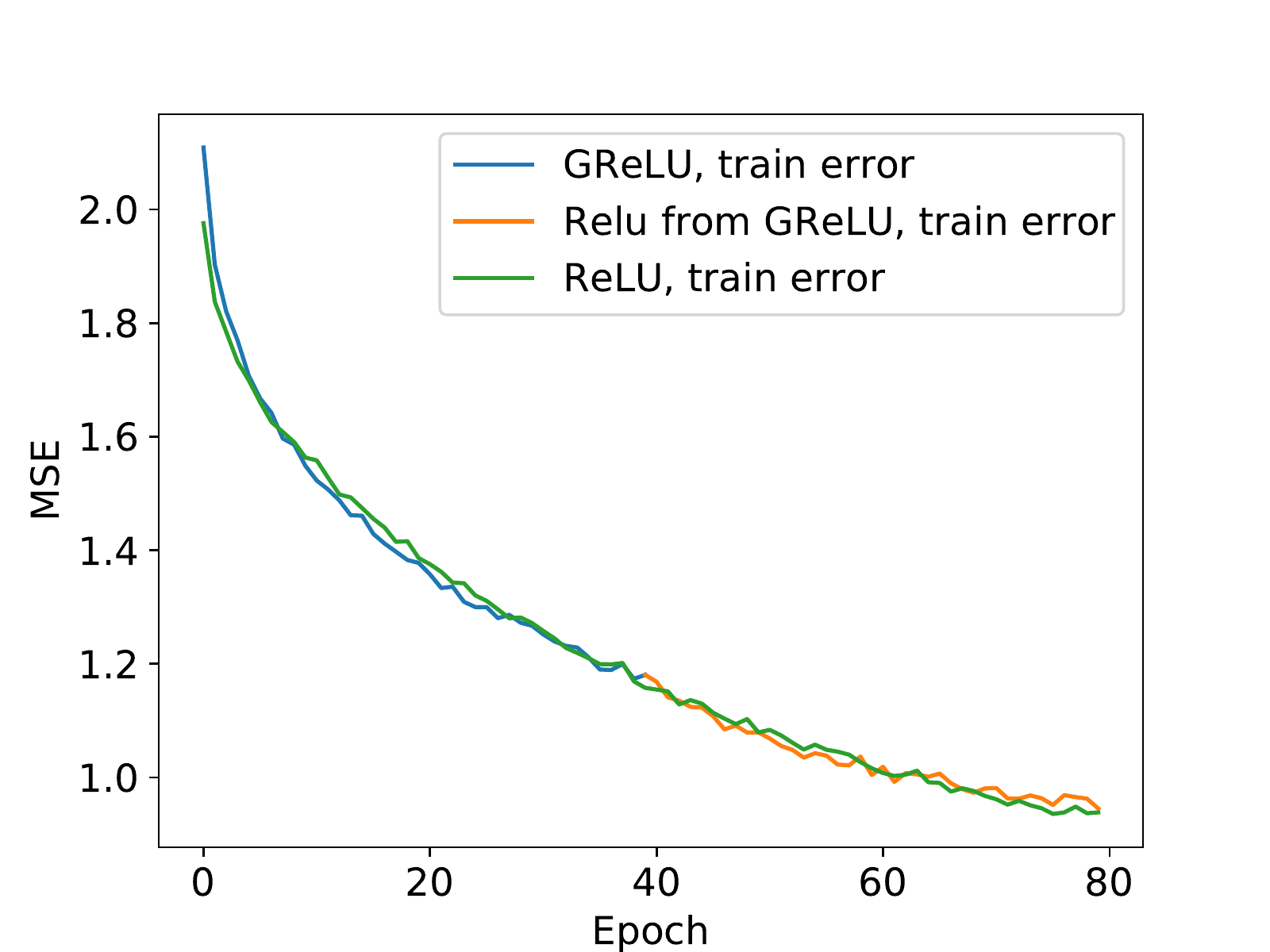}
            \caption{Train MSE vs epoch}
        \label{fig:train_loss_grelu_relu_cifar}
    \end{subfigure} \hspace{-0.8cm}
    \begin{subfigure}[b]{0.35\textwidth}
         \centering
            \centering
    \includegraphics[width=\textwidth]{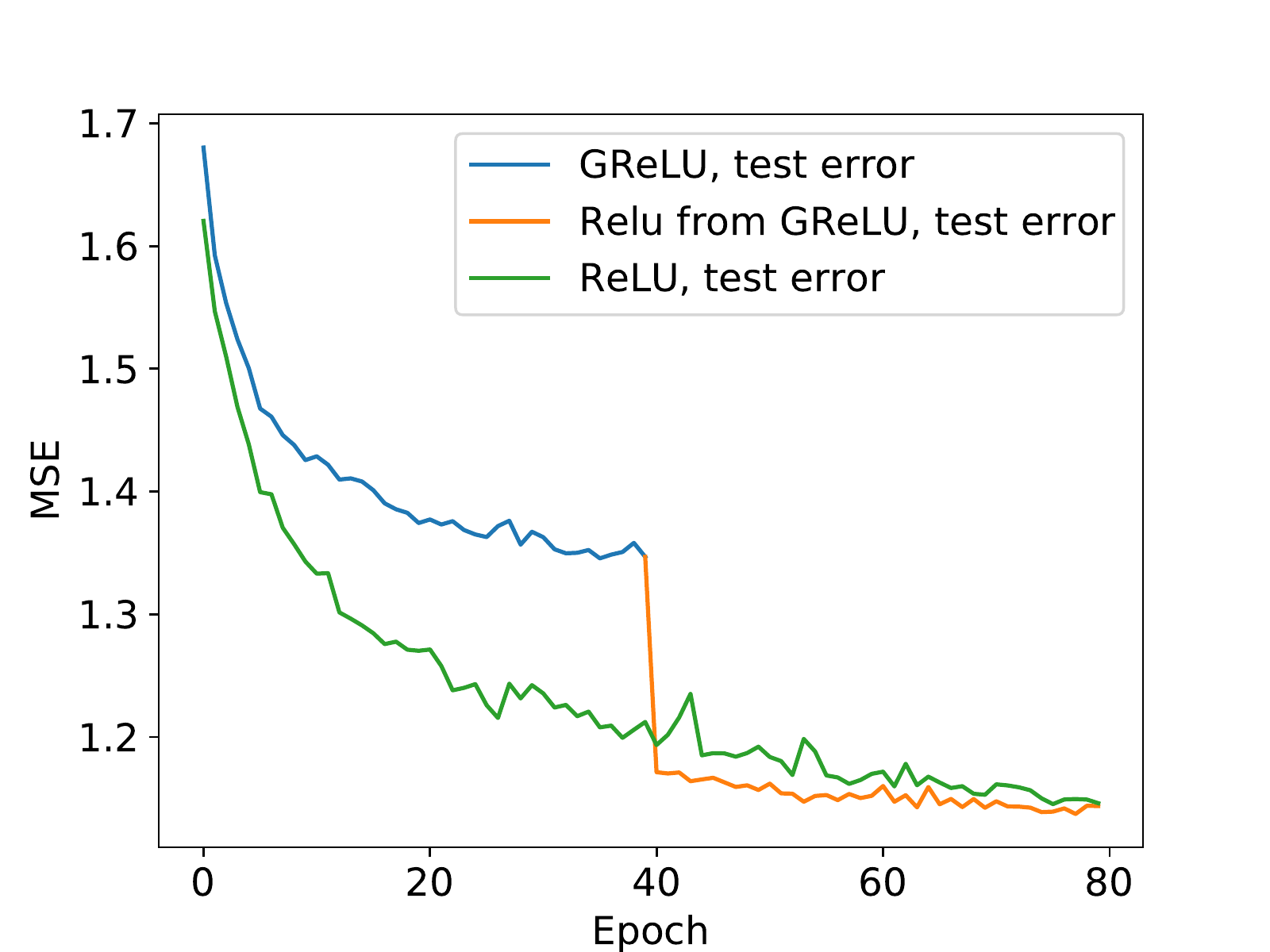}
    \caption{Test MSE vs epoch}
    \label{fig:test_loss_grelu_relu_cifar}
    \end{subfigure}
    \caption{MSE as a function of epochs for CIFAR-10 dataset. (a) Train with no augmentations. (b,c) Train and test losses for GReLU$\rightarrow$ReLU scheme with transform at epoch $40$.}
  \label{fig:grelu_relu_cifar}
\end{figure}
In this section, in order to answer \textbf{question 2}, we evaluate the training of ReLU and GReLU networks in more practical scenarios, by experimenting with sub-theoretic widths on two popular datasets from different domains, input dimensions and structures: \\
\textbf{CIFAR-10} the popular image classification dataset, with $n=50,000$ 
and $d=32\times 32 \times 3=3072$.
We  transform each classification to a scalar regression by calculating the squared loss between the one-hot $10$ dimensional vector encoding its correct label and the output logits of the network. \\
\begin{wraptable}{r}{0.35\textwidth}
  \small
\caption{CIFAR-$10$ MSE}
     \label{tab:experiment}
    \centering
    \begin{tabular}{|c|c|c|}
    \hline
         \textbf{Architecture} & \textbf{ReLU} & \textbf{GRelu} \\ \hline
         \makecell{Depth $6$, \\ Width $10$k} & $0.48$ & $0.02$  \\ \hline
         \makecell{Depth $7$, \\Width $10$k} & $0.27$ & $0.01$  \\ \hline
         \makecell{Depth $4$, \\Width $15$k} & $0.75$ & $0.03$  \\ \hline
    \end{tabular}
\end{wraptable}
\textbf{SGEMM}
the well known CPU kernel-performance regression dataset\footnote{\url{https://archive.ics.uci.edu/ml/datasets/SGEMM+GPU+kernel+performance}}~\citep{nugteren2015cltune}. It contains $n=241,600$ samples of dimension $d\!=\!18$, measuring CPU runtime of matrix products. 
SGEMM experiments demonstrate a similar behavior to CIFAR-10, and appear in Section \ref{sec:sgemm} of the supplementary material.

We initialize both networks with the same trainable weights $\bar{W}_0$ and additional $\Psi_{L}$ for GReLU,
and train them with a fixed learning rate of $10^{-4}$.
The results presented in Table~\ref{tab:experiment} show that all variants of GReLU converged to near zero losses. Conversely, ReLU variants improved with increased depth, but did not reach similar losses.
Possibly deeper ReLU networks with fine-tuned learning rates schemes would reach near zero losses as well. 
The loss progression
of different variants is plotted in Figure~\ref{fig:train_loss_cifar}. 
The GReLU networks (blue)
converge with higher rates as well.

We proceed to evaluate the \textbf{GReLU$\rightarrow$ReLU} scheme proposed in Section \ref{sec:relu_equivalence}.
 While this paper is focused on optimization, comparing the empirical \textit{generalization} of this scheme against traditional ReLU network is intriguing. For a fair and simple comparison with no additional hyper-parameters to tune, we train both networks with RandAugment~\citep{randaugment} to avoid overfitting. We train the variants for $80$ epochs, and transform the GReLU to its equivalent ReLU network exactly in the middle.
 Results on train and test set (additional $10,000$ examples) appear in Figures (\ref{fig:train_loss_grelu_relu_cifar},\ref{fig:test_loss_grelu_relu_cifar}).
 The train loss of the GReLU variant is continuous at epoch $40$, as guaranteed by Theorem~\ref{thm:equivalence2}, and comparable to the ReLU variant. However, on the test set a significant loss drop can be seen as a result of the transform, leading to an improved performance of the GReLU network. Since GReLU is transformed to a ReLU network with minimal $\ell_2$-norm, the transform effectively acts as a ridge-regularization step.  
 While the final loss is similar, we note that the GReLU training is almost twice as fast, as explained in Section~\ref{sec:implementation}. This can be used for further improvement.
\subsection{Training Dynamics}
To answer \textbf{question 3}, we start with a comparison of the optimization trajectories of the networks. Regular trajectories are favored due to typically smoother corresponding optimization landscapes. Acceleration methods like gradient-descent with momentum~\citep{qian1999momentum} provide more regular trajectories and are commonly used. During a full train with each network with $\eta=10^{-4}$, we compute the $\ell_2$ norm between consecutive gradients, $\|g_t-g_{t-1}\|_2$, and present the results in Figure~\ref{fig:train_loss_grad}. The gradient differences of the GReLU network are smaller than the ones of the ReLU network, and
gradually decrease along the convergence to the global optima.
This provides an additional empirical validation of the relatively larger learning rates allowed by our theory. In addition, this empirical evaluation matches the tighter bound on the gradients difference guaranteed by theorem~\ref{thm:NTK}. The gradient differences of the ReLU network increase monotonically along the training, with highly irregular trajectories that match the relatively high variation of the losses in Figure~\ref{fig:train_loss_sgemm3}.
\\
To better understand the behaviour of the ReLU network, we conduct an additional experiment, of calculating the change in its activation patterns along the training. More specifically, we present the mean Hamming distance between its activation patterns at consecutive epochs. The distance is computed across all ReLU activations over a selected validation set.
The results are illustrated in Figure~\ref{fig:successive_relu_hamming} and are quite surprising: a large portion of the activations, $20\%-25\%$, changes every epoch and keep increasing along the run. This result complements the previous one by showing some instability of the ReLU network. 
We point out that some level of change in activation patterns along the optimization possibly contributes to the generalization via implicit regularization.
In such case, a hybrid version of the GReLU network that allows gradual, controlled changes of its activation patterns is an interesting future direction to be pursued.

\section{Related Work} \label{sec:related}
\textbf{Neural network overparameterization} has been extensively studied under different assumptions over the training data and network architecture.
For the simplified case of well-separated data, \cite{ji2018risk} showed that a poly-logarithmic width is sufficient to guarantee convergence in two-layer networks, and \cite{chen2019much} extended the result to deep networks using the NTRF function class by~\cite{cao2019generalization}. For mixtures of well-separated distributions, ~\cite{li2018learning} proved that when training two-layer networks with width proportional to a high-order polynomial in the parameters of the problem with SGD, small train and generalization errors are achievable. Other common assumptions include Gaussian input~\citep{brutzkus2017globally,zhong2017learning,li2017convergence}, independet activations~\citep{choromanska2015loss,kawaguchi2016deep}, and output generated by a teacher-network~\citep{li2017convergence,zhang2019learning,brutzkus2017globally}. In our work, only a mild assumption of non-degenerate input is used.\\
overparameterization of special neural networks is varied across the analysis of linear networks \citep{kawaguchi2016deep,arora2018convergence,bartlett2018gradient}, smooth activations~\citep{du2018power,kawaguchi2019gradient}, and shallow networks~\citep{du2018power,oymak2020towards}. For deep linear networks, \cite{kawaguchi2016deep} proved that all local minima are global minima. \cite{arora2018convergence} proved  trainability for deep linear networks of minimal width, generalizing result by \cite{bartlett2018gradient} on residual linear networks. 
\cite{du2018power} considered neural networks with smooth activations, and showed that
in overparameterized shallow networks with quadratic activation, all local minima are global minima. 
\cite{kawaguchi2019gradient} showed that when only the output layer is trained, i.e. least squares estimate with gradient descent, tighter bounds on network width are achievable. 
Such training assumptions are rarely used in practice due to poor generalization. 
Considering one-hidden-layer networks, \cite{du2018gradient} showed that under the assumption of non-degenerate population Gram matrix, gradient-descent converges to a globally optimal solution.
\cite{oymak2020towards} stated the current tightest bound on the width of shallow networks for different activation functions. For the ReLU activation, the bound we derive for deep networks improves theirs over shallow networks as well.
\\
We focus on the general setting of deep networks equipped with ReLU activation, as typically used in practice. 
Similar works~\citep{zou2018stochastic,du2019gradient,allen2019convergence,zou2019improved} require unrealistic overparameterization conditions, namely a minimal width proportional to the size of the dataset in the power of $(8,24,26)$~\citep{zou2019improved,allen2019convergence,zou2018stochastic}, and the depth of the network in the power of $(12$-$38)$~~\citep{zou2019improved,zou2018stochastic} or exponential dependency~\citep{du2019gradient}.
Our work improves the current tightest bound~\citep{zou2019improved} by a factor of $O\paren{n^6 L^{11}}$, with a corresponding significant decrease in convergence time, from polynomial to logarithmic in the parameters of the problem.  
\\
\textbf{Optimization landscape of deep networks} under different assumptions is an active line of research.
A few works showed that ReLU networks have bad local minima~\citep{venturi2018spurious,safran2018spurious,swirszcz2016local,yun2017global} due to "dead regions", even for random input data and labels created according to a planted model~\citep{yun2017global,safran2018spurious},
a property which is not shared with smooth activations~\citep{liang2018understanding,du2018power,soltanolkotabi2018theoretical}. Specifically, \cite{hardt2016identity} showed
that deep linear residual networks have no spurious local minima.
\\
\textbf{Training dynamics of ReLU networks} and their activation patterns were studied by few recent works~\citep{hanin2019complexity,hanin2019deep,li2018learning}.
\cite{hanin2019deep} showed that the average number of patterns along the training is significantly smaller than the possible one and only weakly depends on the depth. We study a single random pattern, entirely decoupled from depth.
\cite{li2018learning} studied "pseudo gradients" for shallow networks, close in spirit with the fixed ReLU activations suggested by our work. They showed that unless the generalization error is small, pseudo gradients remain large, providing motivation for fixing activation patterns from a generalization perspective. Furthermore, they showed that pseudo gradients could be coupled with original gradients for most cases when the network is sufficiently overparameterized. \\
\textbf{Neural Tangent Kernel} regression has been shown to describe the evolution of infinitely wide fully-connected networks when trained with gradient descent~\citep{jacot2018neural}. Additional works provided corresponding kernels for residual~\citep{huang2020deep} and convolutional networks~\citep{li2019enhanced}, allowing to study the training dynamics of those in functional space. 
Few works~\citep{belkin2018understand,cao2019generalization,belkin2018overfitting,liang2020just} analyzed NTK for explaining properties for deep neural networks, most notably their trainability and generalizability. 
An important result by \cite{allen2019convergence} extended the NTK equivalence to deep networks of finite width. We provide an improved condition on the required width for this equivalence to hold.

\section{Conclusions and Future Work} \label{sec:conclusion}
In this paper we proposed a novel technique for studying overparameterized deep ReLU networks which leads to state-of-the-art guarantees on trainability. Both the required network size and convergence time significantly improve previous theory and can be tested in practice for the first time.  Further theoretical and empirical analysis provide insights on the optimization of neural networks with first-order methods.
Our analysis and proof technique can be extended to additional setups such as optimization with stochastic gradient-descent, cross-entropy loss, and other network architectures such as convolutional and residual networks similarly to~\citep{zhang2019training,allen2019convergence,du2018gradient}. 
Our encouraging empirical results motivate analyzing additional variants like Gated-CNNs and Gated-ResNets. 
Our improved finite-width NTK equivalence enables studying neural networks of practical sizes using kernel theory, potentially leading to a better understanding of learning dynamics and generalization. 
\section*{Acknowledgements}
We would like to thank Assaf Hallak, Tal Ridnik, Avi Ben-Cohen, Yushun Zhang and Itamar Friedman for their feedbacks and productive discussions. 
\bibliography{references}
\newpage
\begin{center} \begin{huge}{Appendix} \end{huge} \end{center}
\label{appendix}
\section{Additional experiments on SGEMM dataset} \label{sec:sgemm}
We start with training networks of width $4096$ and depth $10$ over SGEMM, and plot their losses along the training with varied learning rates in Figures \ref{fig:train_loss_cst} for the GReLU and \ref{fig:train_loss_relu} for the ReLU network. 
While for smaller rates results are quite similar, for larger ones the losses of the GReLU network decrease more monotonically and reach lower values. A comparison of high learning rates in Figure~\ref{fig:train_loss_sgemm3} shows that ReLU network oscillates with larger amplitudes towards higher loss values.
The losses of the GReLU network decrease more monotonically, and reach smaller values with larger learning rates. Both networks do not reach zero loss, as their widths are smaller than required by our theory.
We further test GReLU with a fixed learning rate for different widths and depths, and present the results in Figure~\ref{fig:train_loss_sizes}. 
While both increased depth and width improve the results, deeper and narrower networks perform better for the same number of parameters. This result is consistent with similar comparisons of ReLU networks in previous works.

\begin{figure}[h]
    \centering
     \begin{subfigure}[b]{0.35\textwidth}
         \centering
            \includegraphics[width=\textwidth]{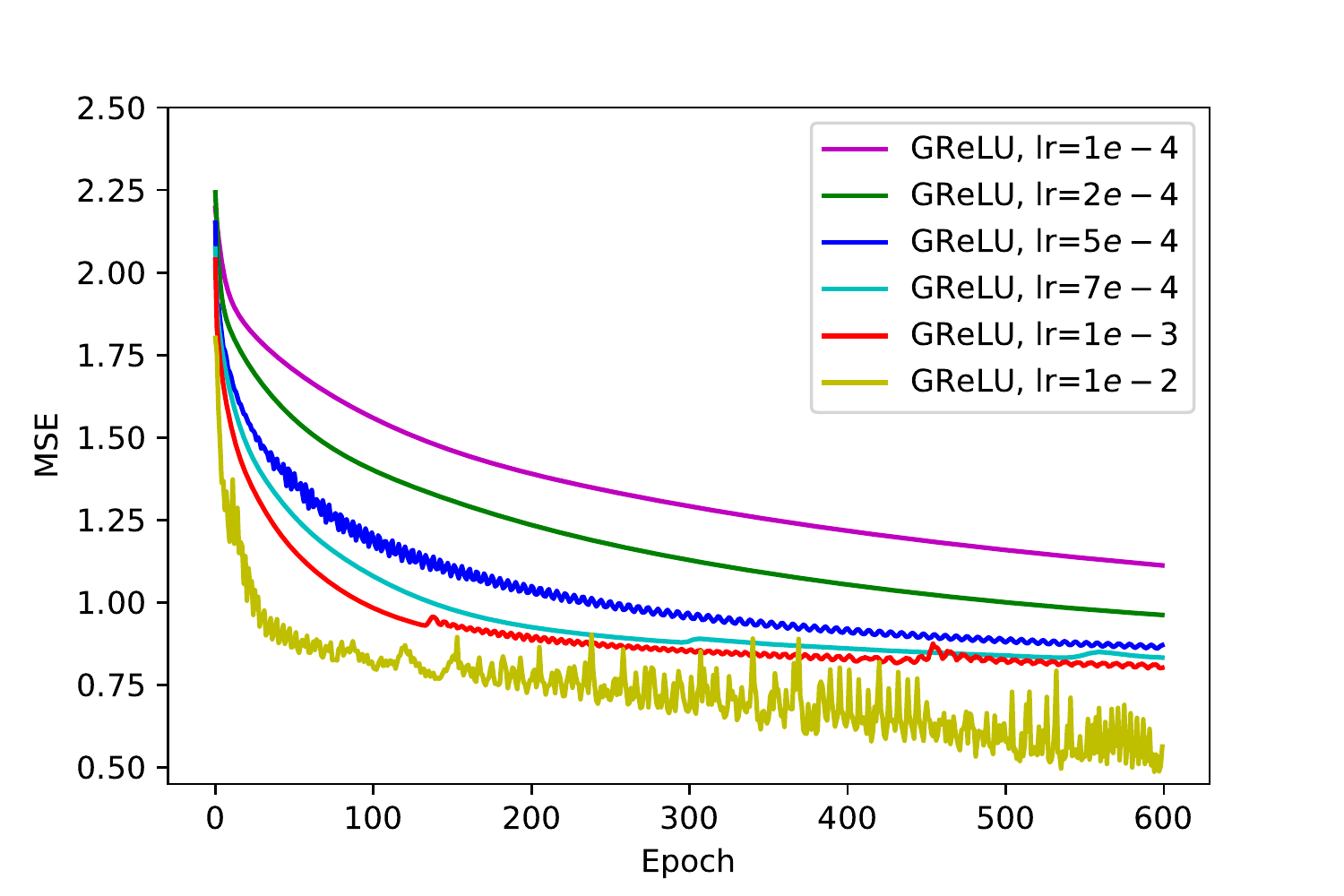}
            \caption{GReLU}
            \label{fig:train_loss_cst}
    \end{subfigure} \hspace{-0.8cm}
    \begin{subfigure}[b]{0.35\textwidth}
         \centering
            \centering
    \includegraphics[width=\textwidth]{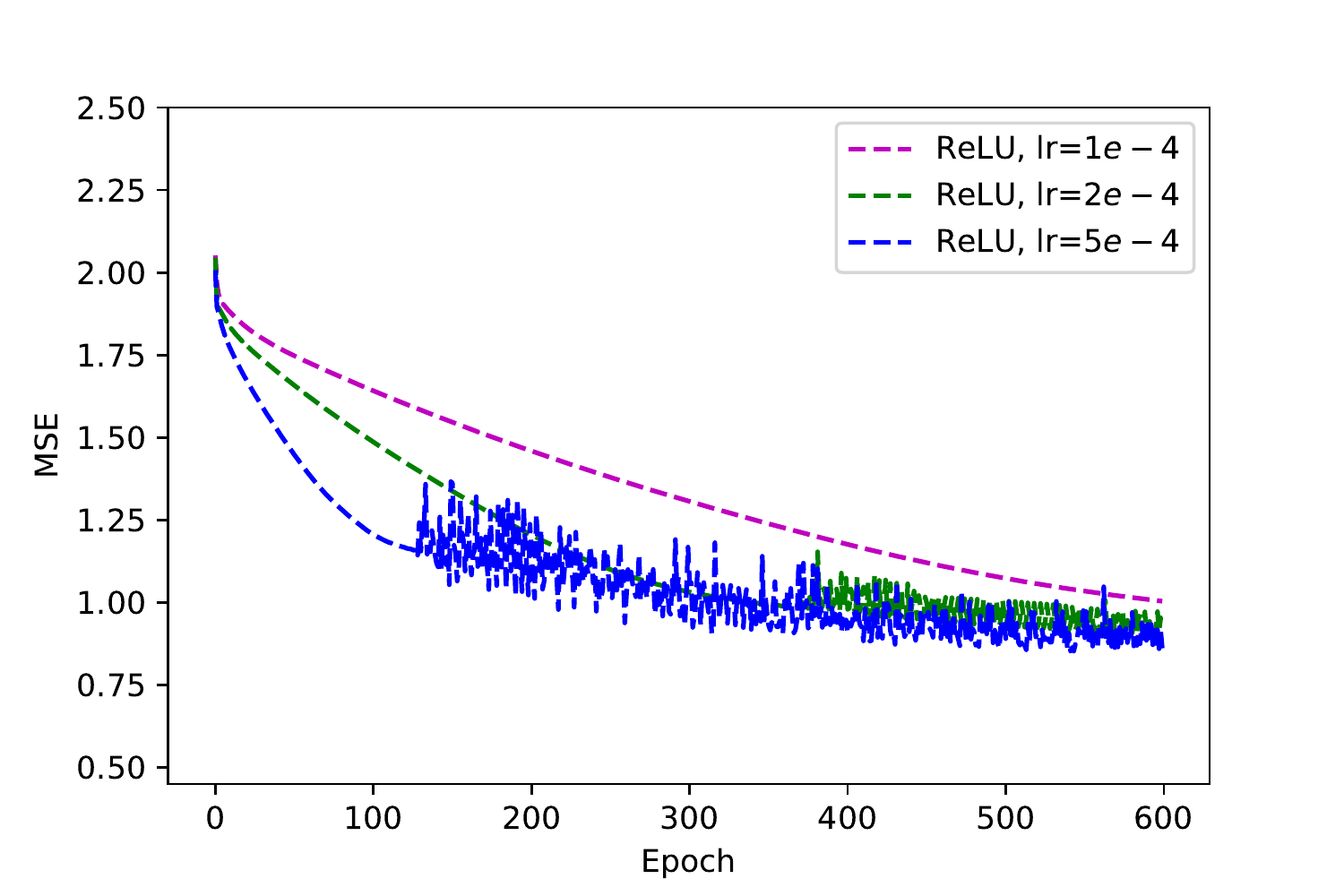}
    \caption{ReLU}
    \label{fig:train_loss_relu}
    \end{subfigure} \hspace{-0.8cm}
    \begin{subfigure}[b]{0.35\textwidth}
         \centering
            \centering
    \includegraphics[width=\textwidth]{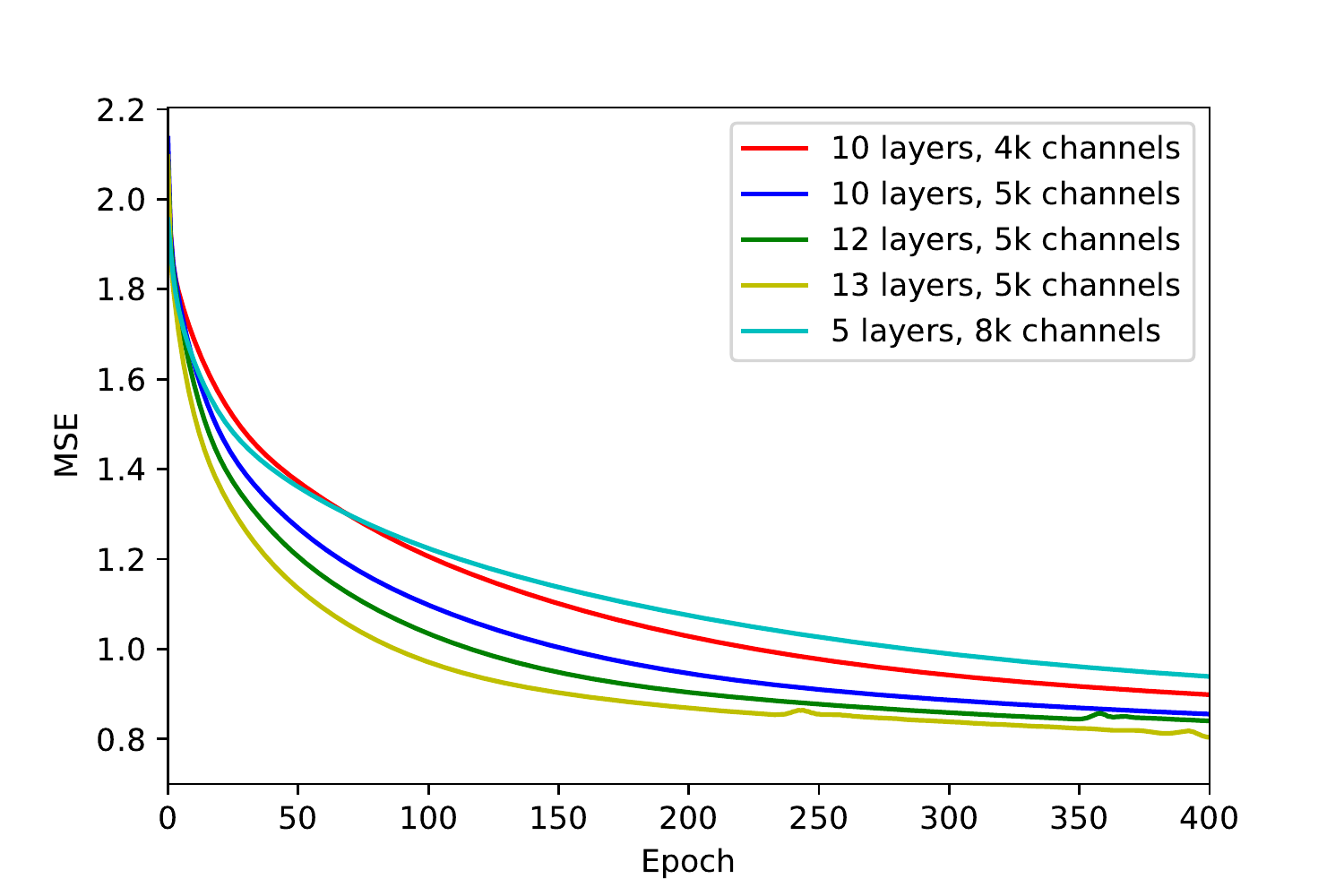}
    \caption{GReLU}
    \label{fig:train_loss_sizes}
    \end{subfigure}
    \caption{MSE as a function of epochs for SGEMM dataset}
\end{figure}

\begin{figure}[h]
     \centering
     \begin{subfigure}[b]{0.35\textwidth}
         \centering
            \includegraphics[width=\textwidth]{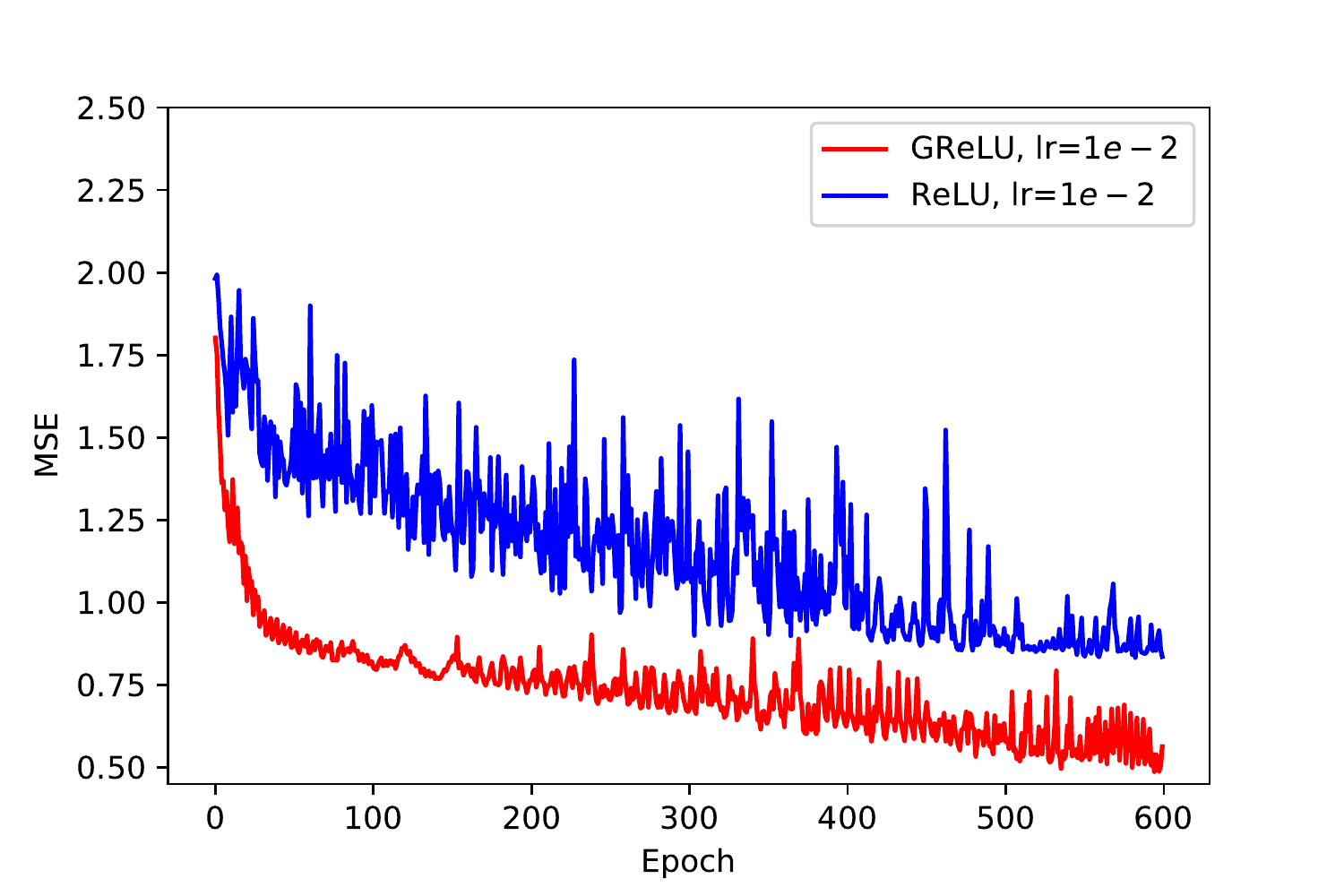}
        \caption{$\eta=10^{-2}$}
    \end{subfigure} \hspace{-0.8cm}
     \begin{subfigure}[b]{0.35\textwidth}
         \centering
            \includegraphics[width=\textwidth]{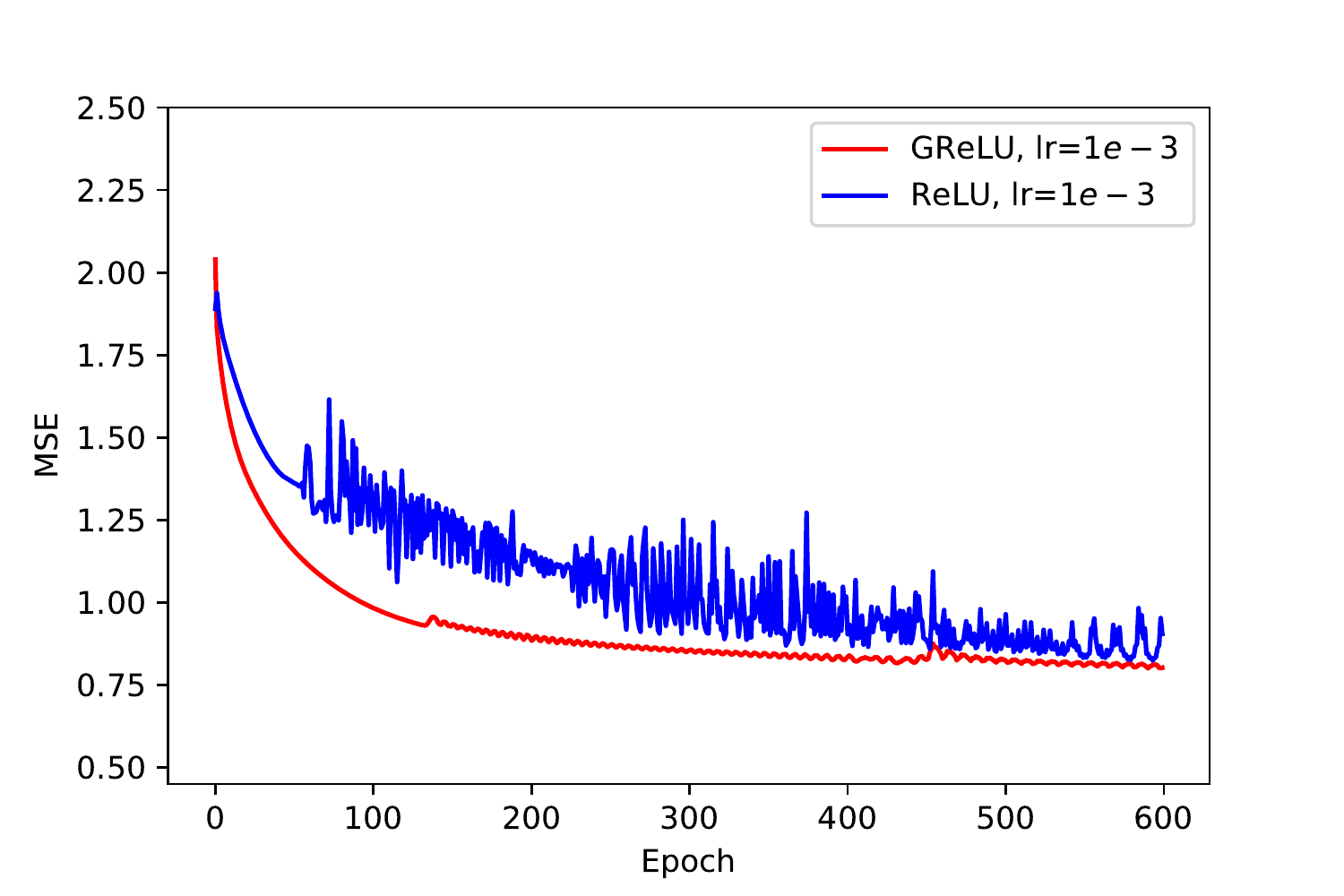}
        \caption{$\eta=10^{-3}$}
    \end{subfigure} \hspace{-0.8cm}
         \begin{subfigure}[b]{0.35\textwidth}
         \centering
            \includegraphics[width=\textwidth]{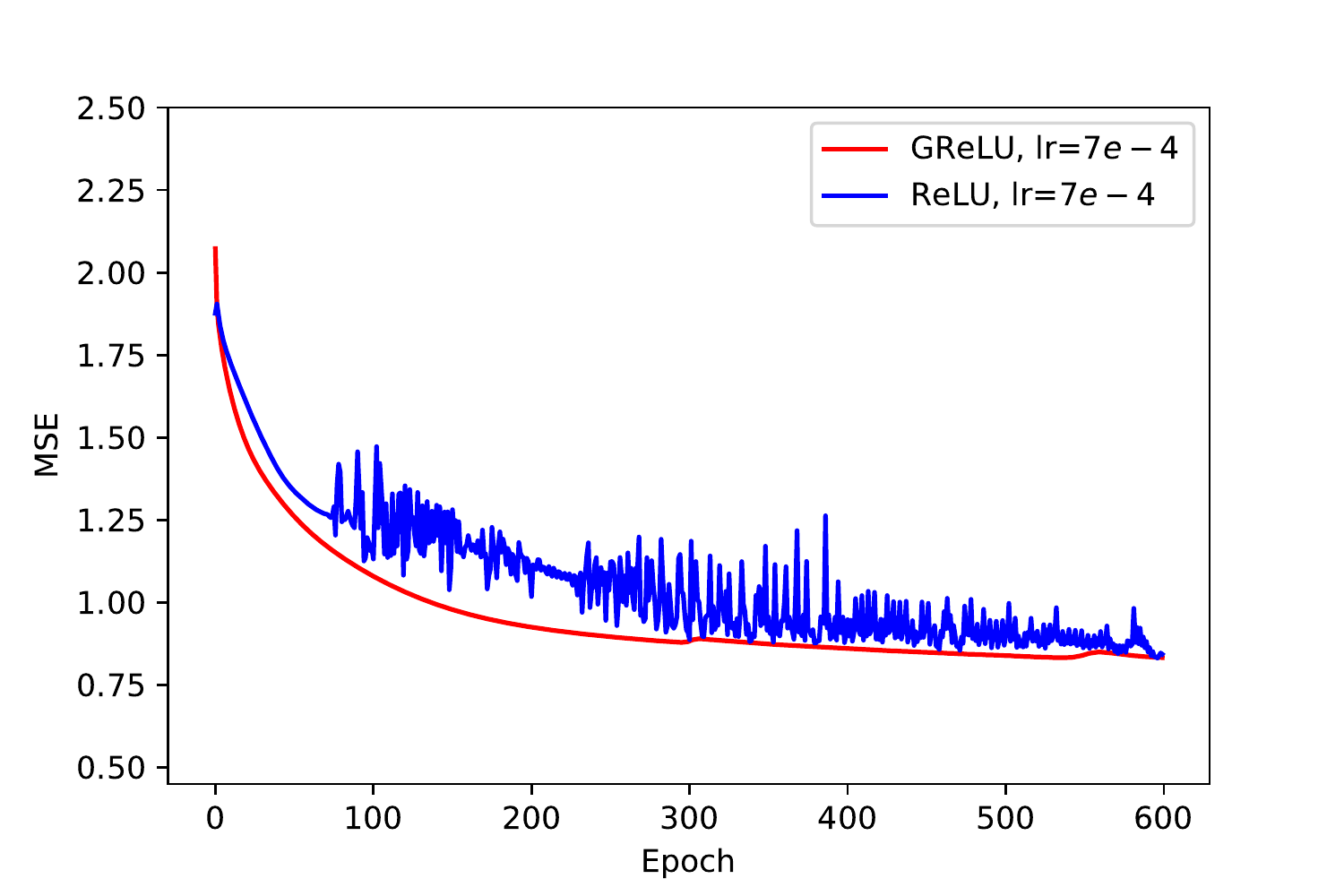}
        \caption{$\eta=7\cdot 10^{-4}$}
    \end{subfigure}
    \caption{MSE vs epochs per learning rate for GReLU and ReLU over SGEMM dataset} 
    \label{fig:train_loss_sgemm3}
\end{figure}

\section{Proofs for Section~\ref{sec:proof_sketch}} \label{sec:proofs_for_sketch}
\subsection{Proof of Lemma~\ref{lemma:diff:W}}
\begin{proof} 
By the definition of $W_t^i$ in (\ref{notation:parameter}) we have
\begin{align*}
W^i_{t+1} - W^i_t 
= & BD_L^iW_{t+1, L}\ldots D_1^i W_{t+1, 1}D_0^iC - BD_L^iW_{t, L}\ldots D_1^i W_{t, 1}D_0^iC \\
\overset{(a)}{=} & BD_L^i\left(W_{t,L} - \eta\nabla_L\ell(W_t)\right)\ldots D_1^i\left(W_{t,1} - \eta\nabla_1\ell(W_t)\right)D_0^iC - BD_L^iW_{t, L}\ldots D_1^i W_{t, 1}D_0^iC \\
\overset{(b)}{=} & \eta^2\Delta_{t,i} - \eta \underbrace{\sum_{k=1}^L BD_L^iW_{t,L}\ldots D_{k}^i\nabla_k\ell(W_t)D_{k-1}^iW_{t,k-1}\ldots D_1^iW_{t,1}D_0^iC,}_{:=Z_t^i}
\end{align*}
where (a) is due to the update of $W_{t+1,k} = W_{t,k} - \eta \nabla_k\ell(W_t)$ for any $k\in[L]$; (b) uses the definition of $\Delta_{t,i}$ in (\ref{eqn:delta}). In the above, we divide $W^i_{t+1} - W^i_t$ into two parts, with $\eta Z_t^i$ is proportional to $\eta$ and $\eta^2\Delta_{t,i}$ is proportional to $\eta^2$. We can simplify $Z_t^i$ as
\begin{align*}
Z_t^i 
\overset{(\ref{notation:gradient:parameter})}{=}& \sum_{k=1}^L F_{t,k+1}^i\nabla_k\ell(W_t)G_{t,k-1}^i \\
\overset{(\ref{eqn:grad})(\ref{eqn:gamma})}{=}& \sum_{k=1}^L F_{t,k+1}^i\left[F_{t,k+1}^i\right]^{\top}\left(W_t^i - \Phi_i\right)x_ix_i^{\top}\left[G_{t,k-1}^i\right]^{\top}G_{t,k-1}^i+\Gamma_{t,i}. 
\end{align*}
We complete the proof by plugging in the expression of $Z_t^i$ into the expression of $W_{t+1}^i - W_t^i$.
\end{proof}

\subsection{Proof of Lemma~\ref{lemm:descent}}
\begin{proof}
By the definition of loss function in (\ref{notation:loss}), we have
\begin{align}\label{eqn:temp-1}
    \nonumber\ell(W_{t+1}) - \ell(W_{t}) 
    \nonumber=& \sum_{i=1}^n \left\{ \frac{1}{2}\nm{(W_{t+1}^i - \Phi_i)x_i}^2 - \frac{1}{2}\nm{(W_t^i - \Phi_i)x_i}^2\right\}\\
    =& \sum_{i=1}^n \left\{ \langle (W_{t+1}^i - W_{t}^i)x_i,  (W_{t}^i-\Phi_i) x_i\rangle + \frac{1}{2}\nm{(W_t^i - W_{t+1}^i)x_i}^2\right\}.
\end{align}
For the first term in (\ref{eqn:temp-1}), by Lemma~\ref{lemma:diff:W}, we have
\begin{align}\label{eqn:temp-2}
\nonumber& \langle (W_{t+1}^i - W_t^i)x_i, (W_t^i - \Phi_i)x_i\rangle \\
\nonumber= & -\eta \sum_{k=1}^L \left\langle F_{k+1}^i\left[F_{k+1}^i\right]^{\top}\left(W_t^i - \Phi_i\right)x_ix_i^{\top}\left[G_{t,k-1}^i\right]^{\top}G_{t,k-1}^ix_i, (W_t^i - \Phi_i)x_i\right\rangle + \eta^2\langle \Delta_{t,i}x_i, (W_t^i - \Phi_i)x_i\rangle\\
\nonumber & -\eta\sum_{k=1}^L \sum_{j\neq i}\left\langle F_{t,k+1}^i\left[F_{t,k+1}^j\right]^{\top}\left(W_t^j - \Phi_j\right)x_jx_j^{\top}\left[G_{t,k-1}^j\right]^{\top}G_{t,k-1}^ix_i, (W_t^i - \Phi_i)x_i\right\rangle\\
\nonumber= & -\eta \sum_{k=1}^L \left\langle F_{k+1}^i\left[F_{k+1}^i\right]^{\top}\left(W_t^i - \Phi_i\right)x_ix_i^{\top}\left[G_{t,k-1}^i\right]^{\top}G_{t,k-1}^ix_i, (W_t^i - \Phi_i)x_i\right\rangle + \eta^2\langle \Delta_{t,i}x_i, (W_t^i - \Phi_i)x_i\rangle\\
& -\eta\sum_{k=1}^L \sum_{j\neq i} \left\langle G_{t,k-1}^jx_j, G_{t,k-1}^ix_i\right\rangle \left\langle (W_t^j - \Phi_j)x_j, F_{t,k+1}^j \left[F_{t,k+1}^i\right]^{\top}(W_t^i - \Phi_i)x_i\right\rangle,
\end{align}
where the last equality is due to the facts that $x_j^{\top}\left[G_{t,k-1}^j\right]^{\top}G_{t,k-1}^ix_i$ is a scalar and $\langle Ax, y\rangle = \langle x, A^\top y\rangle$ for any matrix $A$ and vectors $x,y$. On the other hand, the first term in (\ref{eqn:temp-2}) will be
\begin{align}\label{eqn:temp-3}
   \nonumber & \left\langle F_{k+1}^i\left[F_{k+1}^i\right]^{\top}\left(W_t^i - \Phi_i\right)x_ix_i^{\top}\left[G_{t,k-1}^i\right]^{\top}G_{t,k-1}^ix_i, (W_t^i - \Phi_i)x_i\right\rangle\\
    \nonumber =&\text{Tr}\left( x_i^\top(W_t^i - \Phi_i)^\top F_{k+1}^i\left[F_{k+1}^i\right]^{\top}\left(W_t^i - \Phi_i\right)x_ix_i^{\top}\left[G_{t,k-1}^i\right]^{\top}G_{t,k-1}^ix_i \right)\\
   \nonumber  \overset{(a)}{=}&\text{Tr}\left(\left[G_{t,k-1}^i\right]^{\top}G_{t,k-1}^ix_i x_i^\top(W_t^i - \Phi_i)^\top F_{k+1}^i\left[F_{k+1}^i\right]^{\top}\left(W_t^i - \Phi_i\right)x_ix_i^{\top}\right)\\
    \nonumber  \overset{(b)}{=}& \mbox{vec}^{\top}(\left(W_t^i - \Phi_i\right)x_ix_i^{\top}) \mbox{vec}\left(\left(\left[G_{t,k-1}^i\right]^{\top}G_{t,k-1}^ix_i x_i^\top(W_t^i - \Phi_i)^\top F_{k+1}^i\left[F_{k+1}^i\right]^{\top}\right)^\top\right)\\
    \nonumber  \overset{(c)}{=}& \mbox{vec}^{\top}(\left(W_t^i - \Phi_i\right)x_ix_i^{\top}) \mbox{vec}\left(F_{k+1}^i\left[F_{k+1}^i\right]^{\top}(W_t^i - \Phi_i)x_i x_i^\top \left[G_{t,k-1}^i\right]^{\top}G_{t,k-1}^i \right)\\
     \overset{(d)}{=}& \mbox{vec}^{\top}(\left(W_t^i - \Phi_i\right)x_ix_i^{\top}) \left(\left[G_{t,k-1}^i\right]^{\top}G_{t,k-1}\right)\otimes \left(F_{k+1}^i\left[F_{k+1}^i\right]^{\top}\right) \mbox{vec}\left((W_t^i - \Phi_i)x_i x_i^\top  \right),
\end{align}
where for any matrices $A,B,X$: (a) uses $\text{Tr}(AB) = \text{Tr}(BA)$; (b) uses $\text{Tr}(AB) = \mbox{vec}^{\top}(B) \mbox{vec}(A^\top)$; (c) uses $(AB)^\top = B^\top A^\top$; (d) uses $\mbox{vec}(AXB) = B^\top\otimes A \mbox{vec}(X)$.
Define
\begin{align}
    A_{t,k}^i :=& \left(\left[G_{t,k-1}^i\right]^{\top}G_{t,k-1}^i\right)\otimes\left(F_{t,k+1}^i\left[F_{t,k+1}^i\right]^{\top}\right),\label{eqn:def:A}\\
    \Xi_{t,k}^{i,j} :=& \langle G^j_{t,k-1}x_j, G^i_{t,k-1}x_i\rangle F_{t,k+1}^j\left[F_{t,k+1}^i\right]^{\top}, \label{eqn:def:xi}
\end{align}
where $\otimes$ is the Kronecker product. By plugging (\ref{eqn:temp-3}) into (\ref{eqn:temp-2}) and using the definitions in (\ref{eqn:def:A}) and (\ref{eqn:def:xi}), we have
\begin{align}\label{eqn:temp-4}
\nonumber& \sum_{i=1}^n \langle (W_{t+1}^i - W_t^i)x_i, (W_t^i - \Phi_i)x_i\rangle \\
\nonumber= & -\eta \sum_{i=1}^n\mbox{vec}^{\top}\left((W_t^i - \Phi_i)x_ix_i^{\top}\right)\left(\sum_{k=1}^L A_{t,k}^i\right)\mbox{vec}\left((W_t^i - \Phi_i)x_ix_i^{\top}\right) + \eta^2\sum_{i=1}^n\langle \Delta_{t,i}x_i, (W_t^i - \Phi_i)x_i\rangle\\
&- \eta\sum_{i=1}^n\sum_{k=1}^L\sum_{j\neq i} \left\langle (W_t^j - \Phi_j)x_j, \Xi_{t,k}^{i,j}(W_t^i - \Phi_i)x_i\right\rangle
\end{align}
By using the fact that $(A\otimes B)^\top = A^\top\otimes B^\top$, we know that $(A_{t,k}^i)^\top = A_{t,k}^i$, which means that $A_{t,k}^i$ is symmetric. Thus we have
\begin{align}\label{eqn:temp-5}
\mbox{vec}^{\top}\left((W_t^i - \Phi_i)x_ix_i^{\top}\right)\left(\sum_{k=1}^L A_{t,k}^i\right)\mbox{vec}\left((W_t^i - \Phi_i)x_ix_i^{\top}\right) \ge \nm{(W_t^i - \Phi_i)x_i}^2 \left(\sum_{k=1}^L \lambda_{\min}\left(A_{t,k}^i\right)\right).
\end{align}
For the last term in (\ref{eqn:temp-4}),
\begin{align}\label{eqn:temp-6}
\nonumber& - \eta\sum_{i=1}^n\sum_{k=1}^L\sum_{j\neq i} \left\langle (W_t^j - \Phi_j)x_j, \Xi_{t,k}^{i,j}(W_t^i - \Phi_i)x_i\right\rangle \\
\nonumber\overset{(a)}{\le} & \frac{\eta}{2}\sum_{i=1}^n\sum_{k=1}^L\sum_{j\neq i} \nm{\Xi_{t,k}^{i,j}}_2\left(\nm{(W_t^i - \Phi_i)x_i}^2 + \nm{(W_t^j - \Phi_j)x_j}^2\right)\\
\nonumber\overset{\text{Lemma}~\ref{tech:lemm:1}}{=} &  \eta\sum_{i=1}^n \sum_{k=1}^L\nm{(W_t^i - \Phi_i)x_i}^2\left(\sum_{j \neq i} \frac{\left\|\Xi_{t,k}^{i,j}\right\|_2+\left\|\Xi_{t,k}^{j,i}\right\|_2}{2}\right)\\
\overset{(b)}{=} &  \eta\sum_{i=1}^n \sum_{k=1}^L\nm{(W_t^i - \Phi_i)x_i}^2\left(\sum_{j \neq i} \left\|\Xi_{t,k}^{i,j}\right\|_2\right)
\end{align}
where (a) uses the fact that $\langle x, A y \rangle \le \|x\|\|Ay\| \le \|x\| \|A\|_2 \|y\| \le \frac{1}{2}\|A\|_2 (\|x\|^2 + \|y\|^2)$ for any matrix $A\in\R^{d\times d}$ and vectors $x, y \in\R^d$; (b) uses the fact of definition of $\Xi_{t,k}^{i,j}$ that $\|\Xi_{t,k}^{i,j}\|_2 = \|\Xi_{t,k}^{j,i}\|_2$. 
As a result, by (\ref{eqn:temp-4}) (\ref{eqn:temp-5}) (\ref{eqn:temp-6}) we have
\begin{align}\label{eqn:temp-7}
\nonumber&\sum_{i=1}^n\langle (W_{t+1}^i - W_t^i)x_i, (W_t^i - \Phi_i)x_i\rangle\\
\nonumber\leq & - \eta\sum_{i=1}^n \nm{(W_t^i - \Phi_i)x_i}^2 \left(\sum_{k=1}^L \lambda_{\min}\left(A_{t,k}^i\right) \right) + \eta^2\sum_{i=1}^n \langle \Delta_{t,i}, (W_t^i - \Phi_i)x_ix_i^{\top}\rangle \\
&+ \eta\sum_{i=1}^n \sum_{k=1}^L\nm{(W_t^i - \Phi_i)x_i}^2\left(\sum_{j \neq i} \left\|\Xi_{t,k}^{i,j}\right\|_2\right).
\end{align}
We now handle the second term in (\ref{eqn:temp-1}), by Lemma~\ref{lemma:diff:W} and using Cauchy-Schwarz inequality, we know 
\begin{align}\label{eqn:temp-8}
\nonumber& \frac{1}{2}\nm{(W_{t+1}^i - W_t^i)x_i}^2\\
\nonumber\leq & \frac{3\eta^2}{2}\nm{\sum_{k=1}^L F_{t,k+1}^i\left[F_{t,k+1}^i\right]^{\top}(W_t^i - \Phi_i)x_i \nm{G_{t,k-1}^i x_i}^2 }^2 + \frac{3\eta^4}{2}\nm{\Delta_{t,i}x_i}^2 \\
\nonumber& + \frac{3\eta^2}{2}\nm{\sum_{k=1}^L\sum_{j \neq i} F_{t,k+1}^i \left[F_{t,k+1}^j\right]^{\top}(W_t^j - \Phi_j)x_j\langle G_{t,k-1}^jx_j, G_{t,k-1}^ix_i\rangle}^2 \\
\nonumber\leq & \frac{3\eta^2 L}{2}\sum_{k=1}^L\nm{F_{t,k+1}^i\left[F_{t,k+1}^i\right]^{\top}(W_t^i - \Phi_i)x_i}^2 \nm{G_{t,k-1}^i x_i}^4 + \frac{3\eta^4}{2}\nm{\Delta_{t,i}x_i}^2 \\
& + \frac{3\eta^2 L n}{2}\sum_{k=1}^L\sum_{j \neq i} \nm{F_{t,k+1}^i \left[F_{t,k+1}^j\right]^{\top}(W_t^j - \Phi_j)x_j}^2\left|\langle G_{t,k-1}^jx_j, G_{t,k-1}^ix_i\rangle\right|^2.
\end{align}
Since by using $\|Ax\| \le \|A\|_2\|x\|$ for any matrix $A\in\R^{d\times d}$ and vector $x \in\R^d$, we have
\begin{align}\label{eqn:temp-9}
    \nonumber\nm{F_{t,k+1}^i\left[F_{t,k+1}^i\right]^{\top}(W_t^i - \Phi_i)x_i}^2\le& \nm{F_{t,k+1}^i\left[F_{t,k+1}^i\right]^{\top}}_2^2 \nm{(W_t^i - \Phi_i)x_i}^2 \\
    \le& \lambda^2_{\max}\left(F_{t,k+1}^i\left[F_{t,k+1}^i\right]^{\top}\right)\nm{(W_t^i - \Phi_i)x_i}^2,
\end{align}
and
\begin{align}\label{eqn:temp-10}
    \nonumber&\sum_{k=1}^L\sum_{j \neq i} \nm{F_{t,k+1}^i \left[F_{t,k+1}^j\right]^{\top}(W_t^j - \Phi_j)x_j}^2\left|\langle G_{t,k-1}^jx_j, G_{t,k-1}^ix_i\rangle\right|^2\\
    \nonumber\le& \sum_{k=1}^L\sum_{j \neq i}\nm{(W_t^j - \Phi_j)x_j}^2 \nm{F_{t,k+1}^i \left[F_{t,k+1}^j\right]^{\top}}_2^2\left|\langle G_{t,k-1}^jx_j, G_{t,k-1}^ix_i\rangle\right|^2\\
    =&\sum_{k=1}^L\sum_{j \neq i}\nm{(W_t^j - \Phi_j)x_j}^2 \nm{\Xi_{t,k}^{i,j}}_2^2.
\end{align}
Hence, plugging (\ref{eqn:temp-9}) and (\ref{eqn:temp-10}) into (\ref{eqn:temp-8}), we have
\begin{align}\label{eqn:temp-11}
\nonumber&\frac{1}{2}\sum_{i=1}^n \nm{(W_{t+1}^i - W_t^i)x_i}^2 \\
\nonumber\leq & \frac{3\eta^2 L}{2}\sum_{i=1}^n \nm{(W_t^i - \Phi_i)x_i}^2 \left( \sum_{k=1}^L\lambda^2_{\max}\left(F_{t,k+1}^i\left[F_{t,k+1}^i\right]^{\top}\right)\nm{G_{t,k-1}^i x_i}^4\right) + \frac{3\eta^4}{2}\nm{\Delta_{t,i}x_i}^2 \\
\nonumber& + \frac{3\eta^2 L n}{2}\sum_{i=1}^n \sum_{k=1}^L\sum_{j \neq i}\nm{(W_t^j - \Phi_j)x_j}^2 \nm{\Xi_{t,k}^{i,j}}_2^2\\
\nonumber\leq & \frac{3\eta^2 L}{2}\sum_{i=1}^n \nm{(W_t^i - \Phi_i)x_i}^2 \left( \sum_{k=1}^L\lambda^2_{\max}\left(F_{t,k+1}^i\left[F_{t,k+1}^i\right]^{\top}\right)\nm{G_{t,k-1}^i x_i}^4\right) + \frac{3\eta^4}{2}\nm{\Delta_{t,i}x_i}^2 \\
& + \frac{3\eta^2 L n}{2}\sum_{i=1}^n \nm{(W_t^i - \Phi_i)x_i}^2 \sum_{k=1}^L\sum_{j \neq i} \nm{\Xi_{t,k}^{i,j}}_2^2,
\end{align}
where the last inequality uses
\begin{align*}
    \sum_{i=1}^n \sum_{k=1}^L\sum_{j \neq i}\nm{(W_t^j - \Phi_j)x_j}^2 \nm{\Xi_{t,k}^{i,j}}_2^2
    =& \sum_{k=1}^L \sum_{i=1}^n \sum_{j \neq i}\nm{(W_t^j - \Phi_j)x_j}^2 \nm{\Xi_{t,k}^{i,j}}_2^2\\
    \nonumber\overset{\text{Lemma}~\ref{tech:lemm:1}}{=}& \sum_{k=1}^L \sum_{i=1}^n \nm{(W_t^i - \Phi_i)x_i}^2 \sum_{j \neq i} \nm{\Xi_{t,k}^{j,i}}_2^2\\
   = &  \sum_{i=1}^n \nm{(W_t^i - \Phi_i)x_i}^2 \sum_{k=1}^L \sum_{j \neq i} \nm{\Xi_{t,k}^{j,i}}_2^2\\
   = &  \sum_{i=1}^n \nm{(W_t^i - \Phi_i)x_i}^2 \sum_{k=1}^L \sum_{j \neq i} \nm{\Xi_{t,k}^{i,j}}_2^2,
\end{align*}
where the last equality is due to $\nm{\Xi_{t,k}^{j,i}}_2 = \nm{\Xi_{t,k}^{i,j}}_2$. We complete the proof by using Young's inequality with $a_i>0$ to upper bound
\begin{align}\label{eqn:temp-12}
\langle \Delta_{t,i}, (W_t^i - \Phi_i)x_i\rangle \leq \frac{1}{2a_i}\nm{\Delta_{t,i}x_i}^2 + \frac{a_i}{2}\nm{(W_t^i - \Phi_i)x_i}^2
\end{align}
and using the property of the eigenvalue of the Kronecker product of two symmetric matrices
\begin{align}\label{eqn:temp-13}
\lambda_{\min}\left(A_{t,k}^i\right) = \lambda_{\min}\left(F_{t,k+1}^i\left[F_{t,k+1}^i\right]^\top\right)\lambda_{\min}\left(\left[G_{t,k-1}^i\right]^\top G_{t,k-1}^i\right).
\end{align}
\end{proof}

\subsection{Proof for Lemma~\ref{lemma:linear_rate}}
\begin{proof} \label{proof:lemma_linear_rate}
Using the assumption in (\ref{eqn:eig-lower-bound}), (\ref{eqn:eig-upper-bound}), (\ref{eqn:cross-upper-bound}) and (\ref{eqn:gamma-upper-bound}), by setting $a_i = \beta^4L^2$, 
\begin{align*}
\Lambda_i + \eta^2 a_i \le & -2\eta L \alpha^2  + 2\eta  L(n-1)\gamma \beta^2 + 3\eta^2L^2\beta^2\beta_x +  3\eta^2L^2n(n-1)\gamma^2\beta^4 + \eta^2L^2\beta^4\\
\le & -2\eta L \alpha^2  - 2\eta  L\gamma \beta^2 +  \eta  L \alpha^2  + 3\eta^2L^2\beta^2\beta_x - 3\eta^2L^2n\gamma^2\beta^4 +  \frac{3}{4}\eta^2L^2\alpha^2  + \eta^2L^2\beta^4\\
\le & -\eta L \alpha^2    + 3\eta^2L^2\beta^2\beta_x  +  \frac{3}{4}\eta^2L^2\alpha^2  + \eta^2L^2\beta^4.
\end{align*}
By choosing step size $\eta$ as 
$\eta \le \min\left(\frac{ \alpha^2}{12\beta^2\beta_xL}, \frac{1}{3 L}, \frac{\alpha^2}{ 4\beta^4 L}, \frac{1}{ \beta^2 L} \right)$, we have $ \Lambda_i \leq -\frac{3\eta\alpha^2 L}{4}$, and therefore
\begin{align}\label{eqn:lemm:descent:2:key:1}
    \ell(W_{t+1}) - \ell(W_t) \leq -\frac{3\eta\alpha^2 L}{4}\ell(W_t) + \frac{2\eta^2}{\beta^4L^2}\sum_{i=1}^n \nm{\Delta_{t,i}x_i}^2.
\end{align}
Next, we need to bound $\nm{\Delta_{t,i}x_i}^2$. Since $\nm{\Delta_{t,i}x_i}\le \nm{\Delta_{t,i}}_2 \nm{x_i} = \nm{\Delta_{t,i}}_2$, we want to bound $\nm{\Delta_t^i}_2$. To this end, we first bound $\nm{\nabla_k\ell(W_t)}_2^2$, i.e.
\begin{align}\label{eqn:lemm:descent:2:key:2}
\nonumber&\nm{\nabla_k\ell(W_t)}_2^2 \\
\nonumber\overset{(a)}{=}& \nm{\nabla_k\ell(W_t)^\top\nabla_k\ell(W_t)}_2\\
\nonumber\overset{(\ref{eqn:grad})}{=}&\left\|\left(\sum_{i=1}^n \left[F^i_{t,k+1}\right]^{\top}(W_t^i - \Phi_i)x_ix_i^{\top} \left[G^i_{t,k-1}\right]^{\top}\right)^\top\left(\sum_{i=1}^n \left[F^i_{t,k+1}\right]^{\top}(W_t^i - \Phi_i)x_ix_i^{\top} \left[G^i_{t,k-1}\right]^{\top}\right)\right\|_2\\
\nonumber\overset{(b)}{=}&\left\|\left(\sum_{i=1}^n G^i_{t,k-1} x_ix_i^\top(W_t^i - \Phi_i)^{\top} F^i_{t,k+1}\right)\left(\sum_{i=1}^n \left[F^i_{t,k+1}\right]^{\top}(W_t^i - \Phi_i)x_ix_i^{\top} \left[G^i_{t,k-1}\right]^{\top}\right)\right\|_2\\
\nonumber\overset{(c)}{\le}&\sum_{i=1}^n  \left\|G^i_{t,k-1} x_ix_i^\top(W_t^i - \Phi_i)^{\top} F^i_{t,k+1} \left[F^i_{t,k+1}\right]^{\top}(W_t^i - \Phi_i)x_ix_i^{\top} \left[G^i_{t,k-1}\right]^{\top}\right\|_2\\
& + \sum_{i=1}^n \sum_{j\ne i}\left\| G^i_{t,k-1} x_ix_i^\top(W_t^i - \Phi_i)^{\top} F^i_{t,k+1} \left[F^j_{t,k+1}\right]^{\top}(W_t^j - \Phi_j)x_j x_j^{\top} \left[G^j_{t,k-1}\right]^{\top}\right\|_2,
\end{align}
where (a) is due to $\|A\|_2^2 = \|A^\top A\|_2$ for matrix $A\in\R^{m\times m}$; (b) uses the facts that $(A+B)^\top=A^\top + B^\top$ and $(AB)^\top = B^\top A^\top$ for matrices $A, B\in\R^{m\times m}$; (c) uses $\|A+B\|_2 \le \|A\|_2 + \|B\|_2$ and $\|A\|_2 = \|A^\top\|_2$.
Since $\|A^\top A\|_2 = \|A\|_2^2\le\|A\|_F^2 = \text{tr}(A^\top A)$, then
\begin{align}\label{eqn:lemm:descent:2:key:3}
    \nonumber&\left\|G^i_{t,k-1} x_ix_i^\top(W_t^i - \Phi_i)^{\top} F^i_{t,k+1} \left[F^i_{t,k+1}\right]^{\top}(W_t^i - \Phi_i)x_ix_i^{\top} \left[G^i_{t,k-1}\right]^{\top}\right\|_2\\
    \nonumber\le& \text{Tr}\left( G^i_{t,k-1} x_ix_i^\top(W_t^i - \Phi_i)^{\top} F^i_{t,k+1} \left[F^i_{t,k+1}\right]^{\top}(W_t^i - \Phi_i)x_ix_i^{\top} \left[G^i_{t,k-1}\right]^{\top} \right)\\
   \nonumber  \overset{(a)}{=}&\text{Tr}\left(\left[G_{t,k-1}^i\right]^{\top}G_{t,k-1}^ix_i x_i^\top(W_t^i - \Phi_i)^\top F_{k+1}^i\left[F_{k+1}^i\right]^{\top}\left(W_t^i - \Phi_i\right)x_ix_i^{\top}\right)\\
    \nonumber  \overset{(b)}{=}& \mbox{vec}^{\top}(\left(W_t^i - \Phi_i\right)x_ix_i^{\top}) \mbox{vec}\left(\left(\left[G_{t,k-1}^i\right]^{\top}G_{t,k-1}^ix_i x_i^\top(W_t^i - \Phi_i)^\top F_{k+1}^i\left[F_{k+1}^i\right]^{\top}\right)^\top\right)\\
    \nonumber  \overset{(c)}{=}& \mbox{vec}^{\top}(\left(W_t^i - \Phi_i\right)x_ix_i^{\top}) \mbox{vec}\left(F_{k+1}^i\left[F_{k+1}^i\right]^{\top}(W_t^i - \Phi_i)x_i x_i^\top \left[G_{t,k-1}^i\right]^{\top}G_{t,k-1}^i \right)\\
    \nonumber \overset{(d)}{=}& \mbox{vec}^{\top}(\left(W_t^i - \Phi_i\right)x_ix_i^{\top}) \left(\left[G_{t,k-1}^i\right]^{\top}G_{t,k-1}\right)\otimes \left(F_{k+1}^i\left[F_{k+1}^i\right]^{\top}\right) \mbox{vec}\left((W_t^i - \Phi_i)x_i x_i^\top  \right)\\
     \nonumber \le &  {\nm{\left[G_{t,k-1}^i\right]^{\top}G_{t,k-1}}_2\nm{F_{k+1}^i\left[F_{k+1}^i\right]^{\top}}_2 \nm{\left(W_t^i - \Phi_i\right)x_i}^2}\\
     \overset{(\ref{eqn:eig-upper-bound})}{\le} &  \beta^2 \nm{\left(W_t^i - \Phi_i\right)x_i}^2,
\end{align}
where for any matrices $A,B,X$: (a) uses $\text{Tr}(AB) = \text{Tr}(BA)$; (b) uses $\text{Tr}(AB) = \mbox{vec}^{\top}(B) \mbox{vec}(A^\top)$; (c) uses $(AB)^\top = B^\top A^\top$; (d) uses $\mbox{vec}(AXB) = B^\top\otimes A \mbox{vec}(X)$.

\begin{align}\label{eqn:lemm:descent:2:key:4}
\nonumber&\sum_{i=1}^n \sum_{j\ne i}\left\| G^i_{t,k-1} x_ix_i^\top(W_t^i - \Phi_i)^{\top} F^i_{t,k+1} \left[F^j_{t,k+1}\right]^{\top}(W_t^j - \Phi_j)x_j x_j^{\top} \left[G^j_{t,k-1}\right]^{\top}\right\|_2\\
\nonumber\overset{(a)}{=}&\sum_{i=1}^n \sum_{j\ne i}\left|x_i^\top(W_t^i - \Phi_i)^{\top} F^i_{t,k+1} \left[F^j_{t,k+1}\right]^{\top}(W_t^j - \Phi_j)x_j\right|\left\| G^i_{t,k-1} x_i x_j^{\top} \left[G^j_{t,k-1}\right]^{\top}\right\|_2\\
\nonumber\overset{(b)}{\le}&\sum_{i=1}^n \sum_{j\ne i}\left\|(W_t^i - \Phi_i)x_i\right\| \left\|F^i_{t,k+1} \left[F^j_{t,k+1}\right]^{\top}\right\|_2 \left\|(W_t^j - \Phi_j)x_j\right\| \left \| G^i_{t,k-1} x_i\right\|\left\| G^j_{t,k-1} x_j \right \|\\
\nonumber\overset{(\ref{eqn:cross-upper-bound})}{\le}&\gamma\beta^2\sum_{i=1}^n \sum_{j\ne i}\left\|(W_t^i - \Phi_i)x_i\right\|  \left\|(W_t^j - \Phi_j)x_j\right\| \\
\nonumber\overset{(c)}{\le}&\frac{\gamma\beta^2}{2}\sum_{i=1}^n \sum_{j\ne i}\left(\left\|(W_t^i - \Phi_i)x_i\right\|^2 + \left\|(W_t^j - \Phi_j)x_j\right\|^2\right) \\
\nonumber\le&n\gamma\beta^2\sum_{i=1}^n \left\|(W_t^i - \Phi_i)x_i\right\|^2  \\
=&  n\gamma\beta^2\ell(W_t),
\end{align}
where (a) uses $\|c A\|_2 = |c|\|A\|_2$ for a matrix $A$ and a scalar $c$; (b) uses the facts that $|x^\top A y| \le \|x\|\|Ay\|\le \|x\| \|A\|_2 \|y\|$ and $\|x y^\top\|_2 = \|x \otimes y^\top\|_2 = \|x\|\|y\|$; (c) uses Young's inequality. Therefore, by (\ref{eqn:lemm:descent:2:key:2}) (\ref{eqn:lemm:descent:2:key:3}) and (\ref{eqn:lemm:descent:2:key:4}) we have
\begin{align}\label{eqn:lemm:descent:2:key:5}
\nm{\nabla_k\ell(W_t)}_2^2 \le (\beta^2 + n\gamma\beta^2)\ell(W_t)
\end{align}
We can rewrite $\Delta_{t,i}$ in (\ref{eqn:delta}) as
\begin{align*}
\Delta_{t,i} 
=& \sum_{s=2}^L(-\eta)^{s-2}\sum_{L\ge k_1 > k_2>\ldots>k_s \ge 1} F_{t,k_1+1}^i\left(\prod_{\ell=1}^s \nabla_{k_{\ell}}\ell(W_t)Z^{t,i}_{k_{\ell}-1,k_{\ell+1}}\right)G^i_{t,k_s-1}, {\text{~where~} Z_{k_s-1,k_{s+1}}^{t,i} = I.}
\end{align*}
Therefore, according to Lemma~\ref{lemma:z-ka-kb-t}, for any $\theta \in (0, 1/2)$,  with a probability $ 1- 4L^2\exp(-\theta^2 m/[16L^2])$, for any $1 \leq k_b < k_b \leq L$, we have
\[
    \|Z_{k_a, k_b}^t\|_2 \leq 4\sqrt{L}  e^{\theta/2}\theta^{-1/2}
\]

with a probability $1 - L^2\sqrt{m}\exp\left(-\theta m/[4L] + 6\sqrt{m}\right)$
\begin{align}\label{eqn:lemm:descent:2:key:6}
\nonumber\|\Delta_{t,i}\|_2 
\nonumber\overset{(a)}{\le} & \sum_{s=2}^L \eta^{s-2}\sum_{L\ge k_1 > k_2>\ldots>k_s \ge 1} \|F_{t,k_1+1}^i\|_2\|G^i_{t,k_s-1}\|_2 \left(\prod_{\ell=1}^s \|\nabla_{k_{\ell}}\ell(W_t)\|_2\|Z^{t,i}_{k_{\ell}-1,k_{\ell+1}}\|_2\right)\\
\nonumber\overset{(b)}{\le} & \sum_{s=2}^L \eta^{s-2} (_s^L) \beta \left( \sqrt{(\beta^2 + n\gamma\beta^2)\ell(W_t)} \times 4\sqrt{L}  e^{\theta/2}\theta^{-1/2}\right)^s\\
\nonumber\overset{(c)}{\le} & \sum_{s=2}^L \beta \eta^{s-2} \left( 2\sqrt{2}\sqrt{L}  e^{\theta/2}\theta^{-1/2}\beta \sqrt{\ell(W_t)}\right)^s\\
\nonumber\overset{(d)}{\le} & 8L  e^{\theta}\theta^{-1}   \beta^3\ell(W_t)\sum_{s=0}^{L-2} \eta^{s} \left(2\sqrt{2}\sqrt{L}  e^{\theta/2}\theta^{-1/2}\beta \sqrt{\ell_0}\right)^s\\
\overset{(e)}{\le} & \frac{8L  e^{\theta}\theta^{-1}   \beta^3\ell(W_t)}{1 -  2\sqrt{2}\eta\sqrt{L}  e^{\theta/2}\theta^{-1/2}\beta \sqrt{\ell_0}}
\end{align}
where (a) uses $\|AB\|_2\le\|A\|_2\|B\|_2$ and $\|A+B\|_2\le\|A\|_2+\|B\|_2$ for any matrices $A,B\in\R^{m\times m}$; (b) uses (\ref{eqn:eig-upper-bound}), (\ref{eqn:lemm:descent:2:key:5}) and Lemma~\ref{lemma:z-ka-kb-t} that $\|Z_{k_a, k_b}^t\|_2 \leq 5e^{\theta/2}m^{1/4}$; (c) uses the facts that $(_s^L) = \frac{L!}{(L-s)!s!}\le\frac{L!}{(L-s)!} = L(L-1)\dots(L-s+1)\le L^s$ and (\ref{eqn:gamma-upper-bound}) with $\alpha\le\beta$; (d) uses $\ell(W_t)\le\ell_0$; (e) uses $\eta<1/(2\sqrt{2}\sqrt{L}  e^{\theta/2}\theta^{-1/2}\beta \sqrt{\ell_0})$.
Therefore, by (\ref{eqn:lemm:descent:2:key:1}) and (\ref{eqn:lemm:descent:2:key:6}) we have
\begin{align*}
    \ell(W_{t+1}) - \ell(W_t) \leq -\frac{3\eta\alpha^2 L}{4}\ell(W_t) + \frac{128 n \eta^2  e^{2\theta}\theta^{-2}   \beta^2\ell^2(W_t)}{1 -  2\sqrt{2}\eta\sqrt{L}  e^{\theta/2}\theta^{-1/2}\beta \sqrt{\ell_0}}.
\end{align*}
We complete the proof by using the selection of $\eta$.
\end{proof}

\subsection{Proof for Theorem~\ref{thm:linear_rate_convergence}}
\begin{proof} \label{proof:thm_linear_rate}
We start by restricting the network's width and depth to values for which assumption~(\ref{eqn:gamma-upper-bound}) holds,
that is $\beta^2 \gamma n  \leq \alpha^2/2$. Revisiting~(\ref{eqn:gamma_value}), we need condition,
\begin{eqnarray*}
    \gamma = C''\left(\left(L^{3/2}\tau \right)^2 + \delta\left(\frac{5}{6}\right)^L + \frac{1}{m^{1/2}}\right)=O\left(\frac{1}{n} \right) 
\end{eqnarray*}
To meet the above condition, all the following should hold,
\begin{eqnarray} \label{eqn:conditions_gamma}
    L^{3/2}\tau = O\left(\frac{1}{\sqrt{n}}\right), \quad L=\Omega(\log n), \quad m = \Omega(n^2).
\end{eqnarray}
We thus need to bound $\tau$. According to the setting, we know the number of iterations is
\begin{eqnarray} \label{eqn:niters}
    T = \frac{2}{\eta\alpha^2 L}\log\frac{\ell_0}{\epsilon}
\end{eqnarray}
On the other hand,
\begin{align*}
    \tau \le \eta \sum_{t=1}^{T-1}\max_{k\in[L]}\|\nabla_k\ell(W_t)\| 
    \le \eta \beta  \sum_{t=1}^{T-1} \sqrt{2\ell(W_t)}
    \le  \eta \beta  T \sqrt{2\ell_0}
    = \frac{2 \beta  \sqrt{2\ell_0}}{\alpha^2 L}\log\frac{\ell_0}{\epsilon}
\end{align*}
where the second inequality uses (\ref{eqn:lemm:descent:2:key:5}) which assumes (\ref{eqn:grad}) and (\ref{eqn:cross-upper-bound}). 
Similarly to (\ref{eqn:min_max_G}), the initial output of the network can be bounded with high probability in order to bound $\ell_0$. With a probability $ 1- 4L^2\exp(-\theta^2 m/[8L^2] +3d_x+3d_y)$
\begin{eqnarray} \label{eqn:initial_loss}
\ell_0=\frac{1}{2}\sum_{i=1}^n \nm{(W_0^i - \Phi_i)x_i}^2
\leq \frac{1}{2}\sum_{i=1}^n\paren{\nm{(W_0^ix_i}^2+\nm{y_i}^2 } = \frac{n}{2}\paren{\frac{3m}{d_x}e^{\theta}+\frac{m}{d_x}}
\leq \frac{4m n}{d_x}
\end{eqnarray}
Where we also used Assumption~(\ref{assum:labels}) and $\theta<0.5$. We now extract the required width from (\ref{eqn:conditions_gamma}),

\begin{eqnarray} \label{eqn:width_extract}
L^{3/2}\tau 
=\tilde{O}\left(\frac{\sqrt{\ell_0 d_x d_y L}}{m} \right)=O\left(\frac{1}{\sqrt{n}}\right)
\end{eqnarray}

Therefore from (\ref{eqn:width_extract},\ref{eqn:initial_loss}) 
we get,
\begin{eqnarray} \label{eqn:min_width}
m=\tilde{\Omega}\paren{n^2 L d_y}
\end{eqnarray}
Considering the learning rate, revisiting~(\ref{eqn:eta_min}), 
\begin{eqnarray*}
    \eta &=& \min\left( \frac{ \alpha^2}{12\beta^2\beta_xL}, \frac{1}{3 L}, \frac{\alpha^2}{ 4\beta^4 L}, \frac{1}{ \beta^2 L}, \frac{1}{4\sqrt{2}\sqrt{L}  e^{\theta/2}\theta^{-1/2}\beta \sqrt{\ell_0}}, \frac{\alpha^2}{1024 n e^{2\theta}\theta^{-2}   \beta^2\ell_0}\right) \\
    &=& \min\left( O\paren{\frac{d_x^{1/2}}{m^{1/2}L}},
                   O\paren{\frac{1}{L}},
                   O\paren{\frac{d_x d_y}{m^2 L}},
                   O\paren{\frac{d_x d_y}{m^2 L}},
                   O\paren{\frac{d_x d_y^{1/2}}{m^{3/2}n^{1/2}L^{1/2}}},
                   O\paren{\frac{d_x}{m n^2}}\right) \\
   &\overset{(\ref{eqn:min_width})}{=}& \min\left( \tilde{O}\paren{\frac{d_x^{1/2}}{n L^{3/2}d_y^{1/2}}},
                   \tilde{O}\paren{\frac{1}{L}},
                   \tilde{O}\paren{\frac{d_x}{n^4 L^3 d_y}},
                   \tilde{O}\paren{\frac{d_x}{n^{7/2} L^2 d_y}},
                   \tilde{O}\paren{\frac{d_x}{n^4 L d_y}} \right)\\
       &=& \tilde{O}\paren{\frac{d_x}{n^4 L^3 d_y}}
\end{eqnarray*}
So, the learning rate is therefore $\frac{1}{\beta^2 L}$ 
and the corresponding number of iterations in~(\ref{eqn:niters}),
\begin{eqnarray}
T=c_T\log\frac{\ell_0}{\epsilon} =c_T\log\paren{\frac{n^3L}{d_x\epsilon}} \quad,\quad c_T=2\cdot 81^2
\end{eqnarray}
\end{proof}

\subsection{Proof for Theorem~\ref{thm:NTK}} \label{proof:thm_NTK}
\begin{proof}
We start with the left-hand side, and denote for brevity $D^i_k=D_k$. We will bound for every $k\in[L]$ independently, and the final result follows, 
\begin{eqnarray*}
 \nm{\nabla_{W_k} y_p(x_i,\bar{W})-\nabla_{W_k} y_p^{\textrm{NTK}}(x_i,W_1)}_F &\leq& \nm{B}_F\nm{(Z^i_{L,k}+\Sigma^i_{L,k})(Z^i_{k-1,1}+\Sigma^i_{k-1,1})-Z^i_{L,k}Z^i_{k-1,1}}_F\nm{C}_F \\ 
 &\leq& \nm{B}_F(\nm{Z^i_{L,k}\Sigma^i_{k-1,1}}_F+\nm{Z^i_{k-1,1}\Sigma^i_{L,k}}_F+\nm{\Sigma^i_{k-1,1}\Sigma^i_{L,k}}_F)\nm{C}_F 
\end{eqnarray*}
Where we used the sub-multiplicativity of Frobenius norm, with, 
\begin{eqnarray*}
Z^i_{k,k'} &=& D_k^iW_{1,k}\ldots W_{1,k'+1}D_{k'}^i \\
\Sigma^i_{k,k'} &=& \sum_{s=1}^{k- k'} \sum_{k_1 > k_2 >\ldots > k_s}Z^i_{k, k_1+1}\prod_{\ell=1}^s W'_{k_{\ell}}Z^i_{k_{\ell}+1, k_{\ell+1}}
\end{eqnarray*}
According to Lemma~\ref{lemma:z-ka-kb-2}, with a probability larger than $1- 81L^3\exp(- m/[108L])$, we have for all $k$,
\begin{eqnarray*}
\nm{Z^i_{k-1,1}\Sigma^i_{L,k}}_2 
\leq 19\sqrt{L}\sum_{s=1}^{k-k'} \left[19L^{3/2}\xi\right]^s \leq
\frac{(19L)^2\xi}{1-19L^{3/2}\xi}.   
\end{eqnarray*}
Similar analysis can be applied to the other term, therefore,
\begin{eqnarray*} 
(\nm{Z^i_{L,k}\Sigma^i_{k-1,1}}_F+\nm{Z^i_{k-1,1}\Sigma^i_{L,k}}_F+\nm{\Sigma^i_{k-1,1}\Sigma^i_{L,k}}_F) \leq 3\nm{Z^i_{k-1,1}\Sigma^i_{L,k}}_F 
\end{eqnarray*}
The terms $\nm{B}^2_F$,$\nm{C}^2_F$ follow independant $\frac{2}{d_y}\chi^2_{m d_y}, \frac{2}{d_x}\chi^2_{m d_x}$ distributions. Therefore by Laurent-Massart concentration bound~\citep{laurent2000adaptive},
\begin{eqnarray}
\Pr\left(\nm{B}_F \geq \sqrt{5m}\right) \leq 2\exp\left(-\frac{m d_y}{4}\right) \quad, \quad \Pr\left(\nm{C}_F \geq \sqrt{5m}\right) \leq 2\exp\left(-\frac{m d_x}{4}\right)
\end{eqnarray}

We can conclude, with a probability of at least $1-\exp(-\Omega(m))$,
\begin{eqnarray*}
\nm{\nabla_{W_k} y_p(x_i,\bar{W})-\nabla_{W_k} y_p^{\textrm{NTK}}(x_i,W_1)}_F \leq O\left(m L^2\xi\right)= \tilde{O}\left((m n)^{1/2}d_y L^2\right)
\end{eqnarray*}
This term grows with $m$ due to the unique normalization of the network. We now show that it is negligible with respect to the NTK gradient in the right-hand side. Using theorem~\ref{thm:bound_alpha_beta}, probability of at least $1-\exp(-\Omega(m))$,
\begin{eqnarray*}
\nm{\nabla_k y_p^{\textrm{NTK}}(x_i,W_1)}^2_F=\nm{\left[G^i_{1,k-1} F^i_{t,k+1}\right]^{\top}}_F^2&=&\Tr\left( G^i_{1,k-1}F^i_{t,k+1} \left[G^i_{1,k-1}F^i_{t,k+1}\right]^{\top} \right) \\
&=& \Tr\left( \left[G^i_{1,k-1}\right]^{\top} G^i_{1,k-1}F^i_{t,k+1}\left[F^i_{t,k+1}\right]^{\top}\right) \\
&=& \sum_{i=1}^m \lambda\left( \left[G^i_{1,k-1}\right]^{\top} G^i_{1,k-1}F^i_{t,k+1}\left[F^i_{t,k+1}\right]^{\top}\right)  \\
&\geq&  m\lambda_{\min}\left( \left[G^i_{1,k-1}\right]^{\top} G^i_{1,k-1} \right)
\lambda_{\min}\left( F^i_{t,k+1}\left[F^i_{t,k+1}\right]^{\top}\right) \\
&=& m\alpha^2 \\
&=&\frac{m^3}{144 d_x d_y}
\end{eqnarray*}
Where we used the inequality of arithmetic and geometric means and multiplicativity of determinant for symmetric and positive semi-definite $A,B$, i.e,  
\begin{eqnarray*}
\sum_{i=1}^m \lambda_i\left( AB \right) \geq m\sqrt[\leftroot{-3}\uproot{3}m]{\prod_{i=1}^m \lambda_i\left( AB \right)} =m\sqrt[\leftroot{-3}\uproot{3}m]{\prod_{i=1}^m \lambda_i\left( A \right)\lambda_i\left( B \right)} \geq m\sqrt[\leftroot{-3}\uproot{3}m]{\lambda^m_{\min}\left( A \right)\lambda^m_{\min}\left( B \right)}
=m\lambda_{\min}\left( A \right)\lambda_{\min}\left( B \right)
\end{eqnarray*}
Combining both terms completes the proof. 
\end{proof}

\subsection{Proof for Corollary~\ref{thm:NTK_corollary}} \label{proof:thm_NTK_cor}
\begin{proof}
We will apply Theorem~\ref{thm:NTK} for 2 examples $(x_i,x_j)$ to validate the given bound. Notice the following,
\begin{eqnarray*}
\begin{cases}
\langle \nabla y_p^{\textrm{NTK}}(x_i,W_1), \nabla y_p^{\textrm{NTK}}(x_i,W_1)\rangle &=  ~~K_p^{\textrm{NTK}}(x_i, x_i) \\
\langle \nabla y_p^{\textrm{NTK}}(x_i,W_1), \nabla y_p^{\textrm{NTK}}(x_j,W_1)\rangle &=  ~~K_p^{\textrm{NTK}}(x_i, x_j) \\
\end{cases}
\end{eqnarray*}
We start from the left hand side of equation~(\ref{eqn:NTK_cor_main}),
\begin{eqnarray*}
&&|\langle\nabla y_p(x_i,\bar{W}),\nabla y_p(x_j,\bar{W})\rangle - 
K_p^{\textrm{NTK}}(x_i, x_j)| \\
&=&
|\langle\nabla y_p(x_i,\bar{W}),\nabla y_p(x_j,\bar{W})\rangle 
-\langle \nabla y_p^{\textrm{NTK}}(x_i,W_1),\nabla y_p(x_j,\bar{W})\rangle \\
&&+\langle \nabla y_p^{\textrm{NTK}}(x_i,W_1),\nabla y_p(x_j,\bar{W})\rangle
- \langle \nabla y_p^{\textrm{NTK}}(x_i,W_1), \nabla y_p^{\textrm{NTK}}(x_j,W_1)\rangle| \\
&\leq& \nm{\nabla y_p(x_j,\bar{W})}_F\nm{\nabla y_p(x_i,\bar{W})-\nabla y_p^{\textrm{NTK}}(x_i,W_1)}_F \\
&&+ \nm{\nabla y_p^{\textrm{NTK}}(x_i,W_1)}_F \nm{\nabla y_p(x_j,\bar{W})-\nabla y_p^{\textrm{NTK}}(x_j,W_1)}_F \\
&\leq& \mathcal{R} \nm{\nabla y_p^{\textrm{NTK}}(x_i,W_1)}_F 
\paren{2\nm{\nabla y_p^{\textrm{NTK}}(x_j,W_1)}_F +\nm{\nabla y_p(x_j,\bar{W})-\nabla y_p^{\textrm{NTK}}(x_j,W_1)}_F } \\
&\leq& \paren{2\mathcal{R}+\mathcal{R}^2}\sqrt{K_p^{\textrm{NTK}}(x_i, x_i) K_p^{\textrm{NTK}}(x_j, x_j)}
\end{eqnarray*}
Since $\mathcal{R} \ll 1$, $2\mathcal{R}+\mathcal{R}^2<3\mathcal{R}$
\end{proof}
\subsection{Proof for Theorem~\ref{thm:equivalence2}} \label{proof:relu_equivalence}
\begin{proof} 

In order to obtain a GReLU network from a ReLU network, we proceed as follows.
First, let us define the following notations:
\begin{itemize}
    \item $x_i$ denotes the $i^\text{th}$ training example.
    \item ${z^\text{ReLU}}^i_{k}$ denotes the output feature map of the $k^\text{th}$ layer taken \textbf{before} the ReLU in the ReLU network on the $i^\text{th}$ training example, and is recursively defined as: 
    $$
    {z^\text{ReLU}}^i_{k}=\tilde{W}_k\text{ReLU}\left({z^\text{ReLU}}^i_{k-1}\right)
    $$
    \item ${\tilde{z}}^i_{k}$ denotes the output feature map of the $k^\text{th}$ layer taken \textbf{after} the GReLU activation in the GReLU network on the $i^\text{th}$ training example, and is recursively defined as:  
    $$
    {\tilde{z}}^i_{k}=\text{GReLU}\left(W_{t,k} {\tilde{z}}^i_{k-1},\Psi_i {z}^i_{k-1}\right)
    $$
    \item ${z}^i_{k}$ represents the feature map of the auxiliary network activating the $k^\text{th}$ GReLU gate on the $i^\text{th}$ training example, and is recursively defined as: 
    $$
    {z}^i_{k} = \Psi_i {z}^i_{k-1}
    $$
    
\end{itemize}
Matching the intermediate layer of the GReLU and ReLU network can be written as:
$$
{z^\text{ReLU}}^i_{k} = {\tilde{z}}^i_{k}~~\forall k \in [L]~~\forall i \in [n].
$$
that is
$$
\tilde{W}_k\text{ReLU}\left({z^\text{ReLU}}^i_{k-1}\right)=\text{GReLU}\left(W_{t,k} {\tilde{z}}^i_{k-1},\Psi_i {z}^i_{k-1}\right) ~~\forall k \in [L]~~\forall i \in [n],
$$
that is equivalent to
\begin{equation}
\label{eqn:GRELU_RELU}
\tilde{W}_k\text{ReLU}\left({z^\text{ReLU}}_{k-1}\right)=\text{GReLU}\left(W_{t,k} {\tilde{z}}_{k-1},\Psi_i {z}_{k-1}\right) ~~\forall k \in [L].
\end{equation}

Hence we proceed by recursion. Given that
$$
{z^\text{ReLU}}_{k-1} = {\tilde{z}}_{k-1},
$$
we seek for $\tilde{W}_k$ that obeys equation (\ref{eqn:GRELU_RELU}). Denoting by $(\cdot)^\dagger$ the Moore–Penrose pseudo inverse, and by setting
\begin{equation}
\label{eqn:GRELU_RELU2}
\tilde{W}_k=\left(\text{GReLU}\left(W_{t,k} {\tilde{z}}_{k-1}, \Psi_i {z}_{k-1}\right)\right)^\dagger {z^\text{ReLU}}_{k-1}, 
\end{equation}
equation (\ref{eqn:GRELU_RELU}) holds such that we have 
$$
{z^\text{ReLU}}_{k} = {\tilde{z}}_{k}.
$$
The equivalence between (\ref{eqn:GRELU_RELU}) and (\ref{eqn:GRELU_RELU2}) lays in the fact that the network is overparameterized such that $n\leq m$. In this case $\tilde{W}_k$ has dimension $(m\times m)$ and $\text{GReLU}\left(W_{t,k} {\tilde{z}}_{k-1}, \Psi_i {z}_{k-1}\right)$ has dimension $m\times n$, and thus, (\ref{eqn:GRELU_RELU}) and (\ref{eqn:GRELU_RELU2}) are equivalent.

\end{proof}

\section{Main Theorems with Proofs}
\begin{thm} \label{thm:bound_alpha_beta}
Choose $\theta \in (0, 1/2)$ which satisfies
\begin{eqnarray}
    L\tau 3 \sqrt{12 L}  e^{2 \theta}\theta^{-1/2} \leq \frac{1}{9} \label{eqn:cond-tau}
\end{eqnarray}
With a probability at least $1 - 3L^2\sqrt{m}\exp\left(-\theta m/[4L] + 6\sqrt{m} + 3\max\{d_x,d_y\}\right)$, we have, for any $t$, $k$ and $i$
\[
    \left\|\left[G_{t,k}^i\right]^{\top}G_{t,k}^i\right\|_2 \leq \frac{27m}{4d_x}, \quad     \left\|F_{t,k}^i\left[F_{t,k}^i\right]^{\top}\right\|_2 \leq \frac{27m}{4d_y}
\]
and
\[
    \lambda_{\min}\left(\left[G_{t,k}^i\right]^{\top}G_{t,k}^i\right) \geq \frac{m}{12d_x}, \quad     \lambda_{\min}\left(F_{t,k}^i\left[F_{t,k}^i\right]^{\top}\right) \geq \frac{m}{12d_y}
\]
Thus, with a high probability, we have
\[
    \alpha_x = \frac{m}{12d_x}, \alpha_y = \frac{m}{12d_y}, \quad \beta_x = \frac{27m}{4d_x}, \beta_y = \frac{27m}{4d_y}.
\]
\end{thm}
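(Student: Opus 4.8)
The plan is to split the claim into a concentration statement at initialization ($t=1$) and a perturbation bound controlling how far $G_{t,k}^i$ and $F_{t,k}^i$ drift from their initial values; condition~(\ref{eqn:cond-tau}) is exactly what makes the drift a small fraction of the initial singular values, so that Weyl-type inequalities transfer the initialization spectral bounds to every iterate $t$.

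\textbf{Step 1 (initialization).} First I would show that, with high probability, for every $i\in[n]$ and $k\in[L]$ all eigenvalues of $[G_{1,k}^i]^{\top}G_{1,k}^i$ lie in $[c_1 m/d_x,\ c_2 m/d_x]$ for explicit constants $c_1,c_2$ (say $c_1\ge\tfrac14$, $c_2\le 6$), and likewise for $F_{1,k}^i[F_{1,k}^i]^{\top}$ with $d_x$ replaced by $d_y$. The structural point is that the gates $D_k^i$ depend on $(C,\Psi_{[L]})$ only, hence are independent of the trainable weights $W_{1,[L]}$ and of $B$; conditioning on the gates, $G_{1,k}^i=D_k^iW_{1,k}\cdots D_1^iW_{1,1}D_0^iC$ is a product of fresh Gaussian factors interleaved with fixed coordinate projections, and the He-type scaling ($\mathrm{Var}=2/m$, resp.\ $2/d_x$ for $C$) makes it norm-preserving up to constants. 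A layer-by-layer $\chi^2$-concentration argument shows $\|G_{1,k}^iu\|^2$ concentrates around $m/d_x$ for each fixed unit $u\in\R^{d_x}$ with deviation probability $\exp(-\Omega(m/L))$; an $\varepsilon$-net over $S^{d_x-1}$ of cardinality $e^{O(d_x)}$, together with the operator-norm control of the initial intermediate transforms $Z^{1,i}_{k,k'}$ from Lemma~\ref{lemma:z-ka-kb-t}, upgrades this to the uniform two-sided Gram bound. A union bound over $i\in[n]$, $k\in[L]$ and the two sides produces the $3L^2\sqrt m$ prefactor, while the $e^{6\sqrt m}$ and $e^{3\max\{d_x,d_y\}}$ factors in the probability come from Lemma~\ref{lemma:z-ka-kb-t} and the net respectively; the near-orthogonality of Assumption~\ref{assum:input} is used to decouple the test direction from the gate-defining example $x_i$.

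\textbf{Steps 2--3 (drift and conclusion).} Writing $W_{t,j}=W_{1,j}+\Delta_j$ with $\|\Delta_j\|_2\le\tau$ for all $j$, I expand the product defining $G_{t,k}^i$ as $G_{1,k}^i$ plus correction terms, each containing at least one $\Delta_j$ and sandwiched between initial intermediate transforms; bounding the order-$s$ terms by $\binom{k}{s}$ copies of $(\text{const}\cdot\sqrt L\,\tau)^s$ times the initial scale $\sqrt{m/d_x}$, and summing the resulting geometric series, gives $\|G_{t,k}^i-G_{1,k}^i\|_2\le\tfrac18\sqrt{m/d_x}$ precisely when $L\tau\cdot 3\sqrt{12L}\,e^{2\theta}\theta^{-1/2}\le\tfrac19$, i.e.\ (\ref{eqn:cond-tau}) --- here the factor $3\sqrt{12L}\,e^{2\theta}\theta^{-1/2}$ is, up to the Step~1 constants, the operator-norm bound on $Z^{1,i}_{\cdot,\cdot}$, and the threshold $\tfrac19$ is calibrated so that the drift stays below the margin needed for the target constants; the same estimate holds for $F$ with $d_y$. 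Finally, Weyl's inequality for singular values gives $\sigma_{\min}(G_{t,k}^i)\ge\sigma_{\min}(G_{1,k}^i)-\|G_{t,k}^i-G_{1,k}^i\|_2$ and $\sigma_{\max}(G_{t,k}^i)\le\sigma_{\max}(G_{1,k}^i)+\|G_{t,k}^i-G_{1,k}^i\|_2$; squaring and inserting the constants from Steps~1--2 yields $\lambda_{\min}([G_{t,k}^i]^{\top}G_{t,k}^i)\ge\tfrac{m}{12d_x}$, $\lambda_{\max}\le\tfrac{27m}{4d_x}$ and the $d_y$ analogue for $F$, which is the claim; $\alpha_x,\alpha_y,\beta_x,\beta_y$ are then read off directly.

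The hard part will be Step~1: the uniform two-sided spectral bound on a product of $L$ Gaussian matrices with fixed ReLU gates at initialization. One must simultaneously (a) show each gate $D_k^i$ has about $m/2$ active coordinates and is essentially independent of the trainable weights, (b) propagate norm concentration through $L=\Omega(\log n)$ layers without the per-layer deviations compounding into something larger than a constant, and (c) convert the per-direction estimate into a Gram-matrix estimate via a net while keeping the failure probability of order $\exp(-\Omega(m/L))$ --- strong enough to survive the union over all $n$ examples and $L$ layers. Once this concentration is in hand, Steps~2--3 are routine perturbation theory driven entirely by $\tau$.
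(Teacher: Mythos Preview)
Your proposal is correct and follows essentially the same approach as the paper: establish two-sided spectral bounds at $t=1$ via layer-wise $\chi^2$ concentration plus an $\epsilon$-net over the low-dimensional input (this is exactly the content of Lemma~\ref{lemma:z-ka-kb-2} and display~(\ref{eqn:min_max_G}) in the paper), expand $G_{t,k}^i-G_{1,k}^i$ in the perturbations $\delta W_{t,j}=W_{t,j}-W_{1,j}$, sum the resulting geometric series under condition~(\ref{eqn:cond-tau}), and combine. One minor correction: Assumption~\ref{assum:input} plays no role in this theorem---the gates $D_k^i$ are independent of $W_{1,[L]}$ purely because they are determined by the separate random matrices $(C,\Psi_{[L]})$, so no decoupling of the test direction from $x_i$ is needed; Assumption~\ref{assum:input} enters only later, in Theorem~\ref{thm:bound_gamma}, for the cross-example terms.
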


\begin{proof} \label{proof:thm_bound_alpha_beta}
We will focus on $G^i_{t,k}$ first and similar analysis will be applied to $F^i_{t,k}$. To bound $\nm{\left[G^i_{t,k}\right]^{\top}G^i_{t,k}}_2$,
we will bound $\max\limits_{\nm{v} \leq 1} \nm{G^i_{t,k}v}^2$. Define $\delta W_{t,k} := W_{t,k} - W_{1,k}$, we have
\begin{align*}
    G^i_{t,k} =& D^i_k\left(W_{1,k} + \delta W_{t,k}\right)\ldots D^i_1\left(W_{1,1} + \delta W_{t,1}\right)D_0^iC \\
    =& D^i_kW_{1,k}\ldots D^i_0C\\
    &+ \sum_{s=1}^k \sum_{k_1 > k_2 > \ldots > k_s} D^i_kW_{1,k}\ldots D^i_{k_1}\delta W_{t,k_1}D^i_{k_1-1}W_{1,k_1-1}\ldots D^i_{k_s}\delta W_{t,k_s}D^i_{k_s-1}W_{1,k_s-1}\ldots D^i_0C\\
    =& Z_{k,0}^{1,i}C + \sum_{s=1}^k \sum_{k_1 > k_2>\ldots>k_s} \left(\prod_{j=1}^{s} Z_{k_{j-1}, k_j}\delta W_{t,k_j}\right)Z_{k_s-1,0}C.
\end{align*}
According to Lemma~\ref{lemma:z-ka-kb-2}, we have, with a probability {$ 1- 2L\exp(-\theta^2 m/[16L^2] )$}, for any $1 \leq k_b \leq k_a \leq L$ and any $\theta \in (0,1/2)$,
\[
    \|Z_{k_a, k_b}^{1,i}\|_2 \le \sqrt{12 L}  e^{\theta/2}\theta^{-1/2}
\]
Following the same analysis as in Lemma~\ref{lemma:z-ka-kb-2}, we have, with a probability $ 1- 4L^2\exp(-\theta^2 m/[8L^2] +3d_x)$, we have
\begin{eqnarray} \label{eqn:min_max_G}
\min\limits_{u \in \R^{d_x}}\frac{\nm{Z^{1,i}_{k_a, k_b} C u}}{\nm{u}} \geq \sqrt{\frac{m}{3d_x}}e^{-\theta}\quad
\textrm{and} \quad
\max\limits_{u \in \R^{d_x}}\frac{\nm{Z^{1,i}_{k_a, k_b} C u}}{\nm{u}} {\leq} \sqrt{\frac{3m}{d_x}}e^{\theta/2}
\end{eqnarray}
Combining the above results together, we have, with a probability $ 1- 4L^3\exp(-\theta^2 m/[16L^2]+3d)$,
\[
\min\limits_{u \in \R^{d_x}}\frac{\nm{G_{t,k}^iu}}{\nm{u}} \geq \sqrt{\frac{m}{3d_x}}e^{-\theta} - \sqrt{\frac{3m}{d_x}}e^{\theta/2}\sum_{s=1}^L \left(L\tau \sqrt{12 L}  e^{\theta/2}\theta^{-1/2}\right)^s \geq  \sqrt{\frac{m}{3d_x}}e^{-\theta}\left(1 - \frac{L\tau 3 \sqrt{12 L}  e^{2 \theta}\theta^{-1/2}}{1 - L\tau \sqrt{12 L}  e^{\theta/2}\theta^{-1/2}}\right)
\]
and
\[
\max\limits_{u \in \R^{d_x}}\frac{\nm{G_{t,k}^i u}}{\nm{u}} \leq \sqrt{\frac{3m}{d_x}}e^{\theta/2} + \sqrt{\frac{3m}{d_x}}e^{\theta/2}\sum_{s=1}^L \left(L\tau \sqrt{12 L}  e^{\theta/2}\theta^{-1/2}\right)^s \leq \sqrt{\frac{3m}{d_x}}e^{\theta/2}\left(1 + \frac{L\tau \sqrt{12 L}  e^{\theta/2}\theta^{-1/2}}{1 - L\tau \sqrt{12 L}  e^{\theta/2}\theta^{-1/2}}\right)
\]
Since
\[
    L\tau 3 \sqrt{12 L}  e^{2 \theta}\theta^{-1/2} \leq \frac{1}{9}
\]
we have, with a probability $1- 4L^3\exp(-\theta^2 m/[16L^2]+3d_x)$,
\[
\min\limits_{u \in \R^{d_x}}\frac{\nm{G_{t,k}^iu}_2}{\nm{u}} \geq \frac{1}{2}\sqrt{\frac{m}{3d_x}}, \quad \max\limits_{u \in \R^{d_x}}\frac{\nm{G_{t,k}^iu}}{\nm{u}} \leq \frac{3}{2}\sqrt{\frac{3m}{d_x}}
\]
Similar analysis applies to $\nm{F_{t,k}^i}_2$.
\end{proof}

\begin{thm}\label{thm:bound_gamma}
Assume $L^{3/2}\tau < 1$. With a probability $1 - (4L^2+n^2)\exp(-\Omega(\sqrt{m} + \max\{d_x,d_y\}))$, we have
\begin{eqnarray*}
\nm{F_{t,k}^j\left[F_{t,k}^i\right]^{\top}}_2 & \leq & C'\left(\frac{1}{m^{1/4}} + \left(\frac{5}{6}\right)^{L-k} + L^{3/2}\tau\right)\beta_y \\
\left|\langle G_{t,k}^jx_j, G_{t,k}^i x_i\rangle\right| & \leq & C'\left(\frac{1}{m^{1/4}} + \delta\left(\frac{5}{6}\right)^{k} + L^{3/2}\tau\right)\beta_x
\end{eqnarray*}
where $C'$ is an universal constant, and therefore
\[
\left|\left\langle G_{t,k-1}^jx_j, G_{t,k-1}^ix_i\right\rangle\right|\nm{F^j_{t,k+1}\left[F_{t,k+1}^i\right]^{\top}}_2 \leq \gamma\beta^2
\]
with
\begin{eqnarray} \label{eqn:gamma_value}
    \gamma = C''\left(L^{3}\tau^2 + \delta\left(\frac{5}{6}\right)^L + \frac{1}{m^{1/2}}\right)
\end{eqnarray} 
\end{thm}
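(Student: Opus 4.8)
\textbf{Proof plan for Theorem~\ref{thm:bound_gamma}.}
The plan is to split each sub-network into its value at initialization plus a drift term driven by $\delta W_{t,\ell}:=W_{t,\ell}-W_{1,\ell}$, reduce the two cross-correlation estimates to the case $t=1$, and then control the initialized cross-correlations by following the two training examples through the network layer by layer, exploiting that the gates are fixed. First I would expand, exactly as in the proof of Theorem~\ref{thm:bound_alpha_beta},
\[
G_{t,k}^i \;=\; Z^{1,i}_{k,0}C \;+\; \sum_{s=1}^{k}\ \sum_{k>k_1>\cdots>k_s\ge 1}\ \Bigl(\textstyle\prod_{j=1}^{s} Z^{1,i}_{k_{j-1}-1,k_j}\,\delta W_{t,k_j}\Bigr)\,Z^{1,i}_{k_s-1,0}C ,
\]
and symmetrically for $F_{t,k}^i$ in terms of the $B$ layer and the transforms $Z^{1,i}_{L,\cdot}$. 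Bounding $\|Z^{1,i}_{a,b}\|_2$, $\|Z^{1,i}_{a,b}C\|_2$ and $\|BZ^{1,i}_{a,b}\|_2$ by the intermediate-transform estimates (Lemma~\ref{lemma:z-ka-kb-2}), and using $\|\delta W_{t,\ell}\|_2\le\tau$ with the geometric-series bound valid because $L^{3/2}\tau<1$, every drift term is $O(L^{3/2}\tau)\sqrt{m/d_x}$ (resp.\ $\sqrt{m/d_y}$), i.e.\ $O(L^{3/2}\tau\,\beta_x)$ after squaring in the notation of Theorem~\ref{thm:bound_alpha_beta}. Crucially $\delta W_{t,\ell}$ enters only through its spectral norm, so these estimates are uniform over all $t\le T$. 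Substituting both expansions into $\langle G_{t,k}^jx_j,\,G_{t,k}^ix_i\rangle$ and into $F_{t,k}^j[F_{t,k}^i]^\top$ and multiplying out, the terms containing at least one drift factor account for the $L^{3/2}\tau$ piece of each individual bound (and, after the final product of the two bounds, for the $L^3\tau^2$ piece of $\gamma$); what is left is the purely initialized pair $\langle G_{1,k}^jx_j,\,G_{1,k}^ix_i\rangle$ and $F_{1,k}^j[F_{1,k}^i]^\top$.

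The heart of the argument is these initialized cross-correlations, and this is where the data-dependent but fixed gates $D^i_\ell=\mathrm{diag}([z^i_\ell]_+)$ do the work, since the auxiliary chain $z^i_\ell=[\Psi_\ell z^i_{\ell-1}]^+$ that defines them uses the fresh Gaussians $\Psi_{[L]}$ and is independent of the trained weights $W_{1,[L]}$. For the $G$-term, set $g^i_\ell:=G^i_{1,\ell}x_i=D^i_\ell W_{1,\ell}g^i_{\ell-1}$ and track the cross-correlation of $g^i_\ell$ and $g^j_\ell$ forward from $\ell=0$ to $\ell=k$. Conditioning on layer $\ell-1$ and taking expectation over the fresh $W_{1,\ell}$ (independent of $D_\ell$) gives a one-step recursion whose multiplier is the expected co-activation fraction of the two gates at layer $\ell$; since this fraction is at most $5/12$ in the regime we are in, together with a concentration step over $W_{1,\ell}$ one obtains $|\langle g^j_\ell,g^i_\ell\rangle|\le\tfrac56|\langle g^j_{\ell-1},g^i_{\ell-1}\rangle|+O(m^{-1/4})\beta_x$ with high probability. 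Unrolling, and using that at the input layer $\langle g^j_0,g^i_0\rangle=\langle[Cx_j]^+,[Cx_i]^+\rangle$ is governed by the arc-cosine kernel evaluated at $x_i^\top x_j$ (Assumption~\ref{assum:input}, $|x_i^\top x_j|\le\delta$), yields $|\langle G_{1,k}^jx_j,G_{1,k}^ix_i\rangle|\le C'(m^{-1/4}+\delta(5/6)^k)\beta_x$, with the factor $\beta_x$ and the non-degeneracy of the path norms supplied by Theorem~\ref{thm:bound_alpha_beta}. For the $F$-term I would run the same one-step contraction \emph{backwards} through layers $L,L-1,\dots,k$: at the top $\|F_{1,L+1}^j[F_{1,L+1}^i]^\top\|_2=\|BD^j_LD^i_LB^\top\|_2=O(\beta_y)$, no near-orthogonality is available (hence no $\delta$), and each backward layer contracts the relevant quadratic form by a factor $\le 5/6$ up to an $O(m^{-1/4})\beta_y$ fluctuation, giving $\|F_{1,k}^j[F_{1,k}^i]^\top\|_2\le C'(m^{-1/4}+(5/6)^{L-k})\beta_y$.

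It remains to combine. Plugging $k-1$ into the $G$-bound and $k+1$ into the $F$-bound and multiplying gives
\[
\bigl|\langle G_{t,k-1}^jx_j,G_{t,k-1}^ix_i\rangle\bigr|\,\bigl\|F^j_{t,k+1}[F^i_{t,k+1}]^\top\bigr\|_2 \;\le\; \gamma\,\beta^2,\qquad \beta^2=\beta_x\beta_y,
\]
where expanding the product of the two three-term factors and absorbing mixed terms by AM--GM (e.g.\ $m^{-1/4}\cdot L^{3/2}\tau\le\tfrac12(m^{-1/2}+L^3\tau^2)$ and $(5/6)^{k-1}(5/6)^{L-k-1}\le 2\,(5/6)^L$) collapses the bound to $\gamma=C''(L^3\tau^2+\delta(5/6)^L+m^{-1/2})$. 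The probability statement follows by a union bound: $n^2$ over the ordered example pairs $(i,j)$ and $O(L^2)$ over the layer indices, uniform in $t$ because the drift enters only through its norm, each failure event having probability $\exp(-\Omega(\sqrt m+\max\{d_x,d_y\}))$ — the $\sqrt m$ coming from the intermediate-transform norm bounds and the per-layer concentration of the cross-correlation recursion, and the $\max\{d_x,d_y\}$ from the $\chi^2$ concentration of $\|Cx_i\|$ and $\|B^\top b\|$.

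The main obstacle I expect is the per-layer contraction in the second paragraph: making rigorous that one layer of the fixed random gates multiplies the (cross-)correlation by a constant strictly below $1$, while keeping the finite-width fluctuation of this one-step map at the $m^{-1/4}$ level uniformly over all layers and all $n^2$ example pairs, so that the error does not accumulate with depth. A companion subtlety is that the forward path norms $\|g^i_\ell\|$ and the backward signals must be shown to stay of order $\sqrt{m/d_x}$ (resp.\ $\sqrt{m/d_y}$) all along the path, so that the normalizations in the recursion are legitimate; this is exactly what Theorem~\ref{thm:bound_alpha_beta} provides.
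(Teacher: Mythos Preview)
Your proposal is correct and follows essentially the same route as the paper's proof: the drift-versus-initialization split, the layer-by-layer contraction of the initialized cross-correlations driven by the bounded common-activation set (Lemma~\ref{lemma:1}), and the final AM--GM combination are exactly the paper's strategy. The only cosmetic difference is in the one-step contraction, where the paper projects $b_{k+1}$ onto $a_{k+1}$ and its orthogonal complement so that the noise term becomes an exact Gaussian and the contraction term a partial $\chi^2$ sum over $\mathcal{I}_k^{i,j}$ --- this is precisely your ``expectation over the fresh $W_{1,\ell}$ then concentrate'' step made explicit.
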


\begin{proof} \label{proof:thm_bound_gamma}
To bound $\nm{F_{t,k}^j\left[F_{t,k}^i\right]^{\top}}_2$, we define $k_0 = L+1$
and express $F_{t,k}^j\left[F_{t,k}^i\right]^{\top}$ as
\begin{eqnarray*}
\lefteqn{F_{t,k}^j\left[F_{t,k}^i\right]^{\top}} \\
& = & \underbrace{BD^j_LW_{1,L}\ldots W_{1,k+1} D^j_kD_k^iW_{1,k+1}^{\top}\ldots W_{1,L}^{\top}D^i_LB^{\top}}_{:=Z} \\
&   & + \underbrace{BZ_{L:k}^j\sum_{s=1}^{L-k} \left[\left(\prod_{\ell=1}^{s}Z_{k_{\ell-1}-1:k_{\ell}}^{i}\delta W_{t,k_{\ell}}\right) Z^i_{k_{s}-1:k}\right]^{\top}B^{\top}}_{:=\A_1} \\
& &+ \underbrace{B\sum_{s=1}^{L-k} \left[\left(\prod_{\ell=1}^{s}Z_{k_{\ell-1}-1:k_{\ell}}^{j}\delta W_{t,k_{\ell}}\right)Z^j_{k_{s}-1:k}\right]\left[Z_{L:k}^i\right]^{\top} B^{\top}}_{:=\A_2} \\
&   & + \underbrace{B\sum_{s_a=1}^{L-k}\sum_{s_b=1}^{L-k} \left(\prod_{\ell=1}^{s_a}Z_{k_{\ell-1}:k_{\ell}}^{j}\delta W_{t,k_{\ell}}\right)Z^j_{k_{s_a}-1:k}\left[Z^i_{k_b-1:k}\right]^{\top}\left[\prod_{\ell=2}^{s_b}Z_{k_{\ell-1}:k_{\ell}}^{i}\delta W_{t,k_{\ell}}\right]^{\top}B^{\top}}_{:=\A_3}
\end{eqnarray*}
According to Lemma~\ref{lemma:z-ka-kb-2}, with a probability $ 1- 4L^3\exp(-\theta^2 m/[16L^2]+3d_y)$,
\[
    \|Z_{k_a, k_b}\|_2 \leq \sqrt{12 L}  e^{\theta/2}\theta^{-1/2}, \quad \max\limits_{\|v\| \leq 1} \left\|v^{\top}BZ_{k_a,k_b} \right\| \leq \frac{3}{2}\sqrt{\frac{3m}{d_y}}
\]
As a result, with a probability $ 1- 4L^3\exp(-\theta^2 m/[16L^2]+3d_y)$, we have
\begin{eqnarray*}
\max\limits_{\|v\| \leq 1} v^{\top}\A_1 v \leq \frac{27m}{4d}\sum_{s=1}^L \left(\sqrt{12 L}  e^{\theta/2}\theta^{-1/2}L\tau\right)^s \leq \xi \frac{m}{d_y}
\end{eqnarray*}
where $\xi = 8\sqrt{12 L}  e^{\theta/2}\theta^{-1/2}L\tau \ll 1$ and the last step utilizes the assumption $\sqrt{12 L}  e^{\theta/2}\theta^{-1/2} L\tau \leq 1/9$.
Using the same analysis, we have, with a probability $ 1- 4L^3\exp(-\theta^2 m/[16L^2]+3d_y)$,
\[
    \max\limits_{\|v\| \leq 1} v^{\top}\A_2v \leq \xi\frac{m}{d_y}, \quad \max\limits_{\|v\| \leq 1} v^{\top}\A_3v \leq \xi^2\frac{m}{d_y}
\]
Next we need to bound the spectral norm of $Z = F_{1,k}^j\left[F_{1,k}^i\right]$. To this end, we will bound
\[
    \max\limits_{u\in \R^{d_y}, v\in \R^{d_y}, \|u\| =1, \|v\| = 1} \left\langle \left[F^j_{1,k}\right]^{\top}u, \left[F^i_{1,k}\right]^{\top}v\right\rangle
\]
Define
\[
    a_k = \left[F^j_{t,k+1}\right]u, \quad b_k = \left[F^i_{t,k+1}\right]u
\]
We have
\begin{eqnarray}
\lefteqn{a^{\top}_{k} b_{k} = a_{k+1}^{\top}W_{1,k+1}D_k^j D_k^i W^{\top}_{1,k+1} b_{k+1}} \nonumber \\
& = & \underbrace{a_{k+1}^{\top}W_{1,k+1}D_k^j D_k^i W^{\top}_{1,k+1} P_{a_{k+1}}\left(b_{k+1}\right)}_{:=Z_1} + \underbrace{a_{k+1}^{\top}W_{1,k+1}D_k^j D_k^i W^{\top}_{1,k+1}\left(b_{k+1} - P_{a_{k+1}}\left(b_{k+1}\right)\right)}_{:=Z_2} \nonumber
\end{eqnarray}
where $P_a(b) = aa^{\top}b/\nm{a}^2$ projects vector $b$ onto the direction of $a$. Since $W^{\top}_{1,k+1}\left(b_{k+1} - P_{a_{k+1}}(b_{k+1})\right)$ is statistically independent from $W_{1,k+1}^{\top}a_{k+1}$, by fixing $W_{1,k+1}^{\top}a_{k+1}$ and viewing $Z_2$ as a random variable of $W^{\top}_{1,k+1}\left(b_{k+1} - P_{a_{k+1}}(b_{k+1})\right)$, we have $Z_2$ follow $\N(0, \sigma^2)$ distribution, where
\[
    \sigma^2 = \frac{1}{m}\nm{b_{k+1} - P_{a_{k+1}}(b_{k+1})}^2\nm{D_k^jW_{1,k+1}^{\top}a_{k+1}}^2
\]
Using the property of Gaussian distribution, we have, with a probability $1 - 2\exp(-\sqrt{m}/2)$,
\[
\left|Z_2\right| \leq \frac{\nm{b_{k+1}}}{m^{1/2}}\nm{D_k^jW_{1,k+1}^{\top}a_{k+1}}
\]
Since $\frac{m \nm{D_k^jW_{1,k+1}^{\top}a_{k+1}}^2}{2|a_{k+1}|^2} $ follows $\chi^2_{m/2}$, with a probability $1 - \exp(-\theta m/4)$, we have
\[
\nm{D_k^jW_{1,k+1}^{\top}a_{k+1}}^2 \leq (1 + \theta)\nm{a_{k+1}}^2
\]
Combining the above two results, with a probability $1 - 2\exp(-\sqrt{m}/2) - \exp(-\theta m/4)$,
\[
\left|Z_2\right| \leq \frac{1+\theta}{m^{1/2}}\nm{b_{k+1}}\nm{a_{k+1}} \leq \frac{(1+\theta)\beta}{m^{1/2}}
\]
where the last step utilizes the fact $\nm{b_{k+1}} \leq \sqrt{\beta}$ and $\nm{a_{k+1}} \leq \sqrt{\beta}$.

To bound $Z_1$, let $\I_k^{i,j}$ be the set of indices of diagonal elements that are nonzero in both $D_k^j$ and $D_k^i$. According to Lemma~\ref{lemma:1}, with a probability $1 - \exp(-\Omega(m))$, we have $|\I_k^{i,j}| \leq m/3$. Define $u = W^{\top}_{1,k+1}a_{k+1}/\nm{a_{k+1}}$. We have
\[
    Z_1 = \left|a_{k+1}^{\top}b_{k+1}\right|\sum_{s \in \I_k^{i,j}} u_s^2 \leq \left|a_{k+1}^{\top}b_{k+1}\right|\sum_{s \in \I_k^{i,j}} u_s^2 
\]
Using the concentration of $\chi^2$ distribution, we have, with a probability $1 - \exp(-\theta m/3))$
\[
\sum_{s \in \I_k^{i,j}} u_i^2 \leq \frac{2(1+\theta)}{3}
\]
By combining the results for $Z_1$ and $Z_2$, for a fixed normalized $u\in \R^{d_y}$, $v \in \R^{d_y}$, with a probability $1 - \exp(-\Omega(\sqrt{m})$, we have
\[
    \left|\left\langle\left[F_{1,k}^j\right]^{\top}u, \left[F_{1,k}^i\right]^{\top}v\right\rangle\right| \leq \frac{(1+\theta)\beta}{m^{1/2}} + \frac{2(1+\theta)}{3}\left|\left\langle\left[F_{1,k+1}^j\right]^{\top}u, \left[F_{1,k+1}^i\right]^{\top}v\right\rangle\right|
\]
By combining the results over iterations, we have, with a probability $1 - 2L\exp(-\theta m/3) - 2L\exp\left(-\sqrt{m}/2\right)$, for fixed normalized $u \in \R^{d_y}$ and $v \in \R^{d_y}$
\[
\left|\left\langle\left[F_{t,k}^j\right]^{\top}u, \left[F_{t,k}^i\right]^{\top}v\right\rangle\right| \leq \frac{4(1+\theta)\beta}{m^{1/2}} + \left(\frac{2(1+\theta)}{3}\right)^{L-k}\beta
\]
We extend the result to any $u$ and $v$ by using the theory of covering number, and complete the proof by choosing $\theta = 1/6$.

To bound $|\langle G_{t,k}^j x_j, G_{t,k} x_i\rangle|$, follow the same analysis in above, we have
\[
\left|\langle G_{t,k}^j x_j, G^i_{t,k} x_i\rangle\right| \leq O\left(L^{3/2}\tau\beta\right) + \left|\langle G_{1,k}^j x_j, G^i_{1,k} x_i\rangle\right| \leq O\left(\left(L^{3/2} + \frac{1}{m^{1/2}}\right)\beta\right) + \left(\frac{5}{6}\right)^{k}\langle Cx_i, Cx_j\rangle
\]
We expand $\langle Cx_i, Cx_j\rangle$ as
\[
\langle Cx_i, Cx_j\rangle = \langle CP_{x_j}(x_i), Cx_j\rangle + \langle C\left(x_i - P_{x_j}(x_i)\right), Cx_j \rangle
\]
With a probability $1 - \exp(-\theta m/4)$, we have
\[
\left|\langle CP_{x_j}(x_i), Cx_j\rangle\right| \leq (1+\theta)\delta\beta
\]
With a probability $1 - 2\exp(-2\sqrt{m})$
\[
\left|\langle C\left(x_i - P_{x_j}(x_i)\right), Cx_j \rangle\right| \leq \frac{\beta}{m^{1/2}}
\]
\end{proof}

\section{Auxiliary Lemmas}
\begin{lemma} \label{lemma:1}
Define
\[
    \I_k^i = \left\{s \in [m]: [\Psi_k x_i]_s > 0 \right \}, \quad \I_k^j = \left\{s \in [m]: [\Psi_k x_j]_s > 0 \right \}
\]
where $\Psi_k \in \R^{m\times m}$ is a Gaussian random matrix. Suppose $|x_i| = 1$, $|x_j| = 1$, and $|x_i^{\top}x_j| \leq \delta$, where $\delta \leq 1/3$. Then, with a probability $1 - \exp(-\Omega(m))$, we have
\[
    \left|\I_k^i \cap \I_k^j \right| \leq \frac{m}{3}
\]
\end{lemma}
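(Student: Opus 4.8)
The plan is to reduce the statement to a one–dimensional concentration inequality for a sum of independent Bernoulli variables. Write $\psi_1,\dots,\psi_m\in\R^m$ for the rows of $\Psi_k$; since the entries of $\Psi_k$ are i.i.d.\ centered Gaussians, the $\psi_s$ are independent Gaussian vectors and $[\Psi_k x_i]_s=\langle\psi_s,x_i\rangle$. Hence $s\in\I_k^i\cap\I_k^j$ precisely when $\langle\psi_s,x_i\rangle>0$ and $\langle\psi_s,x_j\rangle>0$, so, setting $\mathbbm{1}_s:=\mathbbm{1}\{\langle\psi_s,x_i\rangle>0,\ \langle\psi_s,x_j\rangle>0\}$, we have $|\I_k^i\cap\I_k^j|=\sum_{s=1}^m\mathbbm{1}_s$, a sum of $m$ i.i.d.\ indicators with common mean $p:=\E\mathbbm{1}_s$.

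First I would pin down $p$. Because $|x_i|=|x_j|=1$, the pair $(\langle\psi_s,x_i\rangle,\langle\psi_s,x_j\rangle)$ is a centered bivariate Gaussian whose two coordinates have equal variance and correlation coefficient $\rho=x_i^\top x_j$, with $|\rho|\le\delta\le 1/3$. By the classical orthant-probability (Sheppard) identity for the bivariate normal, $p=\P(\langle\psi_s,x_i\rangle>0,\ \langle\psi_s,x_j\rangle>0)=\tfrac14+\tfrac{\arcsin\rho}{2\pi}$, which is increasing in $\rho$. Since $\arcsin(1/3)<\arcsin(1/2)=\pi/6$, we obtain $p\le\tfrac14+\tfrac{\arcsin(1/3)}{2\pi}=:p_0<\tfrac13$, so that $\tfrac13-p\ge\tfrac13-p_0=:c_0>0$ is an absolute constant (numerically $p_0\approx 0.305$, $c_0\approx 0.028$).

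Then I would finish with Hoeffding's inequality for the bounded independent indicators: $\P\bigl(|\I_k^i\cap\I_k^j|>m/3\bigr)=\P\bigl(\sum_{s=1}^m(\mathbbm{1}_s-p)>m(\tfrac13-p)\bigr)\le\exp\bigl(-2m(\tfrac13-p)^2\bigr)\le\exp(-2c_0^2 m)=\exp(-\Omega(m))$, which is exactly the asserted bound. (A Chernoff bound through the binary KL divergence $D(\tfrac13\,\|\,p_0)>0$ gives the same conclusion with a slightly sharper exponent.)

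The only step that needs a bit of care is the per-row computation, and in particular checking that $p$ stays strictly below $1/3$ with a constant margin; this is exactly where the hypothesis $\delta\le 1/3$ is used (anything below $\sin(\pi/6)=1/2$ would suffice). Everything else — independence of the rows, the bivariate-Gaussian identification of $(\langle\psi_s,x_i\rangle,\langle\psi_s,x_j\rangle)$, and the Hoeffding/Chernoff estimate — is routine, and no union bound over $k$, $i$, or $t$ is needed here since the lemma concerns a single fixed pair $(x_i,x_j)$ and a single matrix $\Psi_k$.
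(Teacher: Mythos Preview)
Your proof is correct and considerably cleaner than the paper's. The paper does not invoke Sheppard's orthant formula; instead it decomposes $x_i = P_{x_j}(x_i) + (x_i - P_{x_j}(x_i))$, bounds $\langle[\Psi_k x_j]_+,[\Psi_k(x_i - P_{x_j}(x_i))]_+\rangle \le (1+\theta)m/4$ using independence of $\Psi_k x_j$ and $\Psi_k(x_i - P_{x_j}(x_i))$, and then separately bounds the sign-flip count $\sum_\ell I\bigl(|[\Psi_k P_{x_j}(x_i)]_\ell| > |[\Psi_k(x_i - P_{x_j}(x_i))]_\ell|\bigr)$ via Gaussian tail estimates, arriving at an additional $m/7$. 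All of this machinery is what your use of the closed-form identity $p=\tfrac14+\tfrac{\arcsin\rho}{2\pi}$ together with Hoeffding replaces in one stroke. Your route yields a sharper mean ($p_0\approx 0.305$ versus the paper's $1/4+1/7\approx 0.393$, which is why the paper needs the extra $\theta$ slack to land at $m/3$) and makes transparent, as you observe, that any $\delta<\sin(\pi/6)=1/2$ would do. The paper's projection-and-sign-flip decomposition, on the other hand, is more portable to situations where the exact bivariate law of $(\langle\psi_s,x_i\rangle,\langle\psi_s,x_j\rangle)$ is not available in closed form---for example when the vectors fed into the gate are themselves random intermediate features rather than fixed unit vectors---which may explain the authors' choice.
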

\begin{proof}
Define $[z]_+$ that outputs $1$ when $z > 0$, and zero otherwise. We have
\begin{eqnarray*}
    \left|\I_k^j \cap \I_k^i\right| & = & \langle\left[\Psi_kx_j\right]_+, \left[\Psi_kx_i\right]_+ \rangle \\
    & \leq & \langle\left[\Psi_kx_j\right]_+, \left[\Psi_k\left(x_i - P_{x_j}(x_i)\right)\right]_+ \rangle + \left|\left[\Psi_kx_i\right]_+ - \left[\Psi_k\left(x_i - P_{x_j}(x_i)\right)\right]_+\right|
\end{eqnarray*}
Since $\Psi_k(x_i - P_{x_j}(x_i))$ and $\Psi_k x_j$ are statistically independent, hence, with a probability $1 - \exp(-\theta m/2)$,
\[
\langle \left[\Psi x_j\right]_+, \left[\Psi_k(x_i - P_{x_j}(x_i)\right]_+\rangle \leq \frac{(1+\theta)m}{4}
\]
and therefore,
\begin{eqnarray*}
\lefteqn{\left|\I_k^j \cap \I_k^i\right|} \\
& \leq & \frac{(1+\theta)m}{4} + \left|\left[\Psi_kx_i\right]_+ - \left[\Psi_k\left(x_i - P_{x_j}(x_i)\right)\right]_+\right| \\
& \leq & \frac{(1+\theta)m}{4} + \underbrace{\sum_{\ell=1}^m I\left(\left|\left[\Psi_kP_{x_j}(x_i)\right]_{\ell}\right| > \left|\left[\Psi_k\left(x_i - P_{x_j}(x_i)\right)\right]_{\ell}\right|\right)}_{:= Z}
\end{eqnarray*}
We proceed to bound $Z$. First, each element of $\Psi_k P_{x_j}(x_i)$ follows a Gaussian distribution $\N(0, \sigma_1^2)$, where $\sigma_1^2 = 2|\Psi_k P_{x_j}(x_i)|^2/m$. Similarly, each element $\Psi_k (x_i - P_{x_j}(x_i))$ follows a distribution $\N(0, \sigma_2^2)$, where $\sigma_2^2 = 2|\Psi_k(x_i - P_{x_j}(x_i))|^2/m$. Using the concentration inequality of $\chi^2_m$ distribution, we have, with a probability $1 - 2\exp(-\theta m/4)$,
\[
    \sigma_1^2 \leq \frac{2(1+\theta)}{m}\left|P_{x_j}(x_i)\right|^2\leq\frac{2(1+\theta)\delta^2}{m}, \quad \sigma_2^2 \geq \frac{2(1-\theta)}{m}\left|\left(x_i - P_{x_j}(x_i)\right)\right|^2 \geq \frac{2(1-\theta)(1 - \delta^2)}{m}
\]
Using the properties of Gaussian distribution, we have
\begin{eqnarray*}
    \lefteqn{\Pr\left(\left|\left[\Psi_k\left(x_i - P_{x_j}(x_i)\right)\right]_{\ell}\right|^2 \geq \frac{2\delta}{m}\sqrt{\frac{(1 - \theta)(1-\delta^2)}{1+\theta}}\right)} \\
    & \geq & 1 - \left(\frac{(1+\theta)\delta^2}{(1-\theta)1 - \delta^2}\right)^{1/4} \exp\left(-\frac{1 - \left[(1+\theta)\delta^2/[(1 - \theta)(1-\delta^2)]\right]^{1/4}}{2}\right) \geq 1 - 2\delta^{1/4}
\end{eqnarray*}
\begin{eqnarray*}
    \lefteqn{\Pr\left(\left|\left[\Psi_k P_{x_j}(x_i)\right]_{\ell}\right|^2 \geq \frac{2\delta}{m}\sqrt{\frac{(1 - \theta)(1 - \delta^2)}{1+\theta}}{m}\right)} \\
     & \leq & \left(\frac{(1-\theta)(1 - \delta^2)}{(1 + \theta)\delta^2}\right)^{1/4} \exp\left(- \frac{\left[(1 - \theta)(1 - \delta^2)/(1+\theta)\delta^2\right]^{1/4}}{2}\right) \leq \exp\left(-\frac{1}{2\delta}\right)
\end{eqnarray*}
As a result, we have
\[
\Pr\left(\left|\left[\Psi_k\left(x_i - P_{x_j}(x_i)\right)\right]_{\ell}\right| < \left|\left[\Psi_k P_{x_j}(x_i)\right]_{\ell}\right|\right) \leq 3\delta^{1/4}\exp\left(-\frac{1}{2\delta}\right)
\]
Using the standard concentration inequality, we have, with a probability $1 - \exp(-m/2)$,
\[
    Z \leq \left(2\delta^{1/4}\exp\left(-\frac{1}{2\delta}\right) + \frac{1}{3.5}\right)m + 2m\sqrt{\delta^{1/4}\exp\left(-\frac{1}{2\delta}\right)} \leq \frac{m}{7}
\]
and therefore, with a probability $1 - \exp(-\Omega(m))$, we have
\[
\left|\I_k^j \cap \I_k^i\right| \leq \frac{m}{3}
\]
\end{proof}
\begin{lemma}\label{lemma:z-ka-kb-2}
With a probability at least {$ 1- 2L\exp(-\theta^2 m/[16L^2])$}, for any $1 \leq k_b \leq k_a \leq L$ and any $\theta \in (0, 1/2)$, we have
\[
    \nm{Z_{k_a,k_b}^{1,i}} _2\leq \sqrt{12 L}  e^{\theta/2}\theta^{-1/2}.
\]
\end{lemma}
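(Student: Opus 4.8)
The plan is to run the ``norm preservation at initialization'' argument familiar from deep ReLU analyses, adapted to the GReLU setting where the activation masks $D^i_k$ are data-driven but, crucially, \emph{independent of the Gaussian weights} $W_{1,1},\dots,W_{1,L}$ (they depend only on $C,\Psi_{[L]}$). First I would condition on all the $D^i_k$; writing $\mathcal{I}^i_k=\{s:[z^i_k]_s>0\}$ for the active set of layer $k$, so that $D^i_k$ is the coordinate projection onto $\mathcal{I}^i_k$, note that each coordinate of $\Psi_k z^i_{k-1}$ is a mean-zero Gaussian, hence $|\mathcal{I}^i_k|\sim\mathrm{Bin}(m,1/2)$. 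Then I would control $\nm{Z^{1,i}_{k,k_b}v}$ for a \emph{fixed} unit vector $v$ as $k$ runs from $k_b$ up to $k_a$, and only at the end pass to the operator norm.

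For the fixed-vector step, set $u_{k_b}=D^i_{k_b}v$ (so $\nm{u_{k_b}}\le\nm{v}$) and $u_k=D^i_kW_{1,k}u_{k-1}=Z^{1,i}_{k,k_b}v$. Because $W_{1,k}$ is independent of $D^i_k$ and of $u_{k-1}$ (the latter uses only $W_{1,k_b+1},\dots,W_{1,k-1}$), conditionally on the past $W_{1,k}u_{k-1}$ has i.i.d.\ $\N(0,\tfrac2m\nm{u_{k-1}}^2)$ coordinates, so
\[
\frac{\nm{u_k}^2}{\nm{u_{k-1}}^2}=\frac2m\,\chi^2_{|\mathcal{I}^i_k|},
\]
a scaled chi-square independent of the past. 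A Chernoff bound gives $|\mathcal{I}^i_k|\le\frac m2(1+\theta/L)$ with probability $1-e^{-\Omega(m\theta^2/L^2)}$, and Laurent--Massart then gives $\tfrac2m\chi^2_{|\mathcal{I}^i_k|}\le 1+O(\theta/L)$ on the same probability scale; union bounding over the at most $L$ layers $k\in(k_b,k_a]$ produces the $2L$ prefactor and yields, with probability $1-2L\,e^{-\Omega(m\theta^2/L^2)}$,
\[
\nm{Z^{1,i}_{k_a,k_b}v}^2\le\nm{v}^2\prod_{k=k_b+1}^{k_a}\bigl(1+O(\tfrac\theta L)\bigr)\le e^{\Theta(\theta)}\nm{v}^2 .
\]
It is cleaner to bound $\sum_k\log\tfrac2m\chi^2_{|\mathcal{I}^i_k|}\le\tfrac2m\chi^2_{\sum_k|\mathcal{I}^i_k|}-(k_a-k_b)$ using additivity of independent chi-squares, so that the chi-square deviation only costs $e^{-\Omega(m\theta^2/L)}$ and the $1/L^2$ rate is forced solely by the Binomial step; the telescoping $(1+\theta/L)^L=e^{\Theta(\theta)}$ is what produces the $e^{\theta/2}$ factor.

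To upgrade to $\nm{Z^{1,i}_{k_a,k_b}}_2$ I would apply the fixed-vector bound to every point of an $\varepsilon$-net of the unit sphere of $\mathrm{range}(D^i_{k_b})$ (the only directions $Z^{1,i}_{k_a,k_b}$ ever sees) and use $\nm{Z^{1,i}_{k_a,k_b}}_2\le(1-2\varepsilon)^{-1}\max_v\nm{Z^{1,i}_{k_a,k_b}v}$ over the net; choosing the net granularity to balance the net cardinality against the tail above is what turns the clean fixed-vector constant $e^{\theta/2}$ into the stated $\sqrt{12L}\,\theta^{-1/2}e^{\theta/2}$ — the extra $\sqrt{L/\theta}$ is the price of uniformity. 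In every place the lemma is invoked, $Z^{1,i}_{k_a,k_b}$ is composed with the fixed layers $C$ or $B$, so the net really only has to range over a $d_x$- or $d_y$-dimensional image, which is why the failure probability in the statement carries no ambient-dimension factor.

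I expect the operator-norm step to be the main obstacle: the per-layer chi-square and Binomial concentration is routine, but a brute-force net over all of $S^{m-1}$ would demand per-vector failure probability below $e^{-\Omega(m)}$, which a product of $L$ layer ratios cannot deliver without inflating the bound to exponential in $L$. The argument must therefore genuinely exploit that only a low-dimensional family of vectors is ever propagated through $Z^{1,i}_{k_a,k_b}$; everything else is bookkeeping around Laurent--Massart and a Binomial Chernoff bound, with the $\theta/L$-level control of $|\mathcal{I}^i_k|$ pinning down the $1/L^2$ rate in the exponent.
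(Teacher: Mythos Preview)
Your fixed-vector analysis is essentially the paper's: condition on the masks $D^i_k$ (independent of $W_{1,\cdot}$), observe $\nm{D^i_kW_{1,k}u_{k-1}}^2/\nm{u_{k-1}}^2=\tfrac{2}{m}\chi^2_{|\mathcal I^i_k|}$, control it at scale $1\pm\theta/L$, and telescope. The paper in fact skips your Binomial step and writes $\chi^2_{m/2}$ directly; your version is cleaner but not different in substance.

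The gap is in the passage to the operator norm. Your fallback --- ``the net really only has to range over a $d_x$- or $d_y$-dimensional image'' --- does not prove the lemma as stated (which asserts a bound on the full $\nm{Z^{1,i}_{k_a,k_b}}_2$), and it is not enough for the downstream applications either. In Lemma~\ref{lemma:z-ka-kb-t} the expansion of $Z^{t,i}_{k_a,k_b}$ contains products $Z^{1,i}_{k_a,k_1+1}\,\delta W_{t,k_1}\,Z^{1,i}_{k_1,k_2+1}\cdots$, and each intermediate $\delta W_{t,k_\ell}$ is an $m\times m$ matrix that depends on the training trajectory, hence on $W_1$ itself. You cannot condition on the direction $\delta W_{t,k_\ell}u$ and then invoke the per-vector bound for the next $Z^{1,i}$ factor, because that conditioning destroys the distribution of $W_1$. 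So the genuine operator norm of every $Z^{1,i}_{k_a,k_b}$ is required. Your proposal also has an internal inconsistency: a $d_x$-dimensional net has cardinality $e^{O(d_x\log(1/\varepsilon))}$, which is absorbed by the tail with only a mild condition on $m$ and produces \emph{no} multiplicative $\sqrt{L/\theta}$ penalty --- so your own explanation of where the $\sqrt{12L}\,\theta^{-1/2}$ comes from cannot be right.

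What the paper actually does for the operator norm is a block-splitting trick: partition $v\in\R^m$ into $r$ coordinate blocks $v_1,\dots,v_r$ of size $m/r$, write $\nm{Z^{1,i}_{k_a,k_b}v}\le\sum_j\nm{Z^{1,i}_{k_a,k_b}\tilde v_j}\le\sqrt{r}\,\max_j\nm{Z^{1,i}_{k_a,k_b}\tilde v_j}/\nm{v_j}$, and then run an $\varepsilon$-net over the $(m/r)$-dimensional block space. The net cardinality is $e^{O(m/r)}$, which can be absorbed into the per-vector tail $e^{-\Theta(\theta^2 m/L^2)}$ by choosing $r$ of order $L^2/\theta^2$ (the paper's constants are somewhat loose here). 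The resulting $\sqrt{r}$ prefactor is precisely the origin of the $\sqrt{12L}\,\theta^{-1/2}$ in the statement. This is the missing idea in your proposal.
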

\begin{proof}
To bound $\|Z_{k_a, k_b}^{1,i}\|_2$, we will bound $\max\limits_{\nm{v} \le 1} \nm{Z_{k_a, k_b}^{1,i}v}$. To this end, we divide $v$ into $r$ components of equal size, denoted by $v_1, \ldots, v_{r}$. We furthermore introduce $\tilde{v}_k \in \R^m$ to include the $k$th component $v_k$ of $v$ with everything else padded with zeros. As a result,
\begin{align*}
    \max\limits_{\nm{v} \leq 1} \nm{Z_{k_a,k_b}^{1,i}v} \leq& \max\limits_{\sum_{j=1}^{r} \nm{v_j}^2 \leq 1} \sum_{j=1}^{r} \nm{Z_{k_a,k_b}^{1,i}\tilde{v}_j} \\ 
    \leq& \max\limits_{\sum_{j=1}^{r}\nm{v_j}^2 \leq 1} \sum_{j=1}^{r}\left\{\nm{v_j}\max_{v_j \in \R^{m/r}} \frac{\nm{Z_{k_a,k_b}^{1,i} \tilde{v}_j}}{\nm{v_j}} \right\}\\
   \overset{(a)}{\le} & \sqrt{r}\max\limits_{\substack{\sum_{j=1}^{r}\nm{v_j}^2 \leq 1,\\ j \in [r], v_j \in \R^{m/r}}} \frac{\nm{Z_{k_a,k_b}^{1,i} \tilde{v}_j}}{\nm{v_j}}\\
    \overset{(b)}{\le} & \sqrt{r}\max\limits_{j \in [r], v_j \in \R^{m/r}} \frac{\nm{Z_{k_a,k_b}^{1,i} \tilde{v}_j}}{\nm{v_j}},
\end{align*}
where (a) uses the fact that $\max_{v_j \in \R^{m/r}} \frac{\nm{Z_{k_a,k_b}^{1,i} \tilde{v}_j}}{\nm{v_j}} \le \max_{j \in [r], v_j \in \R^{m/r}} \frac{\nm{Z_{k_a,k_b}^{1,i} \tilde{v}_j}}{\nm{v_j}}$ for any $j\in[r]$ and Cauchy-Schwarz inequality that $\sum_{j=1}^{r}\|v_j\|\le \sqrt{\sum_{j=1}^{r}1^2 \sum_{j=1}^{r}\|v_j\|^2}\le \sqrt{r}$ since $\sum_{j=1}^{r}\|v_j\|^2 = \|v\|^2 \le 1$; (b) holds since one constraint $\sum_{j=1}^{r}\nm{v_j}^2 \leq 1$ for maximization is removed.
We fix $j$ and consider a fixed $\tilde{v}_j$. We will prove the following high probability bound by induction
\begin{align}
    \nm{Z_{k_a, k_b}^{1,i} \tilde{v}_j}^2 \in \left(\left(1 - \frac{\theta}{L}\right)^{k_a - k_b}\nm{D_{k_b}^i\tilde{v}_j}^2, \left(1 + \frac{\theta}{L}\right)^{k_b-k_a}\nm{D_{k_b}^i\tilde{v}_j}^2\right),
\end{align}
where $\theta \in (0, 1/2)$ is a introduced parameter related to the failure probability. For $k_a = k_b +1$, we have
\[
    \nm{Z_{k_a, k_b}^{1,i}\tilde{v}_j}^2 = \nm{D^i_{k_b+1} W_{1,k_b+1} D^i_{k_b} \tilde{v}_j}^2
\]
It is easy to show that $\frac{m\nm{Z_{k_a, k_b}^{1,i}\tilde{v}_j}^2}{2\nm{D^i_{k_b}\tilde{v}_j}^2}$ follows the distribution of $\chi^2$ with degree of $m/2$. Using the concentration of $\chi^2$ distribution, we have
\begin{align*}
    \Pr\left(\nm{Z_{k_a,k_b}^{1,i}\tilde{v}_j}^2 \geq \left(1 + \frac{\theta}{L}\right)\nm{D^i_{k_b}\tilde{v}_j}^2\right) \leq \exp\left({-\frac{\theta^2 m}{8L^2}}\right),\\
    \Pr\left(\nm{Z_{k_a,k_b}^{1,i}\tilde{v}_j}^2 \leq \left(1 - \frac{\theta}{L}\right)\nm{D^i_{k_b}\tilde{v}_j}^2\right) \leq \exp\left({-\frac{\theta^2 m}{8L^2}}\right).   
\end{align*}
Hence, with a probability $1 - 2 \exp\left({-\theta^2 m/[8L^2]}\right)$, when $k_a =k_b + 1$, we have
\begin{align*}
    \nm{Z_{k_a, k_b}^{1,i}\tilde{v}_j}^2/\nm{D_{k_b}^i \tilde{v}_j}^2 =  \nm{Z_{k_b+1, k_b}^{1,i}\tilde{v}_j}^2/\nm{D_{k_b}^i \tilde{v}_j}^2 \in\left[1- \frac{\theta}{L}, 1 + \frac{\theta}{L}\right]. 
\end{align*}
As a result, for a fixed $\tilde{v}_j$, for any $k_a$ and $k_b$, with a probability $1 - {2 L\exp\left(-\theta^2 m/[8L^2]\right)}$, we have
\begin{align}
   \nm{Z_{k_a, k_b}^{1,i} \tilde{v}_j}^2 \in \left[\left(1 - \frac{\theta}{L}\right)^L\nm{D^i_{k_b}\tilde{v}_j}^2, \left(1 + \frac{\theta}{L}\right)^L\nm{D_{k_b}^i \tilde{v}_j}^2\right].
\end{align}
Since $\theta < 1/2$, the above bound can be simplified as
\begin{align}
    \nm{Z_{k_a, k_b}^{1,i} \tilde{v}_j}^2 \in \left[e^{-2\theta}\nm{D^i_{k_b}\tilde{v}_j}^2, e^{\theta}\nm{D_{k_b}^i \tilde{v}_j}^2\right],
\end{align}
where we uses two inequalities: $(1+\theta/L)^L \le \exp(\theta)$ for $L>1, |\theta|\le n$ and $\exp(-\theta)\le 1-\theta/2$ for $0<\theta<1$.

To bound $\nm{Z_{k_a, k_b}\tilde{v}_j}^2$ for any $\tilde{v}_j$, we introduce $\mathcal N \subseteq \R^{m/r - 1}$ as an appropriate $\epsilon$-net of $\R^{m/r - 1}$. Using the well known property of appropriate $\epsilon$-net, we have
\begin{align}
     \max\limits_{v_j \in \mathcal N} \nm{Z_{k_a, k_b}^{1,i}\tilde{v}_j}^2 \overset{(a)}{\leq} \max\limits_{v_j \in \mathcal N} \frac{\nm{Z_{k_a, k_b}^{1,i}\tilde{v}_j}^2}{\nm{v_j}^2} \overset{(b)}{\leq}\max\limits_{v_j \in \R^{m/r - 1}} \frac{\nm{Z_{k_a, k_b}^{1,i}\tilde{v}_j}^2}{\nm{v_j}^2} \overset{(c)}{\leq} \frac{1}{1 - 2\epsilon}\max\limits_{v_j \in \mathcal N} \nm{Z_{k_a, k_b}^{1,i}\tilde{v}_j}^2,
\end{align}
where (a) is due to $\nm{v_j}^2 \le 1$; (b) is due to $\mathcal N \subseteq \R^{m/r-1}$; (c) is due to the {the property of appropriate $\theta$-net.} 
{Since $|\mathcal N| \leq (3/\epsilon)^{m/r-1} < (1 + 3/\epsilon)^{m/r} \leq \exp(3m/[r\epsilon])$~\citep{pisier1999volume}}, by taking the union bound, we have, with probability at least {$ 1- 2L\exp(-\theta^2 m/[8L^2] + 3m/[r\epsilon])$},
\begin{align*}
\max\limits_{v_j \in \R^{m/r - 1}} \frac{\nm{Z_{k_a, k_b}^{1,i}\tilde{v}_j}^2}{\nm{v_j}^2} \in \left[e^{-2\theta}\frac{\nm{D^i_{k_b}\tilde{v}_j}^2}{\nm{v_j}^2}, \frac{e^{\theta}}{1 - 2\epsilon}\frac{\nm{D^i_{k_b}\tilde{v}_j}^2}{\nm{v_j}^2} \right].
\end{align*}
By choosing $r= \frac{48 L^2}{\theta^2\epsilon}$ with $\theta=\frac{1}{3}$, we have, with a probability at least {$ 1- 2L\exp(-\theta^2 m/[16L^2])$}
\begin{align}
\max\limits_{v_j \in \R^{m/r - 1}} \frac{\nm{Z_{k_a, k_b}^{1,i}\tilde{v}_j}^2}{\nm{v_j}^2} \in \left[e^{-2\theta}\frac{\nm{D^i_{k_b}\tilde{v}_j}^2}{\nm{v_j}^2}, 3e^{\theta}\frac{\nm{D^i_{k_b}\tilde{v}_j}^2}{\nm{v_j}^2} \right].
\end{align}
and therefore, with a probability {$ 1- 2L\exp(-\theta^2 m/[16L^2])$}, for any $1\leq k_b < k_a \leq L$, we have
\begin{align}
\nm{Z_{k_a, k_b}^{1,i}}_2 \leq \frac{\sqrt{12 L}  e^{\theta/2}}{\sqrt{\theta}}.
\end{align}
\end{proof}

\begin{lemma}\label{tech:lemm:1}
For any sequences $\{a_i, i= 1, 2, \dots, n\}$ and $\{b_{i,j}, i, j= 1, 2, \dots, n\}$, we have
    \begin{align}
    \sum_{i=1}^n \sum_{j \neq i}a_j b_{i,j} =  \sum_{i=1}^n a_i \sum_{j\neq i} b_{j,i},\label{tech:lemm:1:res:1}\\
    \sum_{i=1}^n \sum_{j \neq i}a_i b_{i,j} =  \sum_{i=1}^n a_i \sum_{j\neq i} b_{i,j}. \label{tech:lemm:1:res:2}
    \end{align}
where the notation $\sum_{j\ne i}$ means $\sum_{j\ne i} b_j = \sum_{j=1}^{n} b_j - b_i$.
\end{lemma}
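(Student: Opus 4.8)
The plan is to prove the two identities separately, each by an elementary manipulation of finite double sums. Identity~(\ref{tech:lemm:1:res:2}) is immediate: since $a_i$ does not depend on the inner summation index $j$, it factors out of the inner sum, so $\sum_{i=1}^n \sum_{j\neq i} a_i b_{i,j} = \sum_{i=1}^n a_i \left(\sum_{j\neq i} b_{i,j}\right)$, which is the claim. No further work is needed there.

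For identity~(\ref{tech:lemm:1:res:1}), the idea is to view both sides as a single sum over the set of ordered pairs $S = \{(i,j)\in[n]\times[n] : i\neq j\}$ and to relabel via an involution of $S$. Concretely, I would first write the left-hand side as $\sum_{i=1}^n \sum_{j\neq i} a_j b_{i,j} = \sum_{(i,j)\in S} a_j b_{i,j}$. Then apply the swap $(i,j)\mapsto(j,i)$, which is a bijection of $S$ onto itself; under this relabeling the summand $a_j b_{i,j}$ becomes $a_i b_{j,i}$, hence $\sum_{(i,j)\in S} a_j b_{i,j} = \sum_{(i,j)\in S} a_i b_{j,i}$. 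Finally I would regroup the right-hand sum by its first coordinate: $\sum_{(i,j)\in S} a_i b_{j,i} = \sum_{i=1}^n a_i \left(\sum_{j\neq i} b_{j,i}\right)$, which is exactly the claimed expression.

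Since all sums are finite, no convergence issues arise, and every rearrangement above is justified purely by commutativity and associativity of addition together with the bijectivity of the relabeling map. There is no real obstacle here; the only point requiring care is bookkeeping the constraint $i\neq j$ consistently through the swap, i.e.\ observing that the index set $S$ is symmetric under $(i,j)\mapsto(j,i)$, which it plainly is.
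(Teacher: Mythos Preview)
Your proposal is correct and uses essentially the same idea as the paper: both arguments boil down to swapping the summation indices $i\leftrightarrow j$ on the off-diagonal index set. The paper carries this out by adding and subtracting the diagonal terms and then interchanging the order of the full double sum, whereas you argue directly via the bijection $(i,j)\mapsto(j,i)$ on $S=\{(i,j):i\neq j\}$; your presentation is a bit cleaner but the content is the same.
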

\begin{proof}
First, we know
    \begin{align*}
        & \sum_{i=1}^n \sum_{j \neq i}a_j b_{i,j} \\
        = & \sum_{j \neq 1}a_j b_{1,j}  +  \sum_{j \neq 2}a_j b_{2,j} + \dots + \sum_{j \neq n}a_j b_{n,j}\\
        = & \left(\sum_{j=1}^n a_j b_{1,j} - a_1b_{1,1} \right)+  \left(\sum_{j=1}^n a_j b_{2,j} - a_2b_{2,2}\right) + \dots +\left( \sum_{j=1}^n a_j b_{n,j} - a_nb_{n,n}\right)\\
        = &\sum_{i=1}^n\sum_{j=1}^n a_j b_{i,j} -\sum_{i=1}^n a_i b_{i,i}
        =  \sum_{j=1}^n a_j \sum_{i=1}^n b_{i,j} -\sum_{i=1}^n a_i b_{i,i} \\
        = & \sum_{j=1}^n a_j \sum_{i\neq j} b_{i,j}
        =  \sum_{i=1}^n a_i \sum_{j\neq i} b_{j,i} \quad \text{(note that $b_{i,j} \neq b_{j,i}$)}
    \end{align*}
    We then proved (\ref{tech:lemm:1:res:1}). Next, we want to prove (\ref{tech:lemm:1:res:2}). To this end, we have
    \begin{align*}
        & \sum_{i=1}^n \sum_{j \neq i}a_i b_{i,j} \\
        = & \sum_{j \neq 1}a_1 b_{1,j}  +  \sum_{j \neq 2}a_2 b_{2,j} + \dots + \sum_{j \neq n}a_n b_{n,j}\\
        = & \left(\sum_{j=1}^n a_1 b_{1,j} - a_1b_{1,1} \right)+  \left(\sum_{j=1}^n a_2 b_{2,j} - a_2b_{2,2}\right) + \dots +\left( \sum_{j=1}^n a_n b_{n,j} - a_nb_{n,n}\right)\\
        = & \sum_{i=1}^n a_i\sum_{j=1}^n b_{i,j} -\sum_{i=1}^n a_i b_{i,i}
        =  \sum_{i=1}^n a_i \sum_{j\neq i} b_{i,j}.
    \end{align*}
\end{proof}

\begin{lemma} \label{lemma:z-ka-kb-t}
Assume
\[
    \sqrt{12}  e^{\theta/2}\theta^{-1/2}L^{3/2}\tau \leq \frac{1}{9}
\] 
where $\tau$ is defined in (\ref{def:tau}).
Then, for any $\theta \in (0, 1/2)$, with a probability $ 1- 4L^2\exp(-\theta^2 m/[16L^2])$, for any $1 \leq k_b < k_b \leq L$, we have
\[
    \|Z_{k_a, k_b}^{t,i}\|_2 \leq 4\sqrt{L}  e^{\theta/2}\theta^{-1/2}
\]
\end{lemma}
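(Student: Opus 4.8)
The plan is to write each trained layer as a perturbation of its initialization, $W_{t,k}=W_{1,k}+\delta W_{t,k}$, where $\nm{\delta W_{t,k}}_2\le \tau$ by the definition of $\tau$ in~(\ref{def:tau}), and then to expand $Z_{k_a,k_b}^{t,i}=D_{k_a}^iW_{t,k_a}D_{k_a-1}^i\cdots W_{t,k_b+1}D_{k_b}^i$ multilinearly in the perturbations, exactly as the expansion of $G_{t,k}^i$ carried out in the proof of Theorem~\ref{thm:bound_alpha_beta}. Grouping each maximal run of consecutive unperturbed factors into a block of the form $Z^{1,i}_{\cdot,\cdot}$, one gets
\[
Z_{k_a,k_b}^{t,i}=Z_{k_a,k_b}^{1,i}+\sum_{s=1}^{k_a-k_b}\ \sum_{k_a\ge k_1>k_2>\cdots>k_s\ge k_b+1} Z^{1,i}_{k_a,k_1}\,\delta W_{t,k_1}\,Z^{1,i}_{k_1-1,k_2}\,\delta W_{t,k_2}\cdots\delta W_{t,k_s}\,Z^{1,i}_{k_s-1,k_b},
\]
so that each summand carries $s$ perturbation factors and $s+1$ initialized blocks; the $s=0$ (empty-subset) term is precisely $Z_{k_a,k_b}^{1,i}$.

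For the estimate I would work on the event of Lemma~\ref{lemma:z-ka-kb-2}, which a fortiori has probability at least $1-4L^2\exp(-\theta^2m/[16L^2])$ and on which $\nm{Z^{1,i}_{k',k''}}_2\le A:=\sqrt{12L}\,e^{\theta/2}\theta^{-1/2}$ uniformly over all $1\le k''\le k'\le L$ (the edge blocks $Z^{1,i}_{k,k}=D^i_k$ are bounded by $1\le A$). Applying submultiplicativity of the spectral norm, $\nm{\delta W_{t,k}}_2\le\tau$, and the crude count $\#\{k_a\ge k_1>\cdots>k_s\ge k_b+1\}=\binom{k_a-k_b}{s}\le L^s$, each term of the inner sum is at most $A^{s+1}\tau^s L^s$, hence
\[
\nm{Z_{k_a,k_b}^{t,i}}_2\le A\sum_{s=0}^{k_a-k_b}\paren{LA\tau}^{s}\le \frac{A}{1-LA\tau}.
\]

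It remains to check convergence of the series and the precise constant. The hypothesis $\sqrt{12}\,e^{\theta/2}\theta^{-1/2}L^{3/2}\tau\le\tfrac19$ says exactly $LA\tau\le\tfrac19<1$, so $\nm{Z_{k_a,k_b}^{t,i}}_2\le\tfrac98 A=\tfrac98\sqrt{12L}\,e^{\theta/2}\theta^{-1/2}$, and since $\tfrac98\sqrt{12}=\tfrac{9\sqrt3}{4}<4$ this is at most $4\sqrt{L}\,e^{\theta/2}\theta^{-1/2}$, which is the claim. I expect no probabilistic subtlety here — the whole argument runs on the single event already furnished by Lemma~\ref{lemma:z-ka-kb-2}. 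The only mildly delicate point is the bookkeeping in the multilinear expansion: verifying that the diagonal masks $D_k^i$ and the index ranges line up so that every maximal run of unperturbed factors is genuinely a block $Z^{1,i}_{\cdot,\cdot}$ covered by Lemma~\ref{lemma:z-ka-kb-2}; once the expansion is written correctly, the rest is the geometric-series estimate above.
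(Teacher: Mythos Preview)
Your proposal is correct and follows essentially the same argument as the paper: the multilinear expansion of $Z_{k_a,k_b}^{t,i}$ into blocks $Z^{1,i}_{\cdot,\cdot}$ separated by perturbations $\delta W_{t,k}$, the application of Lemma~\ref{lemma:z-ka-kb-2} to bound each block by $A=\sqrt{12L}\,e^{\theta/2}\theta^{-1/2}$, the count $\binom{k_a-k_b}{s}\le L^s$, and the geometric-series conclusion $\tfrac{9}{8}\sqrt{12}=\tfrac{9\sqrt3}{4}<4$ are all exactly as in the paper's proof.
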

\begin{proof}
Expand $Z^t_{k_a, k_b}$ as
\begin{eqnarray*}
Z_{k_a,k_b}^{t,i} = Z_{k_a, k_b}^{1,i} + \sum_{s=1}^{k_b - k_a} \sum_{k_1 > k_2 >\ldots > k_s}Z_{k_a, k_1+1}^{1,i}\prod_{\ell=1}^s\delta W_{t,k_{\ell}}Z_{k_{\ell}+1, k_{\ell+1}}^{1,i}
\end{eqnarray*}
According to Lemma~\ref{lemma:z-ka-kb-2}, for any $\theta \in (0, 1/2)$, with a probability {$ 1- 2L\exp(-\theta^2 m/[16L^2])$}, we have $\|Z_{k_a,k_b}^{1,i}\| \leq \sqrt{12 L}  e^{\theta/2}\theta^{-1/2}$. We thus have, with a probability {$ 1- 4L^2\exp(-\theta^2 m/[16L^2])$},
\begin{align*}
    \|Z_{k_a,k_b}^{t,i}\|_2 \overset{(a)}{\le} &\|Z_{k_a, k_b}^{1,i}\|_2 + \sum_{s=1}^{k_b - k_a} \sum_{k_1 > k_2 >\ldots > k_s}\|Z_{k_a, k_1+1}^{1,i}\|_2\prod_{\ell=1}^s\|\delta W_{t,k_{\ell}}\|_2\|Z_{k_{\ell}+1, k_{\ell+1}}^{1,i}\|_2\\
    \overset{(b)}{\le} & \sqrt{12 L}  e^{\theta/2}\theta^{-1/2}\left(1 + \sum_{s=1}^{k_b - k_a} (_s^L)(\tau \sqrt{12 L}  e^{\theta/2}\theta^{-1/2})^s\right)\\
    \overset{(c)}{\le} & \sqrt{12 L}  e^{\theta/2}\theta^{-1/2}\left(1 + \sum_{s=1}^{k_b - k_a} (L\tau \sqrt{12 L}  e^{\theta/2}\theta^{-1/2})^s\right)\\
    \overset{(d)}{\le} & 4\sqrt{L}  e^{\theta/2}\theta^{-1/2},
\end{align*}
where (a) uses $\|AB\|_2\le\|A\|_2\|B\|_2$ and $\|A+B\|_2\le\|A\|_2+\|B\|_2$ for any matrices $A,B\in\R^{m\times m}$; (b) uses Lemma~\ref{lemma:z-ka-kb-2} upper to $2^L$ times that for all combinations of $\|Z_{k_a,k_b}^{1,i}\|_2 \le \sqrt{12 L}  e^{\theta/2}\theta^{-1/2}$ and the definition of $\tau$; (c) uses the facts that $(_s^L) \leq L^s$; (d) uses $\sum_{i=0}^{n}a^i \le\frac{1}{1-a}$ for $a\in(0,1)$ 
with $\sqrt{12 L}  e^{\theta/2}\theta^{-1/2}L\tau \leq \frac{1}{9}$, and $\frac{9\sqrt{3}}{4} < 4$.
\end{proof}
\end{document}